\documentclass[11pt]{article}
\usepackage[margin=1in]{geometry}
\usepackage{amsmath,amssymb,amsthm,mathtools}
\usepackage{booktabs}
\usepackage{enumitem}
\usepackage{algorithm}
\usepackage{algpseudocode}
\usepackage[colorlinks=true,linkcolor=blue,citecolor=blue,urlcolor=blue]{hyperref}

\newtheorem{theorem}{Theorem}[section]
\newtheorem{lemma}[theorem]{Lemma}
\newtheorem{proposition}[theorem]{Proposition}
\newtheorem{corollary}[theorem]{Corollary}

\theoremstyle{remark}
\newtheorem{remark}[theorem]{Remark}
\newtheorem{example}[theorem]{Example}

\newcommand{\OPT}{\mathrm{OPT}}
\newcommand{\ALG}{\mathrm{ALG}}
\newcommand{\LE}{\mathrm{LE}}
\newcommand{\E}{\mathbb{E}}
\renewcommand{\Pr}{\mathbb{P}}
\newcommand{\R}{\mathbb{R}}
\newcommand{\PC}{\mathrm{PC}}
\newcommand{\RMP}{\mathrm{RMP}}
\newcommand{\SRMP}{\mathrm{SRMP}}
\newcommand{\NPA}{\mathrm{NPA}}
\newcommand{\RET}{\mathrm{RET}}
\newcommand{\APC}{\mathrm{APC}}
\newcommand{\SRMPc}{\mathrm{SRMP}^{\mathrm{cf}}}
\newcommand{\PhiK}{\Phi}
\DeclareMathOperator*{\argmax}{arg\,max}

\title{Prior-Free Competitive Ratios for Improving Bandits:\\ Scale, Curvature and Horizon Are Free, but Not Jointly Under Noise}
\newif\ifanonymous\anonymousfalse
\ifanonymous
  \author{}
\else
  \author{Xuan Li\\[2pt]
  \small University of New South Wales, Sydney, Australia \quad \texttt{winny.li@unsw.edu.au}\\[2pt]
  \small \href{https://orcid.org/0009-0002-0213-6991}{ORCID: 0009-0002-0213-6991}}
\fi
\date{}

\begin{document}
\maketitle

\begin{abstract}
In the improving multi-armed bandits problem, each of $k$ arms has an unknown nondecreasing, discretely concave reward curve $f_i$, and pulling arm $i$ for the $t$-th time yields $f_i(t)$. For sufficiently long horizons, Blum and Ravichandran (ALT 2025) proved that randomized algorithms achieve an $O(\sqrt k)$ approximation to the best single arm when the scale $m=f^*(T)$ of the optimal arm is known ($T\ge2k$), and $O(\sqrt k\log k)$ when it is not ($T>4k$), against an $\Omega(\sqrt k)$ lower bound. The logarithmic factor is unnecessary: a one-page \emph{probe-and-commit} algorithm achieves competitive ratio $4\sqrt3\,\sqrt k$ for $T\ge2\lfloor\sqrt k\rfloor$, without any knowledge of the scale, and we determine the optimal ratio for every horizon, $\Theta(\sqrt k+k/T)$, also for unknown horizons. Without noise, \emph{no prior is needed at all}: a random-marginal probing algorithm reading neither the scale $m$, nor the concavity-envelope exponent $\beta$ of Blum, Garicano, Ravichandran and Sharma (UAI 2026), nor the horizon $T$, achieves the optimal $\Theta(k^{\beta/(1+\beta)}+k/T)$ simultaneously for every $\beta$ and every horizon. Under the multiplicative noise model of Blum and Ravichandran, probe-and-commit keeps the same all-horizon order $\Theta(\sqrt k+k/T)$ without knowing the noise level (and $\Theta(\sqrt k)$ on the same range), but the price of priors jumps: for any fixed noise level $\varepsilon\in(0,1/2]$, the uniform price of adaptation $\phi_\varepsilon(k)$ --- the worst case over horizons $T\ge16k$ of the loss relative to $k^{\beta/(1+\beta)}$ for algorithms knowing neither $m$ nor $\beta$ --- is $\Theta_\varepsilon(\sqrt{\log k/\log\log k})$, the lower bound asymptotic in $k$ at fixed positive $\varepsilon$ and matched by a nested random-permutation probing algorithm, whereas knowing either $m$ or $\beta$ alone restores a constant price.
\end{abstract}

\section{Introduction}

The \emph{improving multi-armed bandits} problem (IMAB), formalized by Heidari, Kearns and Roth~\cite{HKR16} and Patil et al.~\cite{Pat23}, models the allocation of effort among $k$ options whose payoff grows with the effort invested in them: pulling arm $i$ for the $t$-th time yields $f_i(t)$, where each $f_i$ is nondecreasing with diminishing returns (discretely concave) and $f_i(0)=0$. The benchmark is the best single arm, $\OPT_T=\max_i\sum_{t\le T}f_i(t)$, and since sublinear regret is impossible one measures the \emph{competitive ratio} $\OPT_T/\E[\ALG_T]$.

Throughout, \emph{curvature} in the title refers to the lower-envelope exponent $\beta$ of \cite{BGRS26}---how fast a curve is required to rise relative to its terminal value---and not to strong concavity in any second-derivative sense; all our curves are merely discretely concave. Patil et al.~\cite{Pat23} study deterministic horizon-unaware improving bandits and state an $O(k)$ competitive guarantee. We use their deterministic lower-bound construction as background --- the lower bound we restate in Section~\ref{sec:discussion} uses only determinism, so it covers horizon-aware deterministic algorithms as well --- but we establish our range-qualified deterministic bounds independently there, and we do not invoke the literal all-horizon upper bound of the algorithm displayed in arXiv:2208.09254v1: its two-pulls-per-arm initialisation already incurs ratio $k(2k+1)/3$ at $T=2k$ on a single linear arm with zero decoys. We use this characterisation only on the sufficient range $T\ge2k$, and assert no sharp boundary: Section~\ref{sec:discussion} shows that for $T\le k$ the deterministic ratio is $k(k+1)/2$ or unbounded, while at $T=2k-1$ it is still $\Theta(k)$. The horizon qualification is ours to state and is not attributed to~\cite{Pat23}. Blum and Ravichandran~\cite{BR25} proved that randomization helps: no algorithm beats $\Omega(\sqrt k)$; a random round-robin with an absolute threshold $m\,t/T$ achieves $O(\sqrt k)$ \emph{if it is told} the value $m=f^*(T)$ of the optimal arm in advance; and, in their words, one can ``remove this assumption at the cost of an extra $O(\log k)$ approximation factor, achieving an overall $O(\sqrt k\log k)$ approximation relative to optimal'' (for $T>4k$). Blum, Garicano, Ravichandran and Sharma~\cite{BGRS26} refined the picture with a concavity-envelope exponent $\beta\in(0,1]$: if every arm satisfies $f_i(t)\ge f_i(T)(t/T)^\beta$, then a thresholded round robin achieves the optimal $\Theta(k^{\beta/(1+\beta)})$, again assuming $m$ is known, and they handle unknown $\beta$ by learning a parameter from offline instances; in the context of their data-driven hybrid they write: ``We note that as a result of this approach, we do not get per-instance worst-case guarantees. This is an interesting direction for future work.''\footnote{\cite{BGRS26}, introduction of Appendix~D (discussion of the data-driven regret/competitive-ratio hybrid and of when to switch); see also Appendix~D.2 there, in the same context: ``Thus, that algorithm may not provide the worst-case fallback guarantee on a fixed instance. As mentioned earlier, this is an interesting consideration for future work.'' Both remarks concern their hybrid algorithm; they do not name the unknown-scale question. Our per-instance prior-free guarantees address one instance of the direction they describe.}

For the general concave class, \cite{BR25} gives a scale-oblivious $O(\sqrt k\log k)$ guarantee for $T>4k$. The envelope-dependent competitive guarantees of \cite{BGRS26} require a suitable exponent parameter, and their data-driven parameter selection uses offline instances. Our noiseless bounds in Theorems~\ref{thm:W46} and~\ref{thm:W47} instead adapt simultaneously to every admissible $\beta$ on each instance, without reading $m$ or $\beta$ and without using offline instances.

\paragraph{Our results.}
\begin{enumerate}[leftmargin=*]
\item \textbf{Unknown scale is free (Section~\ref{sec:scale}).} A \emph{probe-and-commit} algorithm (sample $s=\lfloor\sqrt k\rfloor$ arms, pull each $\lfloor T/(2s)\rfloor$ times, commit to the arm with the largest probe value) achieves competitive ratio $4\sqrt3\,\sqrt k<7\sqrt k$ for all $T\ge2\lfloor\sqrt k\rfloor$, knowing nothing about $m$. The $\log k$ of~\cite{BR25} is therefore unnecessary. We determine the optimal ratio for every $k,T\ge1$: $\Theta(\sqrt k+k/T)$ (Theorem~\ref{thm:allT}). An anytime version, which reads only $k$, attains this order at \emph{every} horizon (Theorem~\ref{thm:anytimeAll}); the naive doubling schedule, which starts with a phase of length $2\lfloor\sqrt k\rfloor$, does not (Example~\ref{ex:sched}).
\item \textbf{Without noise, every prior is free (Section~\ref{sec:priorfree}).} A \emph{random-marginal probe} algorithm (independent random weights $Z_i$, greedy on the last observed increment divided by $Z_i$, then commit) knows neither $m$ nor $\beta$ and achieves $\E[\ALG_T]\ge\frac{3}{256}\,\OPT_T\,k^{-\beta/(1+\beta)}$ simultaneously for every $\beta$ such that the \emph{optimal arm} satisfies the envelope condition, for $T\ge16k$ (Theorem~\ref{thm:TG}); an independent-inclusion variant extends this to all horizons, giving the optimal $\Theta(k^{\beta/(1+\beta)}+k/T)$ (Theorem~\ref{thm:W46}), and an anytime version needs not even $T$ (Theorem~\ref{thm:W47}). Hence the ``price of not knowing $m,\beta,T$'' is $\Theta(1)$.
\item \textbf{Under noise the price of priors jumps (Section~\ref{sec:noise}).} In the multiplicative noise model of~\cite[App.~C]{BR25}, probe-and-commit retains its guarantees up to the factor $a^2/b$ without knowing the noise level $\varepsilon$: the all-horizon order $\Theta(\sqrt k+k/T)$, and ratio $O(\sqrt k)$ for $T\ge2\lfloor\sqrt k\rfloor$ (their algorithm ``must have knowledge of the value of $\varepsilon$'' and loses $\log k$). But the random-marginal algorithm fails under noise, as do the two increment-based repairs made precise in Section~\ref{sec:noise} (exact counterexamples with unbounded ratio); we do not claim that every conceivable repair fails, and Theorem~\ref{thm:H2L} is what rules out a uniformly constant price. We prove that for any fixed $\varepsilon\in(0,\tfrac12]$, every algorithm knowing neither $m$ nor $\beta$ --- even one that reads $T$ and $\varepsilon$ --- loses a factor $\Omega_\varepsilon(\PhiK(k))$, $\PhiK(k)=\sqrt{(1+\ln k)/\ln(e+\ln k)}$, relative to $k^{\beta/(1+\beta)}$ on some instance at the hard horizons of the construction (an asymptotic statement in $k$; Theorem~\ref{thm:H2L}), and a nested random-permutation probing algorithm matches it (Theorem~\ref{thm:H2U}); knowing $m$ alone (Theorem~\ref{thm:RET}) or $\beta$ alone restores a constant price. The quantifiers are those of the uniform price of adaptation $\phi_\varepsilon(k)$ defined in Section~\ref{sec:model}: one algorithm is chosen first, and then $\beta$, the instance and the noise table are chosen against it.
\end{enumerate}

\paragraph{Why was the logarithm there?}
The algorithms of~\cite{BR25,BGRS26} abandon an arm when its current reward falls below an \emph{absolute} threshold $m\,t/T$ (or $m(t/T)^\alpha$); the scale $m$ must be known, and the treatment of unknown $m$ in~\cite{BR25}---a random dyadic guess over $O(\log k)$ dyadic levels---costs $\log k$. (We describe the technical difference only and make no claim about why this route was taken.) Our \emph{scale-oblivious} algorithms do not compare rewards against a supplied absolute reward scale: probe-and-commit compares probe values \emph{relative to each other} (an $\argmax$ is scale-free) and pays for the missing information with a $\sqrt k$ subsample, whose $1/\sqrt k$ hit probability is exactly the price the lower bound charges anyway. The random-marginal algorithm goes further and turns ``is this arm still worth pulling?'' into a comparison of observed increments weighted by fixed random priorities; its key lemma bounds the expected greedy work spent on decoys by their total value, with no multi-scale union bound.

\paragraph{Organization.} Section~\ref{sec:model} fixes the model and conventions. Section~\ref{sec:scale} treats unknown scale, Section~\ref{sec:priorfree} prior-free algorithms without noise, Section~\ref{sec:noise} the noisy model, and Section~\ref{sec:discussion} deterministic algorithms, comparisons and open problems.

\section{Model and conventions}\label{sec:model}

We follow the specification of~\cite{Pat23,BR25}. An instance consists of $k$ arms; arm $i$ has a reward function $f_i:\{0,1,\dots,T\}\to\R_{\ge0}$ with $f_i(0)=0$, nondecreasing, and with \emph{diminishing returns}:
\[
f_i(t+1)-f_i(t)\le f_i(t)-f_i(t-1)\qquad\text{for all }t\ge1
\]
(discrete concavity). Pulling arm $i$ for the $t$-th time yields $f_i(t)$; the argument is the number of pulls of that arm, not calendar time. The functions are unknown; the algorithm observes only the rewards it collects. The horizon is $T$. The adversary is \emph{oblivious}: all $f_i$ are fixed after $T$ is fixed and before the algorithm runs.

Write $P_i(n)=\sum_{t=1}^n f_i(t)$, $R=\OPT_T=\max_iP_i(T)$, and let $\ast$ denote a \emph{comparator}: an arm attaining the maximum, chosen as part of the statement of each theorem (by default the smallest index). Set $m=f_\ast(T)$ and $M=\max_if_i(T)$. Several arms may be cumulatively optimal with different terminal values $f_i(T)$, so the choice matters and is never left implicit: in the prior-free theorems the comparator is \emph{any} cumulatively optimal arm satisfying the envelope condition below (the algorithm does not read this choice, which only enters the analysis), whereas in Theorem~\ref{thm:RET}, where a scale is supplied to the algorithm, the comparator is the arm that scale refers to. All quantities $m$, $\hat R$ (Section~\ref{sec:noise}) and the envelope hypotheses refer to the same comparator; since $P_\ast(T)=R$ for every admissible choice, the noisy benchmark changes by at most a factor $b/a$ between comparators. Pulling $\ast$ throughout is optimal in hindsight~\cite[Prop.~1]{HKR16}; Lemma~\ref{lem:L1}(6) gives a self-contained proof using only nondecreasing rewards. $\ALG_T$ denotes the algorithm's cumulative reward; expectations are over the algorithm's internal randomness only. An algorithm has \emph{competitive ratio} $\rho$ if $\OPT_T\le\rho\,\E[\ALG_T]$ on every instance~\cite[Def.~1]{BR25}. Instances with $R=0$ make every guarantee trivial and are excluded from ratios. \emph{Computational model.} We idealise the evaluation of the displayed real-valued keys and count exact comparisons. The algorithms with proved positive guarantees use $O(k)$ working storage; standard heap implementations of $\RMP$ and $\SRMP$ use $O(k)$ setup and $O(\log k)$ time per probe, and sorting the weights for $\RET$ takes $O(k\log k)$. The randomness used is $O(k)$ independent draws per phase --- not $O(k)$ random bits --- and Section~\ref{sec:priorfree} records a finite-bit variant. We stress that a small \emph{number of probes} does not mean a small running time: in the sparse regime $\SRMP$ may probe far fewer than $k$ arms, but it includes every arm once $H\ge128k$, and generating its $k$ inclusion variables coordinatewise already costs $\Theta(k)$.

\paragraph{Envelope exponent.} Following~\cite[Def.~3.2]{BGRS26}, for $\beta\in(0,1]$ we say that arm $i$ satisfies $\LE(\beta)$ if the inequality $f_i(t)\ge f_i(T)\,(t/T)^\beta$ holds for all $t\le T$. Every concave nondecreasing $f$ with $f(0)=0$ satisfies $\LE(1)$. Our positive results require the envelope condition \emph{only for the optimal arm}; the lower bounds of~\cite{BGRS26} use instances in which every arm satisfies it, so both directions apply to the class where all arms do. We write
\[
\gamma=\frac{\beta}{1+\beta},\qquad q_\beta=k^{-\gamma},\qquad K_\beta=k^{1/(1+\beta)}=kq_\beta,\qquad Q_\beta(k,T)=\min\{q_\beta,\,T/k\}.
\]

For $\beta\in(0,1]$ let $\mathcal I_\beta(k,T)$ be the class of instances in which at least one cumulatively optimal arm satisfies $\LE(\beta)$ (an upper-envelope class, not the class with envelope exponent exactly $\beta$; when several cumulatively optimal arms exist, the comparator $\ast$ of an instance in $\mathcal I_\beta$ is by convention one satisfying $\LE(\beta)$, and $m$, $\hat R$ and the envelope hypothesis all refer to that same arm, as fixed above), and define the minimax ratio $\rho_\beta(k,T)=\inf_A\sup_{I\in\mathcal I_\beta(k,T),\,R_I>0}R_I/\E[\ALG_T(A,I)]$, the infimum being over the algorithms of the knowledge class under discussion; $\rho(k,T)=\rho_1(k,T)$.

\paragraph{Knowledge.} An algorithm is \emph{scale-oblivious} if it does not read $m$ (nor any estimate of it), \emph{$\beta$-oblivious} if it does not read $\beta$ (nor an upper bound), and \emph{anytime} if it does not read $T$. We always list explicitly what each algorithm reads. Being oblivious to a parameter forbids receiving it as input; it does not forbid estimating quantities from the observations already paid for.

\paragraph{Two adaptation criteria.} For a fixed $\beta$, $\rho_\beta(k,T)$ is a \emph{class-wise} minimax value: the infimum may be attained by a different algorithm for each $\beta$, so a family of bounds ``$\forall\beta\ \exists A_\beta$'' does not by itself produce one algorithm that is near-optimal for all $\beta$ at once. The latter, stronger property --- ``$\exists A\ \forall\beta$'' --- is what we call \emph{simultaneous adaptation}; it is what Theorems~\ref{thm:TG}, \ref{thm:W46} and \ref{thm:H2U} provide, and what Theorem~\ref{thm:H2L} limits under noise. Quantitatively, for a class $\mathfrak A$ of algorithms (specified by what they read) define the \emph{uniform price of adaptation}
\[
\phi_\varepsilon(k)\ =\ \inf_{\{A_T\}_{T\ge16k}\in\mathfrak A}\ \ \sup_{T\ge16k}\ \ \sup_{\beta\in(0,1]}\ \ \sup_{\substack{I\in\mathcal I_\beta(k,T),\,R_I>0\\ \hat f\in\mathcal N_\varepsilon(I)}}\ \frac{\hat R_I\,q_\beta}{\E[\widehat{\ALG}_T(A_T,I,\hat f)]},
\]
where $\mathcal N_\varepsilon(I)$ is the set of admissible noise tables for $I$ (a single table $\hat f=f$ when $\varepsilon=0$, in which case $\hat R_I=R_I$). The order of the quantifiers is essential: the algorithm is chosen before $\beta$, the instance and the noise. For known horizons $\mathfrak A$ consists of families $\{A_T\}$, one algorithm per horizon; for unknown horizons it consists of single anytime policies, evaluated at every $T$. Whether the algorithms may read $\varepsilon$ is stated in each result; our upper bounds on $\phi_\varepsilon$ do not read it, and the lower bound holds even for algorithms that do. Table~\ref{tab:price} is stated in terms of $\phi_\varepsilon$. Ratios are set to $+\infty$ when the numerator is positive and the expected reward is zero.

\paragraph{Scale advice and unknown horizons.} The scale $m=f_\ast(T)$ depends on the horizon. A result of the form ``$m$ known, $T$ unknown'' therefore means: the algorithm does not read $T$, but is given one number $\mu$ --- an advice tied to the horizon at which it will be evaluated --- and the guarantee is asserted at that horizon. It is not the stronger statement that a single policy with a single advice does well at every stopping time simultaneously, and we never claim the latter.

\paragraph{Noise model (Section~\ref{sec:noise}).} In the multiplicative model of~\cite[App.~C]{BR25}, ``when we pull an arm, instead of getting the exact value of $f_i(t_i)$, we get a reward value $\hat f_i(t_i)\in[(1-\varepsilon)f_i(t_i),(1+\varepsilon)f_i(t_i)]$''; the algorithm observes and earns $\hat f$. The noisy values are fixed by the oblivious adversary before the run, for every $(i,t)$, with no further structure (they may be non-monotone and non-concave). We compare, as in~\cite{BR25}, with the policy that plays the best uncorrupted arm: $\hat R=\sum_{t\le T}\hat f_\ast(t)$, which satisfies $aR\le\hat R\le bR$ with $a=1-\varepsilon$, $b=1+\varepsilon$, and $\hat R\ge\frac ab\max_\pi\hat V(\pi)$. Throughout we assume $\varepsilon\in[0,\tfrac12]$; this is a convention of the present paper (the additional restriction $\varepsilon\le\tfrac12$ is not imposed in~\cite{BR25}), used only by the algorithm $\RET$ of Section~\ref{sec:noise}; the probe-and-commit and $\NPA$ bounds hold for all $\varepsilon<1$.

\paragraph{Versions.} All quotations from~\cite{BR25} follow the ALT 2025 proceedings version (PMLR~272), whose appendix on noisy rewards is not contained in the 12-page arXiv v1 of April 2024; quotations from~\cite{BGRS26} follow arXiv v2 (May 2026).

\paragraph{Prior results used.} We use the following statements from the literature only as \emph{comparisons} or as \emph{lower bounds}; all upper bounds in this paper are self-contained.
\begin{itemize}[leftmargin=*]
\item \cite[Thm.~3, Cor.~4]{BR25}: for any randomized algorithm there is an instance with approximation factor at least $\sqrt k/3$ (the hard distribution has $f^*(T)=1$, so the bound holds even when the scale is known).
\item \cite[Thm.~8]{BR25}: let $\tau=T-k$ and let the supplied scale parameter $\mu$ satisfy $\mu\in[f^*(\tau)/c_2,f^*(\tau)]$ for a constant $c_2>1$; for $T\ge2k$, their random round robin run with internal horizon parameter $\tau$ and scale parameter $\mu$ earns at least $\OPT_T/(8c_2\sqrt k)$. Here $\mu$ is the cited algorithm's parameter and is not our $m=f^*(T)$. \cite[Thm.~11, Lemma~12]{BR25}: with $m$ unknown and $T>4k$, $\Omega(\OPT/(\sqrt k\log k))$; with $T$ unknown as well, $\OPT_T/(8192\sqrt k\log(128k))$.
\item \cite[Thm.~3.5]{BGRS26}: with $m$ known and $T\ge2k$, $\mathrm{PTRR}_\alpha$ for $\alpha\in(\beta_I,1)$ has ratio $O(k^{\alpha/(\alpha+1)})$. \cite[Thm.~3.6]{BGRS26}: for every algorithm and $T$ sufficiently large there is an instance with envelope exponent $\beta$ and $\E[\ALG_T]/\OPT_T\le C_\beta k^{-\beta/(\beta+1)}$ (Theorem~B.14 there gives the explicit condition $T\ge2[\beta(\beta+1)]^{1/(\beta+1)}k^{1/(\beta+1)}$; as printed it admits, e.g., $k=2$, $\beta=1/8$, $T=1$, where an instance with envelope exponent exactly $\beta$ cannot exist, so its domain needs a mild restriction; our Theorem~\ref{thm:TD}(i) is self-contained and unaffected). Section~3.2 of \cite{BGRS26} also removes scale knowledge at an $O(\log k)$ cost, and with both the scale and the horizon unknown \cite[Thm.~B.15]{BGRS26} states an $O(k^{\alpha/(\alpha+1)}\log k)$ guarantee for $\beta_I\in(0,1)$, $\alpha\in(\beta_I,1)$ and $T>4k$; that guarantee still requires choosing a suitable exponent parameter $\alpha$, and what we remove is both the logarithmic overhead and the need to select such a parameter from prior information. That unknown-horizon result uses a doubling bookkeeping step in its proof whose displayed inequality relating the final time to the last completed round appears to need a small local adjustment; the order of its conclusion is not affected and we make no claim about it.
\item \cite{Pat23} studies deterministic horizon-unaware improving bandits and states an $O(k)$ competitive guarantee. We use its deterministic lower-bound construction as background, not the displayed algorithm's literal all-horizon upper bound. The lower bound restated in Section~\ref{sec:discussion} uses only determinism and therefore also covers horizon-aware algorithms. The horizon-qualified deterministic bounds used here are proved independently in that section. Some horizon qualification is essential --- the deterministic minimax ratio is unbounded for $T<k$ and equals $k(k+1)/2$ at $T=k$ (Section~\ref{sec:discussion}) --- but we do not claim that $2k$ is a sharp boundary, and the qualification is ours rather than attributed to~\cite{Pat23}. The following observation concerns Appendix~C.1 of arXiv:2208.09254v1 (19 August 2022). Its deterministic lower-bound argument selects the instance after fixing the deterministic algorithm, and the remark that follows does not justify extending that argument to the randomized competitive ratio $R/\E[\ALG]$. At $T=k$ every instance of the family displayed there is nondecreasing, has diminishing returns and may be taken to vanish at $0$, so Theorem~\ref{thm:anytimeAll} below applies to each member and gives ratio at most $1536(\sqrt k+1)$, which is below the asserted $k/2$ once $k\ge2^{24}$. The asserted randomized extension of that construction is therefore incompatible with our guarantee. The comparison is restricted to that family and does not rely on any containment of the full instance class of~\cite{Pat23}.
\end{itemize}

\section{Unknown scale is free}\label{sec:scale}

\subsection{Consequences of discrete concavity}

\begin{lemma}[basic facts]\label{lem:L1}
Let $f$ be nondecreasing, discretely concave, $f(0)=0$, and $P(n)=\sum_{t\le n}f(t)$. Then:
\begin{enumerate}[label=(\arabic*),leftmargin=*]
\item $f(t)\ge\frac tT f(T)$ for $1\le t\le T$;
\item $P(n)\ge f(T)\frac{n(n+1)}{2T}$ for $0\le n\le T$; more precisely $P(u)\ge\frac{u(u+1)}{T(T+1)}P(T)\ge(u/T)^2P(T)$ for $1\le u\le T$ (a ratio form is available when $P(T)>0$);
\item $R\le mT$;
\item if $\lceil T/2\rceil\le r\le T$ then $P_\ast(r)\ge R/4$;
\item $M\le\frac{2T}{T+1}m\le2m$ and $R\ge\frac{T+1}2M$;
\item for any allocation $\sum_in_i=T$ of pulls, $\sum_iP_i(n_i)\le R$.
\end{enumerate}
\end{lemma}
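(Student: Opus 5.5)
The plan is to derive everything from one structural fact: the chord property of concave functions vanishing at $0$. Because the increments $d_t=f(t)-f(t-1)$ are nonincreasing, the average $f(t)/t=\frac1t\sum_{s\le t}d_s$ is nonincreasing in $t$. Hence $f(t)/t\ge f(T)/T$ for $1\le t\le T$, which is item (1). Items (3) and (5) follow next, and items (2), (4) are then obtained by summation. Item (6) is separate and uses only monotonicity.

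For (2), we sum (1) over $t\le n$ to get $P(n)\ge f(T)n(n+1)/(2T)$. For the refined ratio form, the sequence $f(t)$ is nondecreasing, so the average $P(u)/u$ of the first $u$ values is nondecreasing in $u$. That gives only $P(u)\ge (u/T)P(T)$, which is not enough, so we instead compare weighted forms. By (1), $f(t)\ge (t/T)f(T)$, and trivially $P(T)\le Tf(T)$, but this loses a factor $2$. The sharper route is to pair the chord property with monotone averages: $f(t)/t$ is nonincreasing, and so is $P(u)/\sum_{t\le u}t$. To see this, note $P(u)=\sum_{t\le u} t\cdot (f(t)/t)$ is a weighted sum of a nonincreasing sequence with weights $t$, and a Chebyshev-type argument makes the weighted average over a longer prefix no larger. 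This yields $P(u)/\binom{u+1}{2}\ge P(T)/\binom{T+1}{2}$, i.e. the claimed $\frac{u(u+1)}{T(T+1)}P(T)$. The bound $\ge (u/T)^2$ then holds because $(u+1)/(T+1)\ge u/T$ when $u\le T$. This averaging step is the one place needing care, and I expect it to be the main (mild) obstacle; I would verify it by induction on $u$ from $T$ downward. The key inequality to check is that appending a term $f(u+1)/(u+1)$ no larger than all previous ratios cannot raise the weighted average.

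For (3), $R=P_\ast(T)\le Tf_\ast(T)=mT$ by monotonicity. For (5), apply the summed chord bound to the arm $j$ attaining $M$: $P_j(T)\ge M(T+1)/2$. Optimality of $\ast$ then gives $R\ge P_j(T)\ge \frac{T+1}{2}M$, and combining with (3) yields $M\le 2R/(T+1)\le \frac{2T}{T+1}m$. For (4), apply the ratio form of (2) to $\ast$ with $u=r\ge\lceil T/2\rceil$: $P_\ast(r)\ge (r/T)^2R\ge R/4$.

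For (6), we use an exchange argument. Let $j$ be the arm maximizing $f_j(T)$, so $f_j(T)=M$. For any allocation, $\sum_iP_i(n_i)\le\sum_i n_i f_i(n_i)$. I would instead compare directly to the best arm. Each term satisfies $P_i(n_i)\le n_i\,f_i(n_i)\le \dots$, but that route only gives $TM$, not $R$. The correct argument is a sorting one: the multiset of the $T$ values $\{f_i(t):t\le n_i\}$ is dominated termwise by $\{f_\ast(t)\}$, since $P_\ast(n)$ is the maximal cumulative value among arms. Concretely, I would prove it by induction on the number of arms with $n_i>0$: merging the pulls of two arms $i,i'$ onto whichever has larger $P(n_i+n_{i'})$ never decreases the total. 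Monotonicity gives $P_i(n_i+n_{i'})\ge P_i(n_i)+\sum_{t\le n_{i'}}f_i(n_i+t)\ge P_i(n_i)+P_i(n_{i'})$. This shows that merging onto $i$ dominates the pair only when $P_i(n_{i'})\ge P_{i'}(n_{i'})$, so the merge target must be chosen as the arm whose cumulative curve is larger at the shorter count. After $k-1$ merges, all $T$ pulls sit on one arm, whose total is at most $R$. I expect that choosing the merge target correctly, when the curves of two arms cross, is the genuinely delicate point; if it fails I would fall back on citing \cite[Prop.~1]{HKR16}.
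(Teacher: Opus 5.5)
Items (1)--(5) are correct and follow the paper's route: the chord property, the weighted-average monotonicity of $P(u)/\sum_{t\le u}t$ for the refined form of (2), and (3)--(5) as direct consequences.

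The gap is in (6). Your first claim is that the values $\{f_i(t):t\le n_i\}$ are dominated termwise by $\{f_\ast(t)\}$ because $P_\ast(n)$ is maximal among arms. That claim is false: $\ast$ is only known to maximize $P_i(T)$, not $P_i(n)$ for $n<T$. Your pairwise merge certifies merging onto $i$ only when $P_i(n_{i'})\ge P_{i'}(n_{i'})$, and onto $i'$ only when $P_{i'}(n_i)\ge P_i(n_i)$. When the curves cross, both certificates fail. Take $f_i(t)=1$ for $t\ge1$, $f_{i'}(t)=t/2$, $n_i=1$, $n_{i'}=4$. Then $P_{i'}(1)=\tfrac12<1=P_i(1)$ and $P_i(4)=4<5=P_{i'}(4)$. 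Worse, the target your rule suggests (the arm larger at the shorter count, here $i$) lowers the total: $P_i(5)=5<6=P_i(1)+P_{i'}(4)$. The correct target $i'$ gives $P_{i'}(5)=\tfrac{15}2$, but your shifting bound $P_{i'}(N)\ge P_{i'}(n_{i'})+P_{i'}(n_i)$ cannot see this. So the induction does not close.

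The missing idea is one you actually wrote down in your discussion of (2), but with the inequality reversed. Since $f_i$ is nondecreasing, the prefix average $P_i(n)/n$ is nondecreasing in $n$. This gives the \emph{upper} bound $P_i(n)\le\frac nTP_i(T)$, not $P(u)\ge\frac uTP(T)$ as you wrote. That is exactly the paper's one-line proof of (6): $\sum_iP_i(n_i)\le\sum_i\frac{n_i}{T}P_i(T)\le\frac{\sum_in_i}{T}R=R$. If you prefer to keep the merge, the same fact repairs it without any choice of target. With $N=n_i+n_{i'}$, one has $P_i(n_i)+P_{i'}(n_{i'})\le\frac{n_i}NP_i(N)+\frac{n_{i'}}NP_{i'}(N)\le\max\{P_i(N),P_{i'}(N)\}$. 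Falling back on \cite[Prop.~1]{HKR16} would defeat the purpose of the item, which the paper includes precisely as a self-contained proof using only monotonicity.
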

\begin{proof}
Let $\Delta_t=f(t)-f(t-1)$; then $\Delta_1\ge\Delta_2\ge\dots\ge\Delta_T\ge0$ and $f(t)/t$ is the average of $\Delta_1,\dots,\Delta_t$, hence nonincreasing in $t$; this gives (1) and, summing, the first form of (2). For the second form write $P(n)=\sum_{t\le n}t\cdot(f(t)/t)$; the weighted average $P(n)/\sum_{t\le n}t$ of the nonincreasing sequence $f(t)/t$ with weights $t$ is nonincreasing in $n$ (appending a smaller term cannot raise the average), so $P(u)/P(T)\ge\frac{u(u+1)/2}{T(T+1)/2}$. (3) is monotonicity. (4) follows from (2) with $u=r\ge T/2$: $\frac{r(r+1)}{T(T+1)}\ge\frac{(T/2)(T/2+1)}{T(T+1)}\ge\frac14$. For (5), (2) with $n=T$ gives $P_i(T)\ge f_i(T)\frac{T+1}2$, so $M\frac{T+1}2\le R\le mT$. For (6), the prefix averages $P_i(n)/n$ of the nondecreasing sequence $f_i$ are nondecreasing, so $P_i(n_i)\le\frac{n_i}TP_i(T)\le\frac{n_i}TR$ and summing over $i$ gives $R$.
\end{proof}

\subsection{Probe-and-commit}

\begin{algorithm}[h]
\caption{$\PC(s,\tau)$ (probe-and-commit). Inputs: $k$, $T$, integers $1\le s\le k$, $\tau\ge1$ with $s\tau\le\lfloor T/2\rfloor$.}
\begin{algorithmic}[1]
\State Draw a uniformly random set $S$ of $s$ distinct arms; pull each arm of $S$ exactly $\tau$ times.
\State Let $c=\argmax_{i\in S}f_i(\tau)$ (any fixed tie-breaking rule).
\State Pull $c$ for the remaining $r=T-s\tau\ge\lceil T/2\rceil$ steps.
\end{algorithmic}
\end{algorithm}

The algorithm reads only $k$ and $T$. The analysis rests on a negative-correlation inequality for the random sample.

\begin{lemma}[sample negative correlation]\label{lem:L2}
Fix $F\subseteq[k]\setminus\{\ast\}$ with $h=|F|$, and let $A=\{\ast\in S\}$, $B=\{S\cap F\ne\emptyset\}$. Then $\Pr(A\cap\neg B)\ge\frac sk\Pr(\neg B)$.
\end{lemma}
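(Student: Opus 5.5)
The plan is to reduce the claim to a conditional probability statement, $\Pr(A\mid\neg B)\ge s/k$, and compute both sides exactly. If $\Pr(\neg B)=0$ there is nothing to prove. Otherwise, conditioning on $\neg B=\{S\cap F=\emptyset\}$ means $S$ is a uniformly random $s$-subset of the $k-h$ arms outside $F$. Uniformity follows because every $s$-subset of $[k]\setminus F$ had the same unconditional probability $\binom ks^{-1}$. This requires $s\le k-h$, which holds whenever $\Pr(\neg B)>0$.

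Since $\ast\notin F$, the arm $\ast$ is one of these $k-h$ arms. Under the conditional law, $\ast$ therefore lies in $S$ with probability $s/(k-h)$. Hence
\[
\Pr(A\cap\neg B)=\frac{s}{k-h}\,\Pr(\neg B)\ \ge\ \frac sk\,\Pr(\neg B),
\]
using $h\ge0$.

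For a more explicit check one can write
\[
\Pr(\neg B)=\binom{k-h}{s}\Big/\binom ks,\qquad \Pr(A\cap\neg B)=\binom{k-h-1}{s-1}\Big/\binom ks,
\]
and then use the identity $\binom{k-h-1}{s-1}=\frac{s}{k-h}\binom{k-h}{s}$, which gives the same conclusion.

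There is no real obstacle. The only point needing care is the degenerate case $k-h<s$. There $\Pr(\neg B)=0$ and both sides vanish, with the binomial convention $\binom{n}{s}=0$ for $n<s$. The content of the lemma is the negative correlation itself: conditioning on avoiding $F$ can only increase the chance of containing $\ast$.
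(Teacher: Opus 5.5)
Your proof is correct and follows the paper's argument: conditioning on $\neg B$ makes $S$ a uniform $s$-subset of the $k-h$ arms outside $F$, a set that contains $\ast$, so $\Pr(A\mid\neg B)=s/(k-h)\ge s/k$, and the case $s>k-h$ is handled by $\Pr(\neg B)=0$. The binomial computation you add is a correct but redundant check of the same identity.
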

\begin{proof}
If $s>k-h$ then $\Pr(\neg B)=0$. Otherwise, conditioned on $\neg B$, $S$ is a uniform $s$-subset of the $k-h$ arms outside $F$, a set containing $\ast$, so $\Pr(A\mid\neg B)=\frac s{k-h}\ge\frac sk$.
\end{proof}

\begin{lemma}[core guarantee]\label{lem:L3}
For every instance and every admissible $(s,\tau)$, with $r=T-s\tau$,
\[
\E[\ALG_T]\ \ge\ R\cdot\min\Big\{\frac{r\tau}{T^2},\ \frac{s\,r(r+1)}{2kT^2}\Big\}\ \ge\ R\cdot\min\Big\{\frac{\tau}{2T},\ \frac{s}{8k}\Big\}.
\]
\end{lemma}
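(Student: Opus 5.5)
The plan is a two-case split according to whether the random sample $S$ contains an arm whose probe value matches the comparator's. I set
\[
F=\{i\ne\ast:\ f_i(\tau)\ge f_\ast(\tau)\},\qquad A=\{\ast\in S\},\qquad B=\{S\cap F\ne\emptyset\},
\]
as in Lemma~\ref{lem:L2}. Then I lower-bound $\ALG_T$ separately on the event $B$ and on $A\cap\neg B$, and average.

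\emph{Commit-phase bounds.} In the commit phase, arm $c$ is pulled at counts $\tau+1,\dots,\tau+r$. Since $f_c$ is nondecreasing, each of these rewards is at least $f_c(\tau)$, so $\ALG_T\ge r f_c(\tau)$. For the case $c=\ast$ there is a sharper bound. The rewards are $f_\ast(\tau+1),\dots,f_\ast(\tau+r)$, and termwise these dominate $f_\ast(1),\dots,f_\ast(r)$. Hence $\ALG_T\ge P_\ast(r)$, and Lemma~\ref{lem:L1}(2) gives $P_\ast(r)\ge\frac{r(r+1)}{T(T+1)}R$.

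\emph{Event $B$.} Some sampled arm $i\in F$ has $f_i(\tau)\ge f_\ast(\tau)$, so $f_c(\tau)\ge f_\ast(\tau)$. Combining Lemma~\ref{lem:L1}(1) and (3),
\[
\ALG_T\ \ge\ r f_\ast(\tau)\ \ge\ r\frac{\tau}{T}m\ \ge\ \frac{r\tau}{T^2}R .
\]

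\emph{Event $A\cap\neg B$.} The comparator $\ast$ is sampled, and every other sampled arm has $f_i(\tau)<f_\ast(\tau)$ strictly. The strictness in the definition of $F$ is chosen exactly so that ties cannot occur, so $c=\ast$ regardless of the tie-breaking rule. The sharper bound then gives $\ALG_T\ge\frac{r(r+1)}{T(T+1)}R$.

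\emph{Combining.} By Lemma~\ref{lem:L2}, $\Pr(A\cap\neg B)\ge\frac sk\Pr(\neg B)$. Therefore
\[
\E[\ALG_T]\ \ge\ \Pr(B)\frac{r\tau}{T^2}R+\Pr(\neg B)\frac sk\frac{r(r+1)}{T(T+1)}R\ \ge\ R\min\Big\{\frac{r\tau}{T^2},\frac{s\,r(r+1)}{kT(T+1)}\Big\}.
\]
Using $T+1\le2T$ turns this into the first displayed bound. For the second, the admissibility condition gives $r\ge\lceil T/2\rceil\ge T/2$. Then $r\tau/T^2\ge\tau/(2T)$, and $r(r+1)/(2T^2)\ge r^2/(2T^2)\ge1/8$.

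\emph{Main obstacle.} The delicate step is choosing $F$ with the non-strict inequality $\ge$, which forces $c=\ast$ on $A\cap\neg B$ while still giving $f_c(\tau)\ge f_\ast(\tau)$ on $B$. A second point needing care is the shift-domination that lets the commit phase on $\ast$ be compared with $P_\ast(r)$ rather than with $r f_\ast(\tau)$. Only the latter comparison yields the $r(r+1)$ factor, which is what the stated bound requires.
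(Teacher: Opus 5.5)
Your proof is correct and follows the paper's argument: you use the same set $F$ with the non-strict inequality, the same disjoint events $B$ and $A\cap\neg B$ combined via Lemma~\ref{lem:L2}, and the same shift-domination $\sum_{j\le r}f_\ast(\tau+j)\ge P_\ast(r)$ on the second event. The only cosmetic difference is how you bound $P_\ast(r)$: you use the ratio form $\frac{r(r+1)}{T(T+1)}R$ of Lemma~\ref{lem:L1}(2) and then relax via $T+1\le2T$, whereas the paper uses $P_\ast(r)\ge m\,r(r+1)/(2T)$ together with $R\le mT$; both give the stated bound.
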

\begin{proof}
Let $F=\{i\ne\ast: f_i(\tau)\ge f_\ast(\tau)\}$, a set determined by the instance, and let $A,B$ be as in Lemma~\ref{lem:L2}. On $B$ (some ``fooling'' arm was sampled) the committed arm satisfies $f_c(\tau)\ge f_\ast(\tau)\ge m\tau/T$ by Lemma~\ref{lem:L1}(1), and since $f_c$ is nondecreasing the commit phase earns at least $r\,m\tau/T\ge R\,r\tau/T^2$ by Lemma~\ref{lem:L1}(3). On $A\cap\neg B$ the optimal arm is the unique maximizer, so $c=\ast$ and the commit phase earns $\sum_{j=1}^rf_\ast(\tau+j)\ge\sum_{j=1}^rf_\ast(j)\ge R\frac{r(r+1)}{2T^2}$ by Lemma~\ref{lem:L1}(2),(3). The two events are disjoint; with $p=\Pr(B)$ and Lemma~\ref{lem:L2}, $\E[\ALG_T]/R\ge\frac{r\tau}{T^2}p+\frac{s\,r(r+1)}{2kT^2}(1-p)$, which is at least the minimum of the two coefficients. The second form uses $r\ge T/2$. Probe rewards are nonnegative and were dropped.
\end{proof}

\begin{theorem}[unknown scale]\label{thm:TA}
Let $k\ge2$, $T\ge2\lfloor\sqrt k\rfloor$, $s=\lfloor\sqrt k\rfloor$, $\tau=\lfloor T/(2s)\rfloor$. Then $\PC(s,\tau)$ satisfies
\[
\E[\ALG_T]\ \ge\ \frac{s}{8k}R\ \ge\ \frac{R}{16\sqrt k},\qquad\text{and in fact}\qquad \E[\ALG_T]\ \ge\ \frac{R}{4\sqrt3\,\sqrt k}.
\]
For $k=1$ the ratio is $1$. The algorithm uses no information about $m$.
\end{theorem}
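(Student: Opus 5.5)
The first inequality follows directly from Lemma~\ref{lem:L3}. \emph{Admissibility:} since $T\ge2s$ we have $T/(2s)\ge1$, so $\tau\ge1$. Also $s\tau\le T/2$, and since $s\tau$ is an integer, $s\tau\le\lfloor T/2\rfloor$. \emph{Bounding the minimum:} using $\lfloor x\rfloor\ge x/2$ for $x\ge1$ gives $\tau\ge T/(4s)$, hence $\tau/(2T)\ge 1/(8s)\ge s/(8k)$, where the last step uses $s^2\le k$. So the minimum in the second form of Lemma~\ref{lem:L3} is $s/(8k)$. Finally, $s=\lfloor\sqrt k\rfloor\ge\sqrt k/2$ gives $s/(8k)\ge 1/(16\sqrt k)$. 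None of this touches $m$, which settles the last sentence of the statement. For $k=1$ the single arm is optimal, so the ratio is $1$.

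The sharper constant $4\sqrt3$ needs a finer accounting than Lemma~\ref{lem:L3}. Its proof discards three things: all probe rewards, the factor $s/(k-h)$ in Lemma~\ref{lem:L2} (bounded only by $s/k$), and the slack in $r\ge T/2$. The $\sqrt3$ points to the extremal ratio $s/\sqrt k$ at $s=1$, $k=3$. I would therefore split into two cases.

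\emph{Case $s=1$, i.e.\ $k\le3$.} Here $c$ is the unique sampled arm, so $\ast$ is played for all $\tau+r=T$ pulls with probability $1/k$. This gives $\E[\ALG_T]\ge R/k\ge R/(\sqrt3\sqrt k)$, which is even better than needed.

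\emph{Case $s\ge2$, i.e.\ $k\ge4$.} Here $s/\sqrt k\ge 1/\sqrt2$ (the worst case is $k=8$). I would redo the proof of Lemma~\ref{lem:L3} while keeping the probe pulls of $\ast$. On $A\cap\neg B$ arm $\ast$ is pulled $\tau+r=T-(s-1)\tau$ times in total, so Lemma~\ref{lem:L1}(2) yields $P_\ast(\tau+r)\ge\big((T-(s-1)\tau)/T\big)^2R$. On $B$, the probe of the fooling arm and the commit phase together contribute at least $(\tau+r)\,m\tau/T$. Writing $x=s\tau/T\in(1/4,1/2]$ (and, in the regime $\tau\ge2$, a tighter range), the guarantee becomes a minimum of two explicit functions of $x$ and $s$. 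I would then check that each is at least $1/(4\sqrt3\sqrt k)$.

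The main obstacle is this last optimisation. The fooling branch gives roughly $(1-x+x/s)x/s$, and one must verify it dominates $s/(4\sqrt3 k)$ uniformly. The delicate part is the rounding of $\tau$ when $T/(2s)$ is just below $2$. There $x$ can be near $1/4$, and I would handle it by directly checking small $\tau$ (for instance $\tau=1$ forces $T<4s$) before using the continuous bound for $\tau\ge2$, where $x\ge1/3$.
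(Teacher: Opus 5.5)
Your proof of the first display is the paper's: admissibility, $\tau\ge T/(4s)$, $s^2\le k$ and $s\ge\sqrt k/2$. Your $s=1$ case is also correct. For the constant $4\sqrt3$, however, the proposal is not yet a proof. It contains one false step, and it leaves the decisive estimate open.

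\emph{The false step.} You claim that on $B$ ``the probe of the fooling arm and the commit phase together contribute at least $(\tau+r)\,m\tau/T$''. The probe pulls of $c$ return $f_c(1),\dots,f_c(\tau)$, which are \emph{at most} $f_c(\tau)$. Concavity only gives $\sum_{t\le\tau}f_c(t)\ge\frac{\tau+1}{2}f_c(\tau)$. For instance, take $f_\ast(t)=mt/T$ and a decoy $f_c(t)=\frac mT\min\{t,\tau\}$. Then $c\in F$, and on $B$ with $c$ committed the probe plus commit give $\frac{m\tau}{T}\bigl(r+\frac{\tau+1}{2}\bigr)<\frac{m\tau}{T}(r+\tau)$ whenever $\tau\ge2$.

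\emph{The open step.} The target you state at the end, ``dominates $s/(4\sqrt3k)$'', is too weak. You need $1/(4\sqrt3\sqrt k)$, as you wrote a few lines earlier.

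Neither the probe term nor a small-$\tau$ case analysis is needed. The regime $x\approx 1/4$ is the easy end, not the delicate one: when $\tau$ is rounded down, $r$ grows correspondingly. With $x=s\tau/T\in[1/4,1/2]$, the commit phase alone gives
\[
\frac{r\tau}{T^2}=\frac{x(1-x)}{s}\ \ge\ \frac{3}{16s}\ \ge\ \frac{3}{16\sqrt k}\ >\ \frac{1}{4\sqrt3\,\sqrt k},
\]
since $x(1-x)$ is increasing on $[1/4,1/2]$.

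For the other branch, apply the ratio form of Lemma~\ref{lem:L1}(2) with $u=r\ge T/2$. This gives $\frac sk(r/T)^2\ge\frac{s}{4k}\ge\frac{1}{4\sqrt3\sqrt k}$, because $k\le s^2+2s\le3s^2$. This covers all $s$ at once, so the $s=1$ split and keeping $\ast$'s probe pulls are correct but superfluous.

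Completed this way, your route differs from the paper's. The paper keeps the decoupled bounds $r\ge T/2$ and $\tau\ge T/(4s)$. It instead sharpens both concavity estimates by relating $f_\ast(\tau)$ and $P_\ast(r)$ to $R$ directly rather than through $m$ and $R\le mT$. For $P_\ast(r)$ it uses the ratio form of Lemma~\ref{lem:L1}(2). For $f_\ast(\tau)$ it uses a new inequality $f(\tau)\ge\frac{2\tau}{T(T+1)}P(T)$, proved by writing $f$ as a nonnegative combination of hinges $\min\{t,d\}$. This gives $\frac{1}{6s}$ in the fooling branch.

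Your route keeps the crude $f_\ast(\tau)\ge m\tau/T$ and exploits the coupling $r=T-s\tau$ instead. It yields $\frac{3}{16s}$ without the hinge decomposition, so for this theorem it is shorter and slightly stronger. The paper's sharpened inequality, by contrast, is a standalone estimate that does not depend on how $r$ and $\tau$ are linked.
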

\begin{proof}
Since $\lfloor x\rfloor\ge x/2$ for $x\ge1$, $\tau\ge T/(4s)$ and $s\tau\le T/2$; since $s^2\le k$, $\frac1{8s}\ge\frac s{8k}$; and $\lfloor\sqrt k\rfloor\ge\sqrt k/2$. Lemma~\ref{lem:L3} gives the first bound. For the sharper constant, use the two branches of Lemma~\ref{lem:L3} with the refined Lemma~\ref{lem:L1}(2) and the following inequality: for $1\le\tau\le(T+1)/2$, $f(\tau)/P(T)\ge\frac{2\tau}{T(T+1)}$. (Write $f=\sum_{d=1}^Tc_d\min\{t,d\}$ with $c_d=\Delta_d-\Delta_{d+1}\ge0$, $\Delta_{T+1}:=0$; for each basis function $\min\{t,d\}$ the ratio of its prefix sum $dT-d(d-1)/2$ to its value at $\tau$ is at most $T(T+1)/(2\tau)$ when $d\ge\tau$ (maximized at $d=T$) and equals $T-(d-1)/2\le T\le T(T+1)/(2\tau)$ when $d<\tau$; a nonnegative combination preserves the bound.) Hence $\E[\ALG_T]/R\ge\min\{\frac{2r\tau}{T(T+1)},\frac sk\frac{r(r+1)}{T(T+1)}\}$. With $r\ge T/2$ and $\tau\ge T/(4s)$ the first term is at least $\frac{T}{4s(T+1)}\ge\frac1{6\sqrt k}$ (as $T\ge2$), the second at least $\frac s{4k}\ge\frac1{4\sqrt3\sqrt k}$ because $k\le s^2+2s\le3s^2$. The smaller is the second.
\end{proof}

\begin{remark}
Theorem~\ref{thm:TA} should be compared with \cite[Thm.~11]{BR25}, which requires $T>4k$ and loses $\log k$. The gap to the lower bound $\sqrt k/3$ of \cite{BR25} is a constant factor $4\sqrt3\cdot3<21$. Extrapolation-based thresholds cannot be made scale-free by simply halving a guessed threshold: for the instance $T=k^2$, $f_\ast(t)=3t/(4T)$ and $k-1$ decoys $\min(t,k)/T$, a threshold $\mu=1$ rejects $\ast$ after one pull and each decoy absorbs $k+1$ pulls, exhausting the horizon with ratio $\Theta(k)$; guessing the level at random is exactly the $\log k$ of \cite{BR25}.
\end{remark}

\subsection{Lower bounds and the full horizon dependence}

The following ``delayed-reveal'' lemma is a restatement, with integer bookkeeping, of the hard-distribution argument of \cite{BR25} (see also \cite[Lemma~B.9]{BGRS26}, of which it is a variant); we include it because it yields the constant $3$ without divisibility conditions and the short-horizon term.

\begin{lemma}[delayed reveal]\label{lem:L4}
Let $g$ be nonnegative, nondecreasing, discretely concave with $g(0)=0$, and $0\le d\le T$ an integer. Consider the distribution $\mathcal D_{g,d}$ over instances in which a uniformly random arm $G$ has reward $g$ and every other arm has $h_d(t)=g(\min\{t,d\})$; let $R_g=\sum_{t\le T}g(t)$. For every randomized algorithm $A$,
\[
\E_{I\sim\mathcal D_{g,d}}\E[\mathrm{reward}(A,I)]\ \le\ T\,g(d)+\frac{\lfloor T/(d+1)\rfloor}{k}\,R_g ,
\]
the inner expectation being over the internal randomness of $A$. Consequently, for every $A$ there is an instance in the support of $\mathcal D_{g,d}$ on which $\E[\mathrm{reward}]$ is at most this quantity, while $\OPT_T=R_g$ on every instance of the support.
\end{lemma}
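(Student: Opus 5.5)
The proof goes by a coupling and a per-arm accounting argument. Every arm's reward at each pull is at most what $g$ would give: each arm has reward either $g$ or $h_d\le g$. Moreover the arms are indistinguishable until some arm is pulled for the $(d+1)$-st time, since $h_d(t)=g(t)$ for $t\le d$.

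First, reduce to a deterministic algorithm. Condition on the algorithm's internal randomness; $\E$ of the reward is linear, so it suffices to prove the bound for each deterministic $A$ and average.

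Second, couple with the \emph{reference run}. Fix a deterministic $A$ and consider its run on the instance in which every arm has reward $h_d$. This run is a single fixed sequence of pulls with counts $n_1,\dots,n_k$, $\sum_i n_i=T$. Now take the actual instance with $G=j$. The run coincides with the reference run up to the first time arm $j$ is pulled for the $(d+1)$-st time, because all observations agree until then. So on the instance $G=j$, the pulls are split as follows.
\begin{itemize}[leftmargin=*]
\item Pulls of any arm $i\ne j$ earn at most $h_d\le g(d)$.
\item Pulls of arm $j$ among its first $d$ pulls earn at most $g(d)$.
\item Pulls of arm $j$ beyond its $d$-th pull can occur only if $n_j\ge d+1$ in the reference run, since the first $(d+1)$-st pull of $j$ happens in the shared prefix.
\end{itemize}
In the last case the total earned is at most $T g(d)+\sum_{t\le T}g(t)=Tg(d)+R_g$. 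This is crude, but it is enough: the total over all pulls of $j$ is at most $P_g(\text{count})\le R_g$.

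To be careful about the first two cases, bound each pull's reward by $g(d)$ when it is not a pull of $G$ with index greater than $d$. This gives at most $Tg(d)$. The extra over $g(d)$ from pulls of $G$ with index greater than $d$ is at most $\sum_{t\le T}(g(t)-g(d))^+\le R_g$, and it is nonzero only if $j\in J=\{i:n_i\ge d+1\}$.

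Third, average over $G$. The arm $G$ is uniform and independent of the reference run, which does not depend on $G$. Hence
\[
\E[\mathrm{reward}]\le Tg(d)+\Pr(G\in J)R_g=Tg(d)+\frac{|J|}kR_g .
\]
Since $\sum_i n_i=T$, we have $|J|(d+1)\le T$, so $|J|\le\lfloor T/(d+1)\rfloor$.

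The main subtlety is the coupling claim: the actual run triggers a $(d+1)$-st pull of $G$ only if the reference run does. This holds because the two runs are identical up to and including the moment of that pull. The "consequently" part follows because an average is at least the minimum over the support. Finally, $\OPT_T=R_g$ on every instance, since the arm $G$ dominates the others pointwise. The edge case $d=T$ is consistent: the bound $Tg(T)+0$ is trivially true.
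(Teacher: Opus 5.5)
Your proof is correct and follows the paper's argument. You fix the seed, compare with the all-$h_d$ reference run, note that $G$ receives a $(d+1)$-st pull only if $G$ lies in the set $J$ of arms pulled at least $d+1$ times in that run, and average over the uniform $G$ using $|J|\le\lfloor T/(d+1)\rfloor$. The only difference is cosmetic: on the revealed event the paper bounds the total reward by $R_g$ via Lemma~\ref{lem:L1}(6), whereas you bound it by $Tg(d)$ plus the excess $\sum_{t\le T}(g(t)-g(d))^+\le R_g$, which avoids that lemma and gives the same inequality.
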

\begin{proof}
Fix the algorithm's random seed. Run it against the virtual instance in which every arm is $h_d$; let $D$ be the set of arms pulled at least $d+1$ times there, so $|D|\le\lfloor T/(d+1)\rfloor$. Let $E$ be the event that, in the real run, the good arm is pulled for the $(d+1)$-st time within the horizon (this is not required to be observationally recognizable). Before that pull the real and virtual runs coincide, hence $E$ holds iff $G\in D$, and $\Pr_G(E)\le|D|/k$. On $E^c$ every pull yields at most $g(d)$; on $E$ the total reward is at most $R_g$ by Lemma~\ref{lem:L1}(6) applied to the real instance. Averaging over the seed gives the displayed bound, and the last sentence is the usual averaging (Yao's principle \cite{Yao77}) step: a random variable is somewhere below its mean.
\end{proof}

\begin{corollary}\label{cor:L4}
For every randomized algorithm there is an instance on which its competitive ratio is at least
(a) $\sqrt k/3$, for all $k,T\ge1$ (take $g(t)=t/T$, $d=\lfloor T/\sqrt k\rfloor$); and
(b) $k/T$ (take $g(t)=t/T$, $d=0$: one linear arm and $k-1$ arms identically zero).
\end{corollary}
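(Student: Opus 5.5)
Both parts come from specializing Lemma~\ref{lem:L4} to the linear curve $g(t)=t/T$ and bounding the right-hand side of its display. This $g$ is nonnegative and nondecreasing, with constant increments (so discretely concave), and $g(0)=0$, so the lemma applies to every admissible $d$. Its total value is $R_g=\sum_{t\le T}t/T=(T+1)/2$, so $T=2R_g-1<2R_g$. Lemma~\ref{lem:L4} gives, for every algorithm $A$, an instance in the support of $\mathcal D_{g,d}$ on which $\OPT_T=R_g>0$ and $\E[\mathrm{reward}]\le T\,g(d)+\frac{\lfloor T/(d+1)\rfloor}{k}R_g$. Each part then only needs an upper bound of the right shape on this expression. (If the expected reward is $0$, the ratio is $+\infty$ by convention and the claim is trivial.)

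For (a), take $d=\lfloor T/\sqrt k\rfloor$, which lies in $[0,T]$ since $k\ge1$. The first term is $T\,g(d)=d\le T/\sqrt k$. For the second term, $d+1>T/\sqrt k$, so $T/(d+1)<\sqrt k$ and hence $\lfloor T/(d+1)\rfloor\le\sqrt k$. The second term is therefore at most $R_g/\sqrt k$. Adding the two terms and using $T<2R_g$,
\[
\E[\mathrm{reward}]\ <\ \frac{2R_g}{\sqrt k}+\frac{R_g}{\sqrt k}\ =\ \frac{3R_g}{\sqrt k},
\]
so the competitive ratio $R_g/\E[\mathrm{reward}]$ exceeds $\sqrt k/3$. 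The degenerate case $T<\sqrt k$ needs no separate treatment. There $d=0$, the first term vanishes, and the second is $TR_g/k<R_g/\sqrt k$, so the same bound holds.

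For (b), take $d=0$. Then $h_0\equiv0$, so the support consists of one linear arm and $k-1$ zero arms, as stated. The first term is $T\,g(0)=0$, and the second is $\lfloor T\rfloor R_g/k=TR_g/k$. Hence $\E[\mathrm{reward}]\le TR_g/k$ and the ratio is at least $k/T$.

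Nothing here is a real obstacle: the only care needed is the floor bookkeeping in (a), namely $\lfloor T/(d+1)\rfloor\le\sqrt k$, and the edge cases $d=0$ and $k=1$, both of which the computation above already covers.
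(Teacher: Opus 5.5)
Your proposal is correct and is essentially the paper's proof: specialize Lemma~\ref{lem:L4} to $g(t)=t/T$ with $R_g=(T+1)/2$, bound $T\,g(d)=d\le T/\sqrt k<2R_g/\sqrt k$ and $\lfloor T/(d+1)\rfloor/k<1/\sqrt k$ using $d+1>T/\sqrt k$ (which also covers $T<\sqrt k$, where $d=0$), and for (b) observe that $g(0)=0$ kills the first term. The paper's proof additionally remarks that the zero decoys can be replaced by arms of vanishing slope, and that these instances lie in $\mathcal I_1$ but not in $\mathcal I_\beta$ for $\beta<1$; this is commentary, not a step your argument needs.
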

\begin{proof}
(a) $R_g=\frac{T+1}2$; $2d/(T+1)<2/\sqrt k$ and $T/(k(d+1))<1/\sqrt k$ since $d+1>T/\sqrt k$; if $T<\sqrt k$ then $d=0$ and the construction remains legal. (b) is the case $d=0$ of Lemma~\ref{lem:L4}, where $g(0)=0$ kills the first term. Since \cite{BR25} phrase reward functions as ``monotone increasing'', note that the zero decoys may be replaced by arms of slope $\eta\to0$ without changing the order of magnitude. Both families consist of linear-plus-truncation curves; they belong to $\mathcal I_1$, and for $T\ge2$ the linear good arm violates $\LE(\beta)$ whenever $\beta<1$, so these instances may not be used for the corresponding $\rho_\beta$ (see Theorem~\ref{thm:TD}(ii) for the version that stays inside $\mathcal I_\beta$). At $T=1$ every admissible arm satisfies every $\LE(\beta)$, so the restriction is vacuous there.
\end{proof}

\begin{theorem}[all horizons]\label{thm:allT}
For all integers $k,T\ge1$, the minimax ratio of scale-oblivious algorithms satisfies
\[
\frac14\Big(\sqrt k+\frac kT\Big)\ \le\ \rho(k,T)\ \le\ 24\Big(\sqrt k+\frac kT\Big).
\]
\end{theorem}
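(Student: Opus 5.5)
The lower bound follows from Corollary~\ref{cor:L4}. For every algorithm, part (a) gives an instance with ratio at least $\sqrt k/3$, and part (b) gives one with ratio at least $k/T$. Hence $\rho(k,T)\ge\max\{\sqrt k/3,k/T\}\ge\frac12(\sqrt k/3+k/T)\ge\frac14(\sqrt k+k/T)$. The scale-obliviousness of the algorithm plays no role here: the hard distributions have a fixed scale, so the bound holds even for algorithms that know it.

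For the upper bound I split the horizon range into three cases and exhibit one scale-oblivious algorithm per case, each reading only $k$ and $T$.

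\emph{Case $T\ge2\lfloor\sqrt k\rfloor$ with $k\ge2$.} Theorem~\ref{thm:TA} already gives ratio $4\sqrt3\sqrt k\le24\sqrt k$.

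\emph{Case $k=1$.} The ratio is $1$.

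\emph{Case $T<2\lfloor\sqrt k\rfloor$ (short horizons).} Here $k/T>\sqrt k/2$, so it suffices to reach ratio $O(k/T)$. I use $\PC(s,1)$ with $s=\lfloor T/2\rfloor$ when $T\ge2$: probe $s$ random arms once each, then commit for $r=T-s\ge\lceil T/2\rceil$ steps. Lemma~\ref{lem:L3} gives
\[
\E[\ALG_T]\ge R\min\{1/(2T),\,s/(8k)\}.
\]
Since $s\ge T/3$ for $T\ge2$, this is at least $R\min\{1/(2T),T/(24k)\}$. In the present range $T^2<4k$, so $T/(24k)$ is the smaller term. The resulting ratio is at most $24k/T$, which fits the constant $24$. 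When $T=1$ I pull one uniformly random arm. This earns $\frac1k\sum_i f_i(1)\ge\frac1k f_\ast(1)=R/k$, giving ratio $k=k/T$.

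The main obstacle is bookkeeping at the boundaries rather than anything conceptual. I must check that $\PC(s,1)$ is admissible, i.e.\ $s\le\lfloor T/2\rfloor$ and $s\le k$; the latter holds because $T<2\sqrt k$. I also need the branch of Lemma~\ref{lem:L3} with the constant $s/(8k)$ to be the operative one. If the constant $24$ fails in some corner, I would fall back on the sharper first display of Lemma~\ref{lem:L3}, namely
\[
\E[\ALG_T]\ge R\min\Big\{\frac{r\tau}{T^2},\,\frac{s\,r(r+1)}{2kT^2}\Big\},
\]
with $r\ge T/2$, or on the refined inequality used in the proof of Theorem~\ref{thm:TA}. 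In every case the final ratio is bounded by $24\max\{\sqrt k,k/T\}\le24(\sqrt k+k/T)$.
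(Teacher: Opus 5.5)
Your upper bound is correct and is the paper's argument. In the range $2\le T<2\lfloor\sqrt k\rfloor$ your $\PC(\lfloor T/2\rfloor,1)$ is exactly the paper's truncated choice $s=\min\{\lfloor\sqrt k\rfloor,\lfloor T/2\rfloor\}$, $\tau=\lfloor T/(2s)\rfloor$, which gives $\tau=1$ there. For $T\ge2\lfloor\sqrt k\rfloor$ the paper's algorithm is that of Theorem~\ref{thm:TA}. The only difference is bookkeeping: the paper bounds both ranges at once by $8k/s\le\max\{16\sqrt k,24k/T\}$, while you quote the sharper $4\sqrt3\sqrt k$ in the long range. Your check that $T^2<4k\le12k$ makes $T/(24k)$ the binding term already closes the short range, so the ``fallback'' paragraph is unnecessary.

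There is one false step, in the lower bound. The inequality $\frac12(\sqrt k/3+k/T)\ge\frac14(\sqrt k+k/T)$ is equivalent to $k/T\ge\sqrt k/3$. It therefore fails whenever $T>3\sqrt k$; for example $k=9$, $T=100$ gives $0.545<0.7725$. The conclusion $\max\{\sqrt k/3,k/T\}\ge\frac14(\sqrt k+k/T)$ is still true, as the paper states, but the even weighting cannot prove it. Use uneven weights instead: $\max\{x,y\}\ge\frac34x+\frac14y$ with $x=\sqrt k/3$ and $y=k/T$ gives exactly $\frac14(\sqrt k+k/T)$. Alternatively, split on which of the two terms is larger.
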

\begin{proof}
Upper bound: for $T=1$ pull a uniformly random arm (ratio $\le k$). For $T\ge2$ run $\PC$ with $s=\min\{\lfloor\sqrt k\rfloor,\lfloor T/2\rfloor\}$ and $\tau=\lfloor T/(2s)\rfloor$ (so $s\ge1$, $\tau\ge1$ and $s\tau\le\lfloor T/2\rfloor$); the computation of Theorem~\ref{thm:TA} applied to Lemma~\ref{lem:L3} gives $\E[\ALG_T]\ge\frac s{8k}R$, and $\lfloor T/2\rfloor\ge T/3$ for $T\ge2$, so $R/\E[\ALG_T]\le8k/s\le\max\{16\sqrt k,24k/T\}$. Note that this is a corollary of Lemma~\ref{lem:L3} with truncated parameters; the hypothesis $T\ge2\lfloor\sqrt k\rfloor$ of Theorem~\ref{thm:TA} is not needed. Lower bound: Corollary~\ref{cor:L4} gives $\max\{\sqrt k/3,k/T\}\ge\frac14(\sqrt k+k/T)$.
\end{proof}

\subsection{Unknown horizon}

\begin{lemma}[conditioning on history]\label{lem:L5}
Suppose that at the start of a phase $t_0=\sum_in_i$ pulls have been made (arm $i$ has been pulled $n_i$ times), and the phase runs $\PC(s,\tau)$ with a fresh independent sample, $1\le s\le k$, and commit length $r$, where $t_0+s\tau+r\le T$ (so every value $f_i(n_i+\tau)$ used below is within the domain). Then $\E[\text{phase reward}\mid\text{history}]\ge R\min\{\frac{r\tau}{T^2},\frac{s\,r(r+1)}{2kT^2}\}$.
\end{lemma}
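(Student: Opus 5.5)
The plan is to rerun the proof of Lemma~\ref{lem:L3} with every curve shifted by its history. Condition on the history and set $g_i(t)=f_i(n_i+t)$ for $0\le t\le T-n_i$. The phase then becomes exactly $\PC(s,\tau)$ played on the arms $g_i$. The sample $S$ is fresh and uniform, so it is independent of the history, and Lemma~\ref{lem:L2} applies verbatim with the comparator $\ast$. The fooling set is now $F=\{i\ne\ast: g_i(\tau)\ge g_\ast(\tau)\}=\{i\ne\ast: f_i(n_i+\tau)\ge f_\ast(n_\ast+\tau)\}$. It depends only on the instance and the history, so Lemma~\ref{lem:L2} gives $\Pr(A\cap\neg B\mid\text{history})\ge\frac sk\Pr(\neg B\mid\text{history})$. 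The constraint $t_0+s\tau+r\le T$ gives $n_i+\tau+r\le T$ for every arm, so every value used below lies in the domain.

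There are two branches, and the point in each is to compare with the unshifted curve $f_\ast$ by monotonicity.

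On $B$, the committed arm $c$ satisfies $f_c(n_c+\tau)\ge f_\ast(n_\ast+\tau)\ge f_\ast(\tau)\ge m\tau/T$ by monotonicity and Lemma~\ref{lem:L1}(1). Each of the $r$ commit pulls yields $f_c(n_c+\tau+j)\ge f_c(n_c+\tau)$, so the commit phase earns at least $rm\tau/T\ge Rr\tau/T^2$ by Lemma~\ref{lem:L1}(3).

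On $A\cap\neg B$, the arm $\ast$ is the strict maximizer of the probe values. Hence $c=\ast$, and the commit earns $\sum_{j=1}^r f_\ast(n_\ast+\tau+j)\ge\sum_{j=1}^r f_\ast(j)=P_\ast(r)\ge R\,r(r+1)/(2T^2)$ by Lemma~\ref{lem:L1}(2),(3). This uses $r\le T$, which holds.

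The two events are disjoint. Write $p=\Pr(B\mid\text{history})$. Then the conditional expectation is at least $R\bigl[\frac{r\tau}{T^2}p+\frac{s\,r(r+1)}{2kT^2}(1-p)\bigr]$, which is at least the minimum of the two coefficients. Probe rewards are nonnegative and are dropped.

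The only step needing care is confirming that concavity of the shifted curves is never used. In Lemma~\ref{lem:L3} concavity entered only through $f_\ast(\tau)\ge m\tau/T$ and $P_\ast(r)$, and both are applied here to the unshifted $f_\ast$, with the shift handled by monotonicity alone. No obstacle is expected beyond this check and the domain bookkeeping.
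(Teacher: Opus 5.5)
Your proposal is correct and matches the paper's proof essentially step for step: the same history-dependent fooling set $\{i\ne\ast: f_i(n_i+\tau)\ge f_\ast(n_\ast+\tau)\}$, Lemma~\ref{lem:L2} applied to the fresh sample, and the two branches reduced to the unshifted curve $f_\ast$ by monotonicity ($f_\ast(n_\ast+\tau)\ge f_\ast(\tau)$ and $\sum_{j\le r}f_\ast(n_\ast+\tau+j)\ge\sum_{j\le r}f_\ast(j)$), then concluded as in Lemma~\ref{lem:L3}. Your explicit domain check $n_i+\tau+r\le t_0+s\tau+r\le T$ and your remark that concavity of the shifted curves is never used are bookkeeping that the paper leaves implicit.
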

\begin{proof}
Define $F_H=\{i\ne\ast: f_i(n_i+\tau)\ge f_\ast(n_\ast+\tau)\}$, a fixed set given the history; Lemma~\ref{lem:L2} applies to the fresh sample. On $B$ the committed arm's probe value is at least $f_\ast(n_\ast+\tau)\ge f_\ast(\tau)$; on $A\cap\neg B$ the committed arm is $\ast$ and earns $\sum_{j\le r}f_\ast(n_\ast+\tau+j)\ge\sum_{j\le r}f_\ast(j)$. Conclude as in Lemma~\ref{lem:L3}.
\end{proof}

The obvious doubling schedule, with phases $2s,4s,8s,\dots$, works as soon as the horizon exceeds the first phase.

\begin{theorem}[unknown horizon, $T\ge2\lfloor\sqrt k\rfloor$]\label{thm:anytime}
Let $s=\lfloor\sqrt k\rfloor$ and run phases of lengths $L_j=2s\cdot2^j$, $j=0,1,\dots$, each running $\PC(s,L_j/(2s))$ with a fresh sample and without resetting pull counts. The algorithm reads only $k$, and for every $T\ge2s$, $\E[\ALG_T]\ge R/(256\sqrt k)$.
\end{theorem}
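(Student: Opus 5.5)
We single out one phase that is completed before time $T$, is long compared with $T$, and is followed by enough commit steps; Lemma~\ref{lem:L5} is then applied to that phase alone. Let $J$ be the largest index such that phases $0,\dots,J$ all end by time $T$. Phase $j$ ends at $2s(2^{j+1}-1)$. Since $T\ge 2s$, phase $0$ fits, so $J\ge0$. Maximality of $J$ gives $T<2s(2^{J+2}-1)<8s\,2^J=4L_J$, that is, $L_J>T/4$. Alternatively we may use the truncated phase $J+1$. Its probe part has length $L_{J+1}/2=2L_J$, and the probe may not even fit, so the cleanest choice is phase $J$ itself, which completes.

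\textbf{Applying Lemma~\ref{lem:L5} to phase $J$.} This phase probes $s$ arms $\tau=L_J/(2s)=2^J$ times each and commits for $r=L_J/2$ steps. The domain condition $t_0+s\tau+r\le T$ holds because the phase ends by $T$. Lemma~\ref{lem:L5} then gives, conditionally on the history,
\[
\E[\text{phase }J\text{ reward}]\ge R\min\Big\{\frac{r\tau}{T^2},\frac{s\,r(r+1)}{2kT^2}\Big\}.
\]
We bound both entries using $r=L_J/2>T/8$ and $\tau=L_J/(2s)>T/(8s)$. The first entry is at least $\frac{1}{64 s}\ge\frac{1}{64\sqrt k}$, since $s\le\sqrt k$. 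The second is at least $\frac{s}{2k}\cdot\frac{1}{64}=\frac{s}{128k}$. Using $s\ge\sqrt k/2$, this is at least $\frac{1}{256\sqrt k}$. All other phases contribute nonnegative reward, and we take expectations over the history. This yields $\E[\ALG_T]\ge R/(256\sqrt k)$, which matches the stated constant exactly. That match suggests this is the intended bookkeeping.

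\textbf{Main obstacle.} The delicate step is the bound $L_J>T/4$ at the boundary, together with checking that the rewards collected in the final incomplete phase need not be counted. Nonnegativity of rewards handles the second point. A further subtlety is that Lemma~\ref{lem:L5} is stated with $T$ as the horizon normalizer. Here $R=\OPT_T$ at the true $T$, and the phase's values $f_i(n_i+\tau)$ lie within $\{0,\dots,T\}$ because the phase ends by $T$. So the lemma applies verbatim with the true horizon, and the fresh sample of phase $J$ is independent of the history. The only care needed is the integer arithmetic: $\tau=2^J$ is an integer, $r=s2^J$ is an integer, and $s\ge1$ needs $k\ge1$. The case $k=1$ is trivial.
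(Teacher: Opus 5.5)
Your proof is correct and follows the paper's argument. You take the last completed phase, get $L_J>T/4$ from $T<4L_J-2s$, and apply Lemma~\ref{lem:L5} with $\tau=L_J/(2s)$ and $r=L_J/2$, where the sample-hit branch gives the binding $1/(256\sqrt k)$. One small slip in your discarded aside: $L_{J+1}/2=L_J$, not $2L_J$, but it plays no role in the proof.
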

\begin{proof}
Let $L$ be the length of the last completed phase; $2L-2s\le T<4L-2s$, so $L>T/4$. Lemma~\ref{lem:L5} with $\tau=L/(2s)$, $r=L/2$ gives $\E\ge R(L/T)^2\min\{\frac1{4s},\frac s{8k}\}=R(L/T)^2\frac s{8k}\ge R/(256\sqrt k)$. Compare \cite[Lemma~12]{BR25}: $\OPT_T/(8192\sqrt k\log(128k))$ for $T>4k$.
\end{proof}

This schedule guarantees nothing before its first phase ends, and that restriction is real, not an artefact of the analysis.

\begin{example}[the schedule, not the horizon, is the obstruction]\label{ex:sched}
Let $n\ge2$, $k=n^2$, $T=n$, $f_\ast(t)=t$ and all other arms identically zero. The algorithm of Theorem~\ref{thm:anytime} has $s=n$ and first phase $2n>T$: it spends the whole horizon on the $n$ initial probes of the sampled arms and never commits, so $\E[\ALG_T]=\Pr(\ast\in S)\cdot f_\ast(1)=1/n$ while $R=n(n+1)/2$; its ratio is $n^2(n+1)/2$. Yet the optimal ratio at these parameters is $\Theta(\sqrt k+k/T)=\Theta(n)$ by Theorem~\ref{thm:allT}, attained by $\PC$ with truncated parameters when $T$ is known. (On a parameter range, the all-horizon minimax value is $O(\sqrt k)$ if and only if $T=\Omega(\sqrt k)$ uniformly on that range; equivalently, an $O(\sqrt k)$ guarantee is ruled out exactly when $\sqrt k/T$ is unbounded on the range. In particular $T=o(\sqrt k)$ rules it out, while the range $T=\Omega(\sqrt k)$, which still contains parameters with $T<2s$, does not.) The next algorithm removes the restriction by starting at length~$1$.
\end{example}

\begin{algorithm}[h]
\caption{$\APC$ (anytime probe-and-commit). Reads only $k$; it may be stopped at any time, and its true pull counts are never reset.}
\begin{algorithmic}[1]
\State If $k=1$: always pull the unique arm. Otherwise run phases $j=0,1,2,\dots$ of planned lengths $H_j=2^j$.
\State \textbf{Phase of length $H$:} if $H=1$, pull one uniformly random arm once.
\State Otherwise let $s=\min\{\lfloor\sqrt k\rfloor,\lfloor H/2\rfloor\}$, $\tau=\lfloor H/(2s)\rfloor$ and $r=H-s\tau$.
\State Draw a fresh uniformly random set $S$ of $s$ distinct arms and pull each of them $\tau$ times within this phase.
\State Commit the remaining $r$ pulls of the phase to the arm of $S$ whose $\tau$-th observation \emph{in this phase} is largest (ties broken by index).
\end{algorithmic}
\end{algorithm}

\begin{theorem}[unknown horizon, all horizons]\label{thm:anytimeAll}
For all integers $k,T\ge1$, $\APC$ satisfies
\[
\E[\ALG_T]\ \ge\ \frac{R}{1536\,(\sqrt k+k/T)} .
\]
\end{theorem}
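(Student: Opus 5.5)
The plan is to lower-bound $\E[\ALG_T]$ by the expected reward of a single well-chosen phase, namely the last phase that completes within the horizon, and to apply Lemma~\ref{lem:L5} to it conditionally on the history. The case $k=1$ is trivial, so assume $k\ge2$. Phases have lengths $H_j=2^j$, and phases $0,\dots,J$ occupy $2^{J+1}-1$ pulls. Let $J$ be the largest index with $2^{J+1}-1\le T$. Then $T<2^{J+2}-1$, so $H=H_J=2^J>(T+1)/4\ge T/4$. Every value used in phase $J$ lies within the horizon, so Lemma~\ref{lem:L5} applies with $t_0=2^J-1$ and $t_0+H\le T$. Lemma~\ref{lem:L5} is stated for the true counts $n_i$. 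The algorithm compares the $\tau$-th observation in this phase, which is $f_i(n_i+\tau)$, exactly the quantity the lemma uses, so the hypotheses match. Rewards of the other phases are nonnegative and are dropped.

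If $H=1$, then $T\le 2$. The single random pull in phase $0$ hits $\ast$ with probability $1/k$ and earns at least $f_\ast(1)\ge m/T\ge R/T^2$. This gives ratio at most $kT^2\le 4k\le 8k/T$, which is well within the bound. (Directly for $T\le2$, $R\le 3m$ and $f_\ast(1)\ge m/2$, so the ratio is at most $6k\le 12k/T$.)

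For $H\ge2$, take $s=\min\{\lfloor\sqrt k\rfloor,\lfloor H/2\rfloor\}\ge1$, $\tau=\lfloor H/(2s)\rfloor\ge H/(4s)$ and $r=H-s\tau\ge H/2$. Lemma~\ref{lem:L5} gives
\[
\E[\ALG_T]\ \ge\ R\min\Big\{\frac{r\tau}{T^2},\frac{s r^2}{2kT^2}\Big\}\ \ge\ R\Big(\frac HT\Big)^2\min\Big\{\frac1{8s},\frac s{8k}\Big\}=R\Big(\frac HT\Big)^2\frac{s}{8k},
\]
where the last step uses $s^2\le k$. With $(H/T)^2>1/16$ this gives $\E[\ALG_T]\ge Rs/(128k)$. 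It remains to bound $k/s$ by a constant times $\sqrt k+k/T$. If $s=\lfloor\sqrt k\rfloor$, then $k/s\le2\sqrt k$. If $s=\lfloor H/2\rfloor$, then $s\ge H/3>T/12$, so $k/s\le 12k/T$. In either case $128k/s\le 1536(\sqrt k+k/T)$, which matches the stated constant.

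The main obstacle is the bookkeeping. First, the last completed phase must be at least a constant fraction of $T$ even at small $T$. Second, the two truncation regimes for $s$ must both land inside the constant $1536$. A third point needs care: Lemma~\ref{lem:L5} needs $s\tau\le\lfloor H/2\rfloor$ only through $r\ge H/2$, and this holds because $s\tau\le s\cdot H/(2s)$. The constant $1536=128\cdot12$ indicates that this is exactly the route intended.
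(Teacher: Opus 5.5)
Your proof is correct and follows essentially the same route as the paper. Both take the last completed phase, which has $H>T/4$, apply Lemma~\ref{lem:L5} to it conditionally on the history, and treat the two truncation regimes for $s$ and the case $H=1$ separately. The only difference is arithmetic: in the truncated branch you use $H>T/4$ to get $s>T/12$ and the factor $(H/T)^2>1/16$, whereas the paper keeps $x=H/T$ and uses $x^3>1/64$; both give the constant $1536=128\cdot12=24\cdot64$.
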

\begin{proof}
For $k=1$ the algorithm earns $R$. Let $k\ge2$. Each phase of length $H\ge2$ is admissible: $s\ge1$, $H/(2s)\ge1$ so $\tau\ge1$, and $s\tau\le H/2$, whence $r\ge\lceil H/2\rceil$. Let $H=2^j$ be the length of the last phase completed within the horizon. The first $j+1$ phases occupy $2H-1$ steps, so $2H-1\le T$; if the next phase also completed we would have $4H-1\le T$, so $T<4H-1$. Hence $x:=H/T>1/4$.

\emph{Case $H\ge2$.} The phase satisfies the hypotheses of Lemma~\ref{lem:L5} (fresh sample, and $t_0+s\tau+r=t_0+H\le T$ because the phase is completed), so, using $r\ge H/2$ and $\tau\ge H/(4s)$ (valid since $H/(2s)\ge1$ and $\lfloor y\rfloor\ge y/2$ for $y\ge1$),
\[
\E[\ALG_T]\ \ge\ \E[\text{this phase}]\ \ge\ R\min\Big\{\frac{r\tau}{T^2},\frac{s\,r(r+1)}{2kT^2}\Big\}\ \ge\ \frac{Rx^2}{8}\min\Big\{\frac1s,\frac sk\Big\},
\]
all other phases contributing nonnegative rewards. If $s=\lfloor\sqrt k\rfloor$ then $1/s\ge1/\sqrt k$ and $s/k\ge1/(2\sqrt k)$, so the last factor is at least $1/(2\sqrt k)$ and $\E[\ALG_T]\ge Rx^2/(16\sqrt k)$. If instead $s=\lfloor H/2\rfloor<\lfloor\sqrt k\rfloor$ then $s<\sqrt k$, so $\min\{1/s,s/k\}=s/k\ge H/(3k)$ (as $\lfloor H/2\rfloor\ge H/3$ for $H\ge2$), and $\E[\ALG_T]\ge Rx^2H/(24k)=Rx^3T/(24k)$. Since $x\le1$, both cases give
\[
\E[\ALG_T]\ \ge\ Rx^3\min\Big\{\frac1{16\sqrt k},\frac T{24k}\Big\}\ \ge\ \frac{Rx^3}{24\,(\sqrt k+k/T)}\ \ge\ \frac{R}{1536\,(\sqrt k+k/T)},
\]
where the middle inequality is checked separately for the two branches ($24(\sqrt k+k/T)\ge16\sqrt k$; and $(\sqrt k+k/T)\,T/k\ge1$), and the last uses $x^3>1/64$.

\emph{Case $H=1$.} Then $1\le T<3$. The completed phase of length $1$ pulls a uniformly random arm, so $\E[\ALG_T]\ge\frac1k f_\ast(1)\ge\frac{2R}{kT(T+1)}$ by Lemma~\ref{lem:L1}(2), which is $R/k$ for $T=1$ and $R/(3k)$ for $T=2$; both exceed $R/(1536(\sqrt k+k/T))\le R/(768k)$.
\end{proof}

\subsection{Known envelope exponent}

\begin{theorem}[known $\beta$, unknown scale]\label{thm:TD}
Let $\beta\in(0,1]$ be known and suppose the comparator arm satisfies $\LE(\beta)$. For $k\ge2$, $T\ge2\lceil K_\beta\rceil$, $s=\lceil K_\beta\rceil$, $\tau=\lfloor T/(2s)\rfloor$, $\PC(s,\tau)$ satisfies $\E[\ALG_T]\ge R/(16k^{\gamma})$. Moreover:
\begin{enumerate}[label=(\roman*),leftmargin=*]
\item for every $k\ge2$ and $T\ge2$ there is an explicit distribution over instances, all of whose arms satisfy $\LE(\beta)$ and whose envelope exponent equals $\beta$, on which \emph{every} (possibly randomized) algorithm has expected reward at most $(\beta+2)k^{-\gamma}\OPT_T$; hence for every algorithm some instance of that family forces ratio at least $k^\gamma/(\beta+2)$, and $\rho_\beta(k,T)\ge k^\gamma/(\beta+2)$;
\item for all $T\ge1$, $\frac{k^\gamma+k/T}{\beta+3}\le\rho_\beta(k,T)\le24(k^\gamma+k/T)$;
\item with $T$ unknown: for $k\ge2$ and $s=\lceil K_\beta\rceil$, independent $\PC$ phases of lengths $2s\cdot2^j$ ($j=0,1,\dots$) without resetting pull counts give $\E[\ALG_T]\ge R/(128k^\gamma)$ for every $T\ge2s$ (for $k=1$ pull the only arm).
\end{enumerate}
\end{theorem}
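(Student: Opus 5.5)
The plan is to rerun the proof of Lemma~\ref{lem:L3} with the envelope $\LE(\beta)$ of the comparator in place of Lemma~\ref{lem:L1}(1).

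\textbf{Main bound.} Admissibility is immediate: $s=\lceil K_\beta\rceil\le k$, $\tau\ge1$ and $s\tau\le\lfloor T/2\rfloor$ all follow from $T\ge2s$. Take $F$ and the events $A,B$ as in Lemma~\ref{lem:L3}.
\begin{itemize}[leftmargin=*]
\item On $B$, the committed arm has $f_c(\tau)\ge f_\ast(\tau)\ge m(\tau/T)^\beta$. So the commit phase earns at least $r\,m(\tau/T)^\beta\ge R\,(r/T)(\tau/T)^\beta$, using Lemma~\ref{lem:L1}(3).
\item On $A\cap\neg B$, the bound $R\,r(r+1)/(2T^2)$ of Lemma~\ref{lem:L3} is unchanged.
\end{itemize}
Lemma~\ref{lem:L2} then gives $\E[\ALG_T]/R\ge\min\{\tfrac rT(\tfrac\tau T)^\beta,\ \tfrac{s}{8k}\}$.

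Now insert $r\ge T/2$ and $\tau\ge T/(4s)$. Since $s\le 2K_\beta$, the first branch is at least $\tfrac12(8K_\beta)^{-\beta}\ge\tfrac1{16}K_\beta^{-\beta}=\tfrac1{16}k^{-\gamma}$. Here I use $8^{-\beta}\ge1/8$ and the identity $K_\beta^{\beta}=k^{\gamma}$. The second branch is at least $K_\beta/(8k)=k^{-\gamma}/8$.

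\textbf{Part (i).} I would apply Lemma~\ref{lem:L4} with $g(t)=(t/T)^\beta$ and $d=\lfloor T/K_\beta\rfloor$.
\begin{itemize}[leftmargin=*]
\item \emph{Admissibility.} $g$ is concave, nondecreasing and has $g(0)=0$. Its envelope exponent is exactly $\beta$.
\item \emph{Decoys satisfy $\LE(\beta)$.} For $t\le d$, $h_d(t)=(t/T)^\beta\ge(d/T)^\beta(t/T)^\beta$. For $t>d$, $h_d(t)=g(d)\ge g(d)(t/T)^\beta$.
\item \emph{Size of $R_g$.} Comparing the sum with the integral gives $R_g\ge T/(1+\beta)$.
\item \emph{First term.} It is $T g(d)\le T K_\beta^{-\beta}=Tk^{-\gamma}\le(1+\beta)k^{-\gamma}R_g$.
\item \emph{Second term.} Since $d+1>T/K_\beta$, it is at most $(K_\beta/k)R_g=k^{-\gamma}R_g$.
\end{itemize}
The total is $(\beta+2)k^{-\gamma}\OPT_T$. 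The case $d=0$ (small $T$) stays legal because $g(0)=0$.

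\textbf{Part (ii).} For the lower bound, use the same $g$ with $d=0$ in Lemma~\ref{lem:L4}. This gives expected reward at most $(T/k)R_g$, so ratio at least $k/T$, with every arm in $\mathcal I_\beta$. Combining with (i) gives $\max\{k^\gamma/(\beta+2),k/T\}\ge(k^\gamma+k/T)/(\beta+3)$. The bound $\max\{x,y\}\ge\lambda(x+y)$ only needs $\lambda\le1/2$, and $1/(\beta+3)\le1/3$.

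For the upper bound, copy Theorem~\ref{thm:allT}. Run $\PC$ with $s=\min\{\lceil K_\beta\rceil,\lfloor T/2\rfloor\}$ and $\tau=\lfloor T/(2s)\rfloor$, and pull a random arm when $T=1$. Then:
\begin{itemize}[leftmargin=*]
\item the first branch above is still at least $\tfrac12(4s)^{-\beta}\ge k^{-\gamma}/16$;
\item the second branch is $s/(8k)$, which is at least $\min\{k^{-\gamma}/8,\ T/(24k)\}$.
\end{itemize}
Together these give the factor $24$.

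\textbf{Part (iii).} I would repeat the proof of Theorem~\ref{thm:anytime} using Lemma~\ref{lem:L5}. On $B$ the probe value satisfies $f_\ast(n_\ast+\tau)\ge f_\ast(\tau)\ge m(\tau/T)^\beta$, so Lemma~\ref{lem:L5} holds with the $\beta$-branch of the main bound. Let $L$ be the last completed phase length, so $L>T/4$. Feeding $\tau=L/(2s)$ and $r=L/2$ into the main-bound computation, and losing a factor $(L/T)^{1+\beta}\ge1/16$ in the first branch and $(L/T)^2\ge1/16$ in the second, gives $R/(128k^\gamma)$ after checking constants.

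\textbf{Expected obstacle.} The main risk is the constant bookkeeping: making the exact constants $16$, $128$ and $\beta+2$ come out, especially the inequality $(4s)^{-\beta}\ge k^{-\gamma}/8$ when $s=\lceil K_\beta\rceil$ is close to $2K_\beta$. I would handle this by using $4^{\beta}\cdot2^{\beta}\le8$ uniformly in $\beta\le1$.
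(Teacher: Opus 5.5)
Your proposal is correct and follows the paper's proof essentially step for step. It uses the same substitution of $\LE(\beta)$ into Lemma~\ref{lem:L3}, the same delayed-reveal instances of Lemma~\ref{lem:L4} with $g(t)=(t/T)^\beta$ and $d=\lfloor T/K_\beta\rfloor$ (resp.\ $d=0$), the same truncated $\PC$ for the upper bound in (ii), and the same last-completed-phase argument in (iii). In (iii) you use the plain concavity bound $(L/T)^2$ in the second branch where the paper uses the envelope bound $(L/T)^{1+\beta}$; both give the constant $128$.

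There is one slip, in the justification of $\max\{k^\gamma/(\beta+2),k/T\}\ge(k^\gamma+k/T)/(\beta+3)$. The inequality is true, but your stated reason does not prove it. Applying $\max\{x,y\}\ge\frac12(x+y)$ to $x=k^\gamma/(\beta+2)$, $y=k/T$ only gives the coefficient $\frac1{2(\beta+2)}<\frac1{\beta+3}$ on $k^\gamma$. Instead write $k^\gamma+\frac kT=(\beta+2)\cdot\frac{k^\gamma}{\beta+2}+\frac kT\le(\beta+3)\max\{\frac{k^\gamma}{\beta+2},\frac kT\}$.
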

\begin{proof}
In Lemma~\ref{lem:L3} replace the bound $f_\ast(\tau)\ge m\tau/T$ by $f_\ast(\tau)\ge m(\tau/T)^\beta$, which gives
\[
\E[\ALG_T]/R\ \ge\ \min\Big\{\frac1{2(4s)^\beta},\ \frac s{8k}\Big\};
\]
since $K_\beta\le s\le2K_\beta$, $\frac1{2(4s)^\beta}\ge\frac1{2\cdot8^\beta k^\gamma}\ge\frac1{16k^\gamma}$ and $\frac s{8k}\ge\frac1{8k^\gamma}$.

(i) Apply Lemma~\ref{lem:L4} with $g(t)=(t/T)^\beta$ (discretely concave: its increments are integrals of the decreasing function $\beta x^{\beta-1}$) and $d=\lfloor T/K_\beta\rfloor$. Every arm of the support satisfies $\LE(\beta)$: the good arm with equality, and $h_d(t)=g(\min\{t,d\})$ because $h_d(t)=g(t)\ge h_d(T)(t/T)^\beta$ for $t\le d$ and $h_d(t)=h_d(T)\ge h_d(T)(t/T)^\beta$ for $t>d$. The good arm violates $\LE(\beta')$ for every $\beta'<\beta$ at $t=1$ (here $T\ge2$), so the envelope exponent of each instance is exactly $\beta$. With $R_g=\sum_{t\le T}(t/T)^\beta\ge T/(\beta+1)$, Lemma~\ref{lem:L4} bounds the expected reward, averaged over the hard distribution, by $(\beta+1)(d/T)^\beta R_g+\frac{T}{k(d+1)}R_g\le(\beta+2)k^{-\gamma}R_g$; the last sentence of Lemma~\ref{lem:L4} converts this into a fixed instance for each algorithm. This reuses the power-curve/truncation construction of \cite[Thm.~3.6]{BGRS26}, with Lemma~\ref{lem:L4} supplying a self-contained integer version valid for all $T\ge2$; \cite[Thm.~B.14]{BGRS26} states a different explicit threshold, and neither threshold dominates the other for all $\beta,k$.

(ii) For $T=1$ pull a uniformly random arm; the one-positive-arm distribution of Lemma~\ref{lem:L4} with $d=0$ gives the matching lower bound, so $\rho_\beta(k,1)=k$ and both displayed bounds hold. Assume henceforth $T\ge2$. Upper bound: run $\PC$ with $s=\min\{\lceil K_\beta\rceil,\lfloor T/2\rfloor\}$ and $\tau=\lfloor T/(2s)\rfloor$, so that $s\ge1$ and $\tau\ge1$. Lemma~\ref{lem:L3} with $f_\ast(\tau)\ge m(\tau/T)^\beta$ gives $\E[\ALG_T]/R\ge\min\{\frac1{2(4s)^\beta},\frac s{8k}\}$; using $s\le2K_\beta$ in the first term and $s\ge\min\{K_\beta,T/3\}$ in the second (valid since $\lfloor T/2\rfloor\ge T/3$ for $T\ge2$) this is at least $\min\{\frac{q_\beta}{16},\frac T{24k}\}\ge\frac1{24(k^\gamma+k/T)}$. Lower bound: besides (i) we need a $k/T$ bound \emph{inside} $\mathcal I_\beta$, so we cannot invoke Corollary~\ref{cor:L4}(b), whose good arm is linear and violates $\LE(\beta)$ for $\beta<1$. Instead apply Lemma~\ref{lem:L4} with $g(t)=(t/T)^\beta$ and $d=0$: the good arm satisfies $\LE(\beta)$ with equality, the other arms are identically zero and satisfy every $\LE(\beta')$, and $g(0)=0$ leaves $\E[\mathrm{reward}]\le\frac Tk R_g$, i.e.\ ratio at least $k/T$. Finally $\max\{k^\gamma/(\beta+2),\,k/T\}\ge(k^\gamma+k/T)/(\beta+3)$.

(iii) As in Theorem~\ref{thm:anytime}: in the last completed phase $L>T/4$ the probe depth is exactly $\tau=L/(2s)$ and $r=L/2$ (no rounding); using $f_\ast(\tau)\ge m(\tau/T)^\beta$ and $\sum_{j\le r}f_\ast(j)\ge m\,r^{1+\beta}/((1+\beta)T^\beta)$ in Lemma~\ref{lem:L5}, the phase earns at least $R(L/T)^{1+\beta}\min\{\frac{1}{2(2s)^\beta},\frac{s}{2^{1+\beta}(1+\beta)k}\}\ge R(L/T)^{1+\beta}/(8k^\gamma)$, and $(L/T)^{1+\beta}\ge1/16$ gives $128$.
\end{proof}

\begin{remark}\label{rem:classwise}
Part (ii) is a \emph{class-wise} statement: for each $\beta$ separately, $\rho_\beta$ compares algorithms that may depend on $\beta$. It does not by itself produce one algorithm that is simultaneously near-optimal for all $\beta$; that stronger property is the subject of Section~\ref{sec:priorfree}.
\end{remark}

\begin{remark}[maximum single pull]
For the objective of maximizing the largest single reward (\cite[App.~B]{BR25}), the last pull of $\PC$ is its maximum observed reward and $\E[\max]\ge\E[\ALG_T]/T\ge M/(32\sqrt k)$ by Lemma~\ref{lem:L1}(5); the transfer argument is that of \cite[App.~B]{BR25}; the resulting guarantee (no scale knowledge, no logarithm, $T\ge2\lfloor\sqrt k\rfloor$) is a direct consequence of Theorem~\ref{thm:TA} and is not identical to a statement in \cite{BR25}.
\end{remark}

\section{Without noise, every prior is free}\label{sec:priorfree}

The $\beta$-tuned algorithms of Theorem~\ref{thm:TD} use $\beta$ to choose the probe depth. We now remove this requirement. Throughout this section the instance is noiseless and we assume only that \emph{the optimal arm} satisfies $\LE(\beta)$; the other arms need only satisfy the model assumptions.

\subsection{Random-marginal probing}

\begin{algorithm}[h]
\caption{$\RMP(H)$ (random-marginal probe-and-commit). Input: $k$ and a length $H\ge16k$. Reads neither $m$ nor $\beta$.}
\begin{algorithmic}[1]
\State Draw independent $Z_1,\dots,Z_k\sim\mathrm{Unif}(0,1)$. Let $N=\lfloor H/2\rfloor$ (probe budget) and $r=H-N$ (commit budget).
\State Pull every arm once (these $k$ pulls count towards $N$); set $n_i=1$ and $\delta_i(1)=g_i(1)$.
\State For the remaining $N-k$ probes: pull the arm maximizing the lexicographic key $\big(\delta_i(n_i)/Z_i,\ 1/Z_i\big)$, where $\delta_i(n_i)=g_i(n_i)-g_i(n_i-1)$ is the \emph{last observed} increment; update $n_i$ and $\delta_i$.
\State Commit the $r$ remaining pulls to the arm maximizing $\big(g_i(n_i),1/Z_i\big)$.
\end{algorithmic}
\end{algorithm}
Here $g_i$ is the curve observed in the current run ($g_i=f_i$ in a single run; for a phase starting after $a_i$ earlier pulls, $g_i(t)=f_i(a_i+t)$ with $g_i(0)=0$). Residual ties (probability zero) are broken by arm index. A finite-randomness version with $Z_i\sim\mathrm{Unif}\{1/M,\dots,1\}$, $M=2^{\lceil\log_2(32k)\rceil}$, and index tie-breaking changes the constants below to $9/1024$ and $9/16384$. For grid-valued weights and $x\ge0$ one has $\Pr(Z\le x)=\min\{1,\lfloor Mx\rfloor/M\}\le\min\{1,x\}$, so the upper bound in Lemma~\ref{lem:G2} is unchanged, and the monotone-coupling argument holds verbatim on the discrete ordered support. For the lower direction, $d\le H/4$ forces the threshold parameter $p=\min\{1,H/(32kd)\}$ to satisfy $p\ge\frac1{8k}$, so with $M\ge32k$ we get $\Pr(Z\le p)\ge p-\frac1M\ge\frac34p$. The discretisation therefore costs at most a factor $\frac34$ in the positive guarantees, which is what the two constants record.

\begin{lemma}[merge structure and monotone coupling]\label{lem:G1}
For each arm consider the finite sequence of items $Q_{i,t}=(\delta_i(t)/Z_i,\,1/Z_i,\,-i,\,-t)$, $1\le t\le H$, which is nonincreasing in $t$ by discrete concavity. The probe phase of $\RMP$ consumes exactly the $N-k$ largest items of the union in this total order, and $n_i=1+\#\{\text{items of arm }i\text{ consumed}\}$. Consequently, for fixed curves and fixed $Z_{-\ast}$, decreasing $Z_\ast$ does not increase any $n_i$, $i\ne\ast$; in particular, for every $v>0$ the event $B=\{\exists i\ne\ast: g_i(n_i)\ge v\}$ is nondecreasing in $Z_\ast$ (pointwise, for all positive weights, including ties).
\end{lemma}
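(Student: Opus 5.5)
The plan is to prove the merge claim by induction on the number of greedy probes, and then read off the coupling from an explicit counting description of the consumed set.

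\emph{Items are nonincreasing.} Discrete concavity gives $\delta_i(t)\ge\delta_i(t+1)$. Dividing by $Z_i>0$ preserves this, the second coordinate $1/Z_i$ is constant along the arm, and the last coordinate $-t$ strictly decreases. So $Q_{i,1}>Q_{i,2}>\dots$ strictly in the lexicographic order. Items of distinct arms differ in the third coordinate $-i$, so the order on the union is total and has no ties.

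\emph{Merge structure.} After the $k$ initial pulls, arm $i$ has $n_i=1$, and its next key $(\delta_i(n_i)/Z_i,1/Z_i)$ together with index tie-breaking is exactly the item $Q_{i,n_i}$. (I will be careful about the offset: the key uses the last observed increment $\delta_i(n_i)$, so the greedy step compares the "heads" $Q_{i,n_i}$.) Pulling arm $i$ consumes the head $Q_{i,n_i}$ and advances it to $Q_{i,n_i+1}$. Each arm's list is sorted decreasingly, so greedy selection of the largest head is the standard merge of $k$ sorted lists. By induction, after $j$ probes the consumed set is the $j$ largest items of the union; at $j=N-k$ this gives the claim, and $n_i=1+\#\{\text{consumed items of arm }i\}$.

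\emph{Monotonicity.} Let $\theta$ be the value of the $(N-k)$-th largest item. Then arm $i\ne\ast$ has exactly $\#\{t: Q_{i,t}\ge\theta\}$ items consumed. Equivalently, that count is the number of its items that have fewer than $N-k$ items of the union strictly above them. Now decrease $Z_\ast$. Each item of $\ast$ has first coordinate $\delta_\ast(t)/Z_\ast$, which weakly increases, and second coordinate $1/Z_\ast$, which strictly increases, so every item of $\ast$ strictly increases in the lexicographic order. The items of the other arms are unchanged.

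Fix an item $Q_{i,t}$ with $i\ne\ast$. The number of items above it weakly increases, since only $\ast$'s items moved and they moved up. So if this item was unconsumed before, it stays unconsumed. Hence the consumed count of each arm $i\ne\ast$ does not increase, and $n_i$ does not increase.

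Since $g_i$ is nondecreasing, $g_i(n_i)$ is nonincreasing as $Z_\ast$ decreases. So the event $B=\{\exists i\ne\ast: g_i(n_i)\ge v\}$ can only switch from true to false, which means $B$ is nondecreasing in $Z_\ast$.

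The main obstacle is handling ties and zero increments robustly, which is where the "including ties" clause bites. When $\delta_\ast(t)=0$, the first coordinate stays $0$ but the second coordinate $1/Z_\ast$ still increases, so the item still strictly increases; that covers this case. Across arms, equal first and second coordinates are broken by $-i$, which does not depend on the weights; that covers the remaining ties. I will also check that the commit rule is irrelevant here, because $B$ depends only on the $n_i$.
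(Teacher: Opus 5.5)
Your proof is correct and follows the paper's argument. The greedy rule is a $k$-way merge of the nonincreasing item streams, so the probe phase consumes exactly the top $N-k$ items. Decreasing $Z_\ast$ strictly raises only the items of $\ast$, so no item of another arm can gain rank. Your explicit count of the items strictly above $Q_{i,t}$, and your handling of zero increments and index ties, make the paper's one-line justification precise. The weak increase of $\delta_\ast(t)/Z_\ast$ uses $\delta_\ast(t)\ge0$, i.e.\ that the curves are nondecreasing, so it is worth stating that explicitly.
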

\begin{proof}
At every step the head of each arm's sequence is its current key, and the greedy rule takes the globally largest head; since each sequence is nonincreasing, this is a $k$-way merge and the consumed set is the top $N-k$ items of the union. Decreasing $Z_\ast$ raises every item of arm $\ast$ and changes no other item, so no other arm's item can move up in the order.
\end{proof}

\begin{lemma}[truncated marginal work]\label{lem:G2}
Let $\bar g_i=\min\{g_i,v\}$ with increments $\bar\delta_i$, let $\lambda>0$, and $L_i=\#\{1\le t\le H:\bar\delta_i(t)/Z_i\ge\lambda\}$. Then $\E[L_i]=\sum_t\min\{1,\bar\delta_i(t)/\lambda\}\le\bar g_i(H)/\lambda\le v/\lambda$.
\end{lemma}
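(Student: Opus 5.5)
The bound follows by linearity of expectation over the indices $t$, using only the uniform distribution of $Z_i$ and the telescoping of the truncated increments. Write
\[
L_i=\sum_{t=1}^H\mathbf 1\{\bar\delta_i(t)/Z_i\ge\lambda\},
\]
so that
\[
\E[L_i]=\sum_{t=1}^H\Pr\big(Z_i\le\bar\delta_i(t)/\lambda\big).
\]

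\textbf{Step 1: sign of the increments.} First check that each $\bar\delta_i(t)\ge0$. Since $g_i$ is nondecreasing, so is $\bar g_i=\min\{g_i,v\}$, and hence its increments are nonnegative.

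\textbf{Step 2: the per-index probability.} For $Z_i\sim\mathrm{Unif}(0,1)$ and $x\ge0$ we have $\Pr(Z_i\le x)=\min\{1,x\}$. This gives the displayed equality term by term. If $\bar\delta_i(t)=0$, the event $\{0/Z_i\ge\lambda\}$ is empty because $\lambda>0$, which agrees with $\min\{1,0\}=0$. Boundary events such as $\{Z_i=x\}$ have probability zero.

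\textbf{Step 3: telescoping.} Bound $\min\{1,\bar\delta_i(t)/\lambda\}\le\bar\delta_i(t)/\lambda$ for each $t$. Summing, the increments telescope:
\[
\sum_{t=1}^H\bar\delta_i(t)=\bar g_i(H)-\bar g_i(0)=\bar g_i(H),
\]
using $g_i(0)=0$, which holds by the convention for $g_i$ in every phase. Finally $\bar g_i(H)\le v$ by the truncation.

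\textbf{Domain and obstacle.} One bookkeeping point needs care: $g_i(t)=f_i(a_i+t)$ must be defined for $t\le H$. If the curve's domain ends earlier, the increments there should be taken as $0$, or we assume the phase fits in the horizon. Note also that discrete concavity of $g_i$ is not needed for this lemma; monotonicity suffices. For grid-valued weights one has $\Pr(Z\le x)\le\min\{1,x\}$, so the equality weakens to $\le$ and the final bound is unchanged. There is no real obstacle; the only subtle point is the zero-increment and boundary handling in Step 2.
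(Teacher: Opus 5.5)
Your proof is correct and is the paper's argument: linearity of expectation over $t$, the identity $\Pr\big(Z_i\le\bar\delta_i(t)/\lambda\big)=\min\{1,\bar\delta_i(t)/\lambda\}$ for uniform $Z_i$, and telescoping of the truncated increments to $\bar g_i(H)\le v$. Your side remarks also hold and match the paper: only monotonicity of $g_i$ is needed, and for grid-valued weights the equality becomes an inequality with the same bound.
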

\begin{proof}
$\Pr(Z_i\le\bar\delta_i(t)/\lambda)=\min\{1,\bar\delta_i(t)/\lambda\}$ and the increments telescope.
\end{proof}
The right-hand side is the \emph{total} increment $v$, not ``number of scales times $v$''; this is what removes the logarithm.

\begin{lemma}[generic threshold guarantee]\label{lem:G3}
Let $k\ge2$, $H\ge16k$, curves nonnegative, nondecreasing, discretely concave with $g_i(0)=0$, and a designated arm $\ast$. Suppose an integer $1\le d\le H/4$ and $v>0$ satisfy $g_\ast(d)\ge2v$. Let $r=H-\lfloor H/2\rfloor$, $V_\ast=\sum_{t\le r}g_\ast(t)$ and $p=\min\{1,H/(32kd)\}$. Then $\RMP(H)$ satisfies $\E[\ALG_H]\ge\frac34\min\{rv,\ pV_\ast\}$.
\end{lemma}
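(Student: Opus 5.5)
The plan is to split according to the event $B=\{\exists i\ne\ast: g_i(n_i)\ge v\}$ of Lemma~\ref{lem:G1}, evaluated at the end of the probe phase. On $B$ the committed arm $c$ has $g_c(n_c)\ge v$, because the commit rule maximizes $g_i(n_i)$. Since $g_c$ is nondecreasing, the $r$ commit pulls earn at least $rv$.

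On $\neg B$, I want to show that $\ast$ has usually been probed deeply, i.e.\ $n_\ast\ge d$. Then $g_\ast(n_\ast)\ge g_\ast(d)\ge2v$, which beats every other arm (all are $<v$ on $\neg B$), so $c=\ast$. The commit phase then earns $\sum_{j\le r}g_\ast(n_\ast+j)\ge V_\ast$. Dropping the nonnegative probe rewards, it suffices to establish
\[
\E[\ALG_H]\ \ge\ rv\,\Pr(B)\;+\;V_\ast\,\Pr(\neg B,\ n_\ast\ge d).
\]

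The core step bounds $\Pr(\neg B,\ Z_\ast\le p,\ n_\ast<d)\le p/4$.

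\emph{Key of $\ast$ stays high.} By concavity, every increment of $\ast$ with index $t\le d$ satisfies $\delta_\ast(t)\ge g_\ast(d)/d\ge 2v/d$. So when $Z_\ast\le p$, every such item of $\ast$ has key at least $\lambda:=2v/(dp)$.

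\emph{All consumed items are large.} Suppose $n_\ast<d$. Then some item of $\ast$ with key $\ge\lambda$ was never consumed. By the merge structure of Lemma~\ref{lem:G1}, all $N-k$ consumed items then have key $\ge\lambda$.

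\emph{Counting the other arms.} Fewer than $d$ of these items belong to $\ast$. On $\neg B$, each other arm's curve stays below $v$ over its consumed range, so its consumed increments coincide with the truncated increments $\bar\delta_i$ of Lemma~\ref{lem:G2}. Hence the other arms' consumed items number at most $\sum_{i\ne\ast}L_i$, where $L_i$ is computed with threshold $\lambda$. This requires $\sum_{i\ne\ast}L_i\ge N-k-d$, and the right-hand side is about $3H/16$ using $d\le H/4$ and $k\le H/16$.

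\emph{Markov.} Lemma~\ref{lem:G2} gives $\E\sum_{i\ne\ast}L_i\le kv/\lambda=kdp/2\le H/64$. By Markov the required event has probability at most about $1/12\le1/4$.

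\emph{Independence.} The $L_i$ depend only on $Z_{-\ast}$, which is independent of $Z_\ast$. This yields the bound $p/4$.

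Next I use the monotone coupling. Condition on $Z_{-\ast}$. By Lemma~\ref{lem:G1}, $\neg B$ is a nonincreasing event in $Z_\ast$, and so is $\{Z_\ast\le p\}$. Two decreasing events of a single uniform variable are positively correlated (one contains the other), so $\Pr(\neg B,\,Z_\ast\le p)\ge p\Pr(\neg B)$. Combining with the previous step,
\[
\Pr(\neg B,\ n_\ast\ge d)\ \ge\ p\big(\Pr(\neg B)-\tfrac14\big)_+ .
\]

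Finally, write $\mu=\min\{rv,pV_\ast\}$. Then
\[
\E[\ALG_H]\ \ge\ \mu\Pr(B)+\mu\big(\Pr(\neg B)-\tfrac14\big)_+ .
\]
If $\Pr(\neg B)\ge1/4$, this gives $\ge\frac34\mu$. Otherwise $\Pr(B)>3/4$, and the first term alone gives $\frac34\mu$.

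The main obstacle is the merge bookkeeping in the core step. I need to check that the index shift between items $Q_{i,t}$ and the pulls $n_i$ (the free first pull) is handled correctly. I also need to confirm that on $\neg B$ the consumed items of other arms genuinely involve only untruncated increments, possibly up to one boundary item per arm. If it contributes anything, that per-arm slack of at most $k$ items must still fit inside the Markov margin; the $3H/16$ versus $H/64$ gap is meant to absorb it together with the rounding of $N$.
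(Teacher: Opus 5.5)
\textbf{Gap in the core step.} The argument relies on an inequality that does not hold. Concavity does not give $\delta_\ast(t)\ge g_\ast(d)/d$ for every $t\le d$. The quantity $g_\ast(d)/d$ is the \emph{average} of the first $d$ increments, and because the increments are nonincreasing, the late ones lie below that average and may even vanish. As a result, $\{n_\ast\ge d\}$ is the wrong success event, and the claimed bound $\Pr(\neg B,\,Z_\ast\le p,\,n_\ast<d)\le p/4$ is false.

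For a counterexample, take $g_\ast(t)=\min\{t,1\}$, $v=\tfrac12$, $d=\lfloor H/4\rfloor$, and $g_i(t)=t/(4H)$ for every $i\ne\ast$. Then $B$ never occurs. Once $\ast$ has been pulled twice, its key has first coordinate $0$ while every other key is positive, so $n_\ast\le2<d$ surely. The probability you bound by $p/4$ therefore equals $p$, and your lower bound $rv\Pr(B)+V_\ast\Pr(\neg B,\,n_\ast\ge d)$ is $0$. Yet $\RMP$ does commit to $\ast$ on this instance, since $g_\ast(n_\ast)=1>g_i(n_i)$.

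\textbf{The repair, which is what the paper does.} Use ``$\ast$ reaches $v$'', i.e.\ $g_\ast(n_\ast)\ge v$, as the success event. On $\neg B$ this already makes $\ast$ the unique maximiser of the commit rule, and it still gives commit reward at least $V_\ast$. On the complement, $g_\ast(n)<v$ for all $n\le n_\ast$, so $n_\ast<d$. The correct concavity bound for the unconsumed head item is then $\delta_\ast(n_\ast)\ge\frac{g_\ast(d)-g_\ast(n_\ast)}{d-n_\ast}\ge\frac vd$.

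Hence on $\{Z_\ast\le p\}$ every consumed item has key at least $\lambda=v/(dp)\ge32kv/H$. This gives $\E\sum_{i\ne\ast}L_i\le(k-1)dp\le H/32$, while the count forces $\sum_{i\ne\ast}L_i\ge N-k-d+2\ge3H/16$. Markov then bounds the failure probability by $1/6$. The rest of your argument goes through unchanged: independence from $Z_{-\ast}$, the one-variable monotone coupling, and the final case split.

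The repaired proof is essentially the paper's. The only difference is bookkeeping: the paper conditions on the Markov event $C=\{\sum_{i\ne\ast}L_i\le H/8\}$ before coupling, rather than subtracting the bad event afterwards. Your worry about boundary items is unnecessary. The consumed items of arm $i$ are $Q_{i,1},\dots,Q_{i,n_i-1}$, and $\neg B$ gives $g_i(n_i)<v$, so all of them are untruncated.
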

\begin{proof}
Let $\lambda=32kv/H$ and $C=\{\sum_{i\ne\ast}L_i\le H/8\}$. By Lemma~\ref{lem:G2} and Markov, $\Pr(C)\ge3/4$; $C$ depends only on $Z_{-\ast}$. Let $A=\{Z_\ast\le p\}$ and $B$ be as in Lemma~\ref{lem:G1}.

\emph{Step 1: on $A\cap C\cap\neg B$ the committed arm is $\ast$.} Suppose $\ast$ has not reached value $v$ by the end of probing; then $n_\ast<d$ and, for all $n\le n_\ast$, concavity gives $\delta_\ast(n)\ge\frac{g_\ast(d)-g_\ast(n)}{d-n}\ge v/d$, so on $A$ the key of $\ast$ is at least $(v/d)/p\ge\lambda$ throughout. Every extra probe given to another arm $i$ therefore used an item with key $\ge\lambda$, and on $\neg B$ (all observed values of the other arms below $v$) these items have $\bar\delta_i=\delta_i$; hence the extra probes on other arms number at most $\sum_{i\ne\ast}L_i\le H/8$. The total number of probes is then at most $k+d+H/8\le\frac H{16}+\frac H4+\frac H8=\frac{7H}{16}<\lfloor H/2\rfloor=N$ for $H\ge32$, a contradiction. So $\ast$ reaches $v$ while every other arm stays below $v$, and $\ast$ is committed.

\emph{Step 2: correlation.} Fix $Z_{-\ast}$ (which fixes $C$). By Lemma~\ref{lem:G1}, $\mathbf 1_{\neg B}$ is nonincreasing in $Z_\ast$, so $\int_0^p\mathbf 1_{\neg B}(z)\,dz\ge p\int_0^1\mathbf 1_{\neg B}(z)\,dz$; integrating over $Z_{-\ast}$ on $C$, $\Pr(A\cap C\cap\neg B)\ge p\Pr(C\cap\neg B)\ge p\,(3/4-\Pr(B))_+$.

\emph{Step 3: rewards.} On $B$ the committed arm has current value $\ge v$, so the commit phase earns $\ge rv$; on $A\cap C\cap\neg B$ it earns $\sum_{j\le r}g_\ast(n_\ast+j)\ge V_\ast$. With $b=\Pr(B)$, $\E[\ALG_H]\ge rv\,b+pV_\ast(3/4-b)_+\ge\frac34\min\{rv,pV_\ast\}$ (the expression is linear in $b$ on $[0,3/4]$).
\end{proof}

\begin{theorem}[prior-free, known horizon]\label{thm:TG}
Let $k\ge2$, $\beta\in(0,1]$, $T\ge16k$, and suppose the optimal arm satisfies $\LE(\beta)$. Then $\RMP(T)$, which reads neither $m$ nor $\beta$, satisfies
\[
\E[\ALG_T]\ \ge\ \frac{3}{256}\,R\,k^{-\beta/(1+\beta)} .
\]
Moreover, for every integer $1\le d\le T/4$ simultaneously,
\[
\frac{\E[\ALG_T]}{R}\ \ge\ \frac{3}{1024}\min\Big\{\frac{f_\ast(d)}{m},\ \frac{T}{kd}\Big\}\qquad\text{(depth-oracle guarantee)}.
\]
\end{theorem}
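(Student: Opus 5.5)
The plan is to derive both displays from the generic threshold guarantee, Lemma~\ref{lem:G3}, applied with $H=T$, $g_i=f_i$ and $\ast$ the comparator. The hypotheses $k\ge2$ and $H\ge16k$ hold (so $H\ge32$, as Step~1 there needs). Neither $m$ nor $\beta$ appears in $\RMP$, so everything below is analysis only.

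\emph{Depth-oracle guarantee.} Fix an integer $1\le d\le T/4$. If $f_\ast(d)=0$ the claim is trivial, so assume $f_\ast(d)>0$. Take $v=f_\ast(d)/2$, so that $g_\ast(d)\ge2v$. Lemma~\ref{lem:G3} gives $\E[\ALG_T]\ge\frac34\min\{rv,\,pV_\ast\}$, and I bound each term.
\begin{itemize}[leftmargin=*]
\item Since $r=T-\lfloor T/2\rfloor\ge T/2$ and $R\le mT$ (Lemma~\ref{lem:L1}(3)), we get $rv\ge Tf_\ast(d)/4\ge R\,f_\ast(d)/(4m)$.
\item Since $r\ge\lceil T/2\rceil$, Lemma~\ref{lem:L1}(4) gives $V_\ast=P_\ast(r)\ge R/4$.
\item By definition $p=\min\{1,T/(32kd)\}$.
\end{itemize}
Hence
\[
\frac{\E[\ALG_T]}{R}\ \ge\ \frac34\min\Big\{\frac{f_\ast(d)}{4m},\ \frac14,\ \frac{T}{128kd}\Big\}.
\]
Because $f_\ast(d)\le m$, the middle term never binds. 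The right-hand side is therefore at least $\frac{3}{512}\min\{f_\ast(d)/m,\,T/(kd)\}$, which is stronger than the stated $\frac3{1024}$. Nothing in this step depends on $\beta$, which is why it holds simultaneously for all $d$.

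\emph{The $\beta$-bound.} Now specialise $d$ using $\LE(\beta)$, which gives $f_\ast(d)/m\ge(d/T)^\beta$. I would take $d\approx T/K_\beta$, matching the two terms, since then $(d/T)^\beta\approx K_\beta^{-\beta}=k^{-\gamma}$ and $T/(kd)\approx K_\beta/k=k^{-\gamma}$. This must be squared with two constraints.
\begin{itemize}[leftmargin=*]
\item \emph{Integrality.} Since $T/K_\beta\ge16k/k=16$, the choice $d=\lfloor T/(cK_\beta)\rfloor$ with $c\le4$ is at least $1$ and at least $T/(2cK_\beta)$.
\item \emph{The constraint $d\le T/4$.} This forces $c\ge4/K_\beta$. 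When $K_\beta\ge4$ one can take $c=1$. When $K_\beta<4$ (small $k$ or small $\beta$), use $d=\lfloor T/4\rfloor$, where $T/(kd)\ge4/k$ and $(d/T)^\beta\ge8^{-\beta}$; both are comparable to $k^{-\gamma}$ because $k$ is then bounded by $K_\beta^{1+\beta}<16$.
\end{itemize}

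Plugging into the refined three-term minimum rather than into the simplified bound is how I would aim for the constant $\frac{3}{256}$. For $K_\beta\ge4$ with $c=1$, the middle term $\frac14$ is slack. The envelope term gives $\frac{f_\ast(d)}{4m}\ge\frac14(d/T)^\beta\ge\frac14\,(2K_\beta)^{-\beta}\ge\frac18k^{-\gamma}$. The last term gives $\frac{T}{128kd}\ge\frac{K_\beta}{128k}=\frac{k^{-\gamma}}{128}$. So the last term dominates the loss, and this yields $\frac{3}{512}k^{-\gamma}$.

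\emph{Main obstacle.} The expected difficulty is purely the constant bookkeeping: reaching exactly $\frac3{256}$ rather than $\frac3{512}$. My first attempt would rebalance $v$ and $d$, for instance taking $d=\lfloor T/(cK_\beta)\rfloor$ with a smaller $c$ (a larger $d$). This trades the factor $8^{-\beta}$ lost on the envelope term against the $\frac1{128}$ on the $p$-term. If that fails, I would sharpen $V_\ast$ via the $\LE(\beta)$ prefix sum instead of Lemma~\ref{lem:L1}(4). The order $k^{-\gamma}$ is secure either way; only the numerical constant is at stake.
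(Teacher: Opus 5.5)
Your depth-oracle argument is correct. Because you use $V_\ast\ge R/4$ from Lemma~\ref{lem:L1}(4), it even yields the constant $\frac3{512}$, which is better than the stated $\frac3{1024}$; the paper uses the weaker bound $V_\ast\ge R/8$.

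\textbf{The gap: the first display.} As written, you only prove $\E[\ALG_T]\ge\frac3{512}Rk^{-\gamma}$. That is a factor $2$ short of the claimed $\frac3{256}$, and the repair you try first goes the wrong way. With $d=\lfloor T/K_\beta\rfloor$ the two terms are:
\begin{itemize}[leftmargin=*]
\item the $p$-term $T/(128kd)\approx k^{-\gamma}/128$, which is the binding one;
\item the envelope term $\frac14(d/T)^\beta\ge k^{-\gamma}/8$, which is slack.
\end{itemize}
Taking a \emph{larger} $d$ (smaller $c$) raises the slack term and lowers the binding one, so it makes things worse.

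Your backup of sharpening $V_\ast$ through $\LE(\beta)$ cannot supply a uniform factor $2$ either. At $\beta=1$ the prefix-sum bound gives $V_\ast\ge R/(2^{1+\beta}(1+\beta))=R/4$, no better than Lemma~\ref{lem:L1}(4). For a linear comparator $P_\ast(\lceil T/2\rceil)\approx R/4$, so that bound is essentially tight.

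\textbf{The fix.} The missing step is to \emph{shrink} $d$: take $c=4$, i.e.\ $d=\lfloor T/(4K_\beta)\rfloor$.
\begin{itemize}[leftmargin=*]
\item Your integrality bullet already allows this choice, since $T/(4K_\beta)\ge4$.
\item It satisfies $d\le T/4$ for every $K_\beta\ge1$, so the case split on $K_\beta<4$ disappears.
\item From $d\ge T/(8K_\beta)$ and $\LE(\beta)$: $\frac{f_\ast(d)}{4m}\ge\frac14(8K_\beta)^{-\beta}=\frac14 8^{-\beta}k^{-\gamma}\ge\frac{k^{-\gamma}}{32}$.
\item From $d\le T/(4K_\beta)$: $\frac{T}{128kd}\ge\frac{K_\beta}{32k}=\frac{k^{-\gamma}}{32}$.
\end{itemize}
Hence $\E[\ALG_T]\ge\frac34\cdot\frac{k^{-\gamma}}{32}R=\frac3{128}Rk^{-\gamma}$, which implies the statement.

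\textbf{The paper's route.} The paper does essentially this, but it decouples $v$ from $f_\ast(d)$. It fixes $v=mq/16$ and $d=\lceil T/(8K)\rceil\le T/(4K)$, and checks $f_\ast(d)\ge mq8^{-\beta}\ge2v$ from $\LE(\beta)$. It then gets:
\begin{itemize}[leftmargin=*]
\item $rv\ge Rq/32$;
\item $p\ge q/8$, hence, with $V_\ast\ge R/8$, $pV_\ast\ge Rq/64$.
\end{itemize}
This gives $\frac34\cdot\frac{Rq}{64}=\frac{3Rq}{256}$.
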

\begin{proof}
If several arms are cumulatively optimal, fix one that satisfies $\LE(\beta)$ as the comparison arm $i^\circ$ and let $m^\circ=f_{i^\circ}(T)$ play the role of $m$ in the analysis (the algorithm reads neither; the other optimal arms fall into the event $B$); we keep writing $\ast$ and $m$. Let $q=k^{-\gamma}$, $K=kq$, $d=\lceil T/(8K)\rceil\le T/(4K)\le T/4$ (as $T\ge16k\ge8K$) and $v=mq/16$. By $\LE(\beta)$, $f_\ast(d)\ge m(d/T)^\beta\ge m(8K)^{-\beta}=mq8^{-\beta}\ge mq/8=2v$. Also $p\ge K/(8k)=q/8$, $rv\ge(T/2)(mq/16)\ge Rq/32$ and $V_\ast\ge R/8$ by Lemma~\ref{lem:L1}(2). Lemma~\ref{lem:G3} gives $\frac34\min\{Rq/32,Rq/64\}=3Rq/256$. For the depth-oracle bound take $v=f_\ast(d)/2$: $rv\ge Rf_\ast(d)/(4m)$ and $pV_\ast\ge\frac R8\min\{1,T/(32kd)\}$, and $f_\ast(d)/m\le1$.
\end{proof}

\begin{lemma}[conditioning on history]\label{lem:G4}
Suppose a phase of length $H\ge16k$ starts after $t_0$ pulls with $t_0+H\le T$ and arm $i$ pulled $a_i$ times, and runs $\RMP(H)$ with fresh weights on the local curves $g_i(t)=f_i(a_i+t)$, $g_i(0)=0$. These curves are nondecreasing and discretely concave (the first increment $f_i(a_i+2)-f_i(a_i+1)\le f_i(1)\le g_i(1)$; later ones inherit), $g_\ast(t)\ge f_\ast(t)\ge m(t/T)^\beta$, and
\[
\E[\text{phase reward}\mid\text{history}]\ \ge\ \frac{3}{256}\,R\,q_\beta\,(H/T)^{1+\beta}.
\]
\end{lemma}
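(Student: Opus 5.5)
We first verify the two structural claims about the local curves and then rerun the proof of Theorem~\ref{thm:TG} inside the phase, with the phase length $H$ in place of $T$ and the global envelope rescaled to the phase.

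\emph{Structure of the local curves.} The local increments are $g_i(1)=f_i(a_i+1)$ and $g_i(t)-g_i(t-1)=f_i(a_i+t)-f_i(a_i+t-1)$ for $t\ge2$. The later increments form a nonincreasing tail of the increments of $f_i$, so they are nonincreasing and nonnegative. The only point to check is that the second increment does not exceed the first. Here $f_i(a_i+2)-f_i(a_i+1)\le\Delta_1=f_i(1)\le f_i(a_i+1)=g_i(1)$, using discrete concavity and then monotonicity. By monotonicity, $g_\ast(t)=f_\ast(a_\ast+t)\ge f_\ast(t)\ge m(t/T)^\beta$, where the last step is $\LE(\beta)$ for the global comparator. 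Since $t_0+H\le T$, every index $a_i+t$ with $t\le H$ lies in $\{0,\dots,T\}$.

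\emph{Applying Lemma~\ref{lem:G3} inside the phase.} Fresh weights make Lemma~\ref{lem:G3} applicable conditionally on the history, with $H\ge16k$ and designated arm $\ast$. Rewrite the envelope as $g_\ast(t)\ge m_H(t/H)^\beta$ with $m_H:=m(H/T)^\beta$. The choices of Theorem~\ref{thm:TG}, now with $H$ in place of $T$, are
\[
q=k^{-\gamma},\qquad K=kq,\qquad d=\lceil H/(8K)\rceil\le H/4,\qquad v=m_Hq/16 .
\]
These give $g_\ast(d)\ge m_Hq/8=2v$ and $p\ge q/8$, exactly as before.

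\emph{Bounding the two quantities in Lemma~\ref{lem:G3}.} First, with $r\ge H/2$ and $R\le mT$ from Lemma~\ref{lem:L1}(3),
\[
rv\ \ge\ \frac H2\cdot\frac{m(H/T)^\beta q}{16}\ \ge\ \frac{Rq}{32}\Big(\frac HT\Big)^{1+\beta}.
\]
Second, $V_\ast=\sum_{t\le r}g_\ast(t)\ge\sum_{t\le r}f_\ast(t)$. By Lemma~\ref{lem:L1}(2) with $u=r\le T$, this is at least $(r/T)^2R$. Because $r\ge H/2$ and $H/T\le1$,
\[
(r/T)^2\ \ge\ \tfrac14(H/T)^2\ \ge\ \tfrac14(H/T)^{1+\beta},
\]
so $V_\ast\ge\frac R4(H/T)^{1+\beta}$. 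Combining with $p\ge q/8$ gives $pV_\ast\ge\frac{Rq}{32}(H/T)^{1+\beta}$, which is stronger than needed.

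\emph{Conclusion and caveats.} Lemma~\ref{lem:G3} yields at least $\frac34\cdot\frac{Rq}{32}(H/T)^{1+\beta}$, and $\frac{3}{128}\ge\frac{3}{256}$, which proves the claim. The step that needs the most care is the use of the \emph{global} $R$ via $V_\ast$. One should not try to use the $\LE(\beta)$-based sum bound here, because it only gives $(H/T)^\beta$-type factors times $H$. The quadratic bound from Lemma~\ref{lem:L1}(2) applied to the prefix of length $r$ suffices, because $\beta\le1$. One should also confirm that Step~1 of Lemma~\ref{lem:G3} needs only $d\le H/4$ and $H\ge32$, both of which hold since $H\ge16k\ge32$.
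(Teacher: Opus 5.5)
Your setup matches the paper's and is correct: the structure of the local curves, the rescaled scale $m_H=m(H/T)^\beta$, the choices $d=\lceil H/(8K)\rceil$ and $v=m_Hq/16$, the checks $g_\ast(d)\ge2v$ and $p\ge q/8$, and the bound $rv\ge\frac{Rq}{32}(H/T)^{1+\beta}$. The gap is in the bound on $V_\ast$. You write $\tfrac14(H/T)^2\ge\tfrac14(H/T)^{1+\beta}$ ``because $H/T\le1$'', but this inequality goes the other way. For $x=H/T\in(0,1]$ and $1+\beta\le2$ one has $x^2\le x^{1+\beta}$, strictly when $\beta<1$ and $H<T$; for example $\beta=\tfrac12$, $x=\tfrac14$ gives $\tfrac1{16}<\tfrac18$. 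So the quadratic bound of Lemma~\ref{lem:L1}(2) only gives $pV_\ast\ge\frac{Rq}{32}(H/T)^2$. Lemma~\ref{lem:G3} then yields $\frac{3}{128}Rq(H/T)^2$, which is weaker than the claimed $\frac{3}{256}Rq(H/T)^{1+\beta}$ whenever $(H/T)^{1-\beta}<\tfrac12$. The lemma is asserted for every phase with $t_0+H\le T$, not only for $H>T/4$, so this does not prove it. (Your weaker bound would still suffice for Theorem~\ref{thm:TGp}, where $H>T/4$, but not for the lemma as stated. Ending with a constant twice the stated one was a warning sign.)

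The route you explicitly set aside is the one that works, and it is the paper's. Sum the envelope:
\[
V_\ast\ \ge\ \frac m{T^\beta}\sum_{t\le r}t^\beta\ \ge\ \frac{m\,r^{1+\beta}}{(1+\beta)T^\beta}\ \ge\ \frac{mT}{2^{1+\beta}(1+\beta)}\Big(\frac HT\Big)^{1+\beta}\ \ge\ \frac R8\Big(\frac HT\Big)^{1+\beta},
\]
using $r\ge H/2$, $R\le mT$ and $2^{1+\beta}(1+\beta)\le8$. The ``$H$ times $(H/T)^\beta$'' you were worried about is exactly $T(H/T)^{1+\beta}$, and $mT\ge R$ converts it into the right form. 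Then $pV_\ast\ge\frac{Rq}{64}(H/T)^{1+\beta}$, which is the smaller term in Lemma~\ref{lem:G3}. Since $\frac34\cdot\frac1{64}=\frac3{256}$, this gives precisely the stated constant.
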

\begin{proof}
Apply Lemma~\ref{lem:G3} with $d=\lceil H/(8K)\rceil$ and $v=\frac{mq}{16}(H/T)^\beta$; now $rv\ge\frac{Rq}{32}(H/T)^{1+\beta}$ and $V_\ast\ge\frac{m}{T^\beta}\sum_{t\le r}t^\beta\ge\frac{m\,r^{1+\beta}}{(1+\beta)T^\beta}\ge\frac R8(H/T)^{1+\beta}$ using $2^{1+\beta}(1+\beta)\le8$.
\end{proof}

\begin{theorem}[prior-free anytime, $T\ge16k$]\label{thm:TGp}
Run $\RMP(H_j)$ in phases $H_j=16k\cdot2^j$ with fresh weights, without resetting the true pull counts. The algorithm reads only $k$, and for every $\beta\in(0,1]$, every instance whose optimal arm satisfies $\LE(\beta)$, and every $T\ge16k$, $\E[\ALG_T]\ge\frac{3}{4096}Rq_\beta$.
\end{theorem}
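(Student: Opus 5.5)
The plan is to follow the proof of Theorem~\ref{thm:anytime}, with the conditional per-phase bound of Lemma~\ref{lem:G4} in place of Lemma~\ref{lem:L5}. Fix $T\ge16k$, $\beta$, and an instance whose optimal arm satisfies $\LE(\beta)$. As in Theorem~\ref{thm:TG}, choose as comparator a cumulatively optimal arm satisfying $\LE(\beta)$. The phases have lengths $H_j=16k\cdot2^j$, so the first $J+1$ phases occupy $16k(2^{J+1}-1)$ steps. Since $T\ge16k$, at least phase $0$ completes.

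Let $H=H_J$ be the length of the last phase that ends within the horizon. Then
\[
16k(2^{J+1}-1)=2H-16k\le T<16k(2^{J+2}-1)=4H-16k<4H,
\]
so $H/T>1/4$. Phase $J$ starts after $t_0=16k(2^J-1)$ pulls and satisfies $t_0+H\le T$ and $H\ge16k$, which are exactly the hypotheses of Lemma~\ref{lem:G4}. Its weights are drawn fresh and independently of the history, and the lemma conditions on the history, i.e.\ on the counts $a_i$. It therefore gives
\[
\E[\text{phase }J\text{ reward}\mid\text{history}]\ \ge\ \tfrac{3}{256}Rq_\beta(H/T)^{1+\beta}.
\]
All other phases, and the incomplete final phase, contribute nonnegative rewards. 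Taking expectations and using $(H/T)^{1+\beta}\ge(1/4)^{2}=1/16$ (as $\beta\le1$ and $H/T<1$) gives $\E[\ALG_T]\ge\frac{3}{4096}Rq_\beta$.

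The remaining work is checking that Lemma~\ref{lem:G4} really applies. First, the local curves $g_i(t)=f_i(a_i+t)$ must stay within the domain $\{0,\dots,T\}$; this holds because $t_0+H\le T$ bounds $a_i+H$. Second, the envelope bound $g_\ast(t)\ge f_\ast(t)\ge m(t/T)^\beta$ must refer to the global horizon $T$ and not to $H$; this is already built into Lemma~\ref{lem:G4} through its factor $(H/T)^{1+\beta}$. The one point I expect to need care is that the algorithm does not know $T$, so each phase runs $\RMP(H_j)$ with its own internal budget $N=\lfloor H_j/2\rfloor$. Lemma~\ref{lem:G4} is stated for exactly this, so no new argument should be required. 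Throughout, the algorithm reads only $k$.
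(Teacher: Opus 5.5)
Your proposal is correct and follows the paper's own proof: you bound the last completed phase by $H_J>T/4$ using $\sum_{\ell\le J+1}H_\ell=4H_J-16k>T$, apply the conditional phase bound of Lemma~\ref{lem:G4} to that phase, and finish with $(H/T)^{1+\beta}\ge4^{-(1+\beta)}\ge1/16$. One small slip: the parenthetical ``$H/T<1$'' fails at $T=16k$, where $H=T$; the bound only needs $1/4<H/T\le1$ and $1+\beta\le2$, so the conclusion is unaffected.
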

\begin{proof}
The last completed phase has length $H_j>T/4$ since $\sum_{\ell\le j+1}H_\ell=4H_j-16k>T$; apply Lemma~\ref{lem:G4} and $4^{-(1+\beta)}\ge1/16$.
\end{proof}

\begin{corollary}[price of priors, noiseless, $T\ge16k$]
Define the price of priors $\varphi$ as the smallest factor by which an algorithm that reads neither $m$ nor $\beta$ falls short of $k^{\beta/(1+\beta)}$, uniformly over $\beta$. Then $\frac13\le\varphi\le128$ with $T$ known and $\frac13\le\varphi\le2048$ with $T$ unknown (the lower bound is the $\beta=1$ class of Corollary~\ref{cor:L4}). Every combination of unknown $m$, $\beta$, $T$ has a $\Theta(1)$ price.
\end{corollary}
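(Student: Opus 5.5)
The corollary packages two upper bounds and one lower bound, so I would prove it in three short steps. The upper bounds come from Theorems~\ref{thm:TG} and~\ref{thm:TGp}; the lower bound comes from Corollary~\ref{cor:L4}(a). Throughout, $\varphi$ is the quantity $\sup_T\sup_\beta\sup_I R\,q_\beta/\E[\ALG_T]$ (the noiseless case $\varepsilon=0$ of $\phi_\varepsilon$), with the infimum taken over algorithms that read neither $m$ nor $\beta$. The upper bounds are witnessed by one fixed algorithm; the lower bound is instantiated at $\beta=1$.

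\emph{Known horizon.} Take the family $A_T=\RMP(T)$. This choice does not depend on $\beta$, $m$ or the instance. For every $\beta$ and every $I\in\mathcal I_\beta(k,T)$, fix the comparator to be a cumulatively optimal arm satisfying $\LE(\beta)$. Theorem~\ref{thm:TG} then gives $R\,q_\beta/\E[\ALG_T]\le256/3<128$. Since this holds uniformly in $T\ge16k$, $\beta$ and $I$, we get $\varphi\le128$. For $k=1$ the algorithm pulls the only arm, so the ratio is $1$.

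\emph{Unknown horizon.} Take the single anytime policy of Theorem~\ref{thm:TGp}. That theorem gives $R\,q_\beta/\E[\ALG_T]\le4096/3$, and $4096/3\le2048$. This bound holds for every $T\ge16k$ and every $\beta$, so $\varphi\le2048$ with $T$ unknown.

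\emph{Lower bound.} Fix any algorithm, whether or not it reads $T$. Set $\beta=1$, so that $q_1=k^{-1/2}$ and $\mathcal I_1$ is the whole concave class. Corollary~\ref{cor:L4}(a) supplies, at each $T$, an instance with $R/\E[\ALG_T]\ge\sqrt k/3$, that is, $R\,q_1/\E[\ALG_T]\ge1/3$. Its construction uses the linear good arm $g(t)=t/T$, which satisfies $\LE(1)$, so the instance lies in $\mathcal I_1$ and the bound transfers to $\varphi$.

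The only delicate point is quantifier order. A single algorithm must be fixed before $\beta$ is chosen, and Theorems~\ref{thm:TG} and~\ref{thm:TGp} supply exactly such a $\beta$-free algorithm. The lower bound is harmless because it holds even for algorithms told $\beta=1$ and $m$. The final sentence of the corollary, that every combination of unknown $m$, $\beta$, $T$ has a $\Theta(1)$ price, then follows from monotonicity. Knowing more parameters can only decrease the infimum, so each such $\varphi$ is at most $2048$. Corollary~\ref{cor:L4}(a) applies to every algorithm regardless of what it reads, so each such $\varphi$ is at least $1/3$.
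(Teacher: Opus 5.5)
Your proof is correct and follows the paper's own route: the corollary is read off from Theorem~\ref{thm:TG} (ratio at most $256/3\le128$), Theorem~\ref{thm:TGp} (ratio at most $4096/3\le2048$) and Corollary~\ref{cor:L4}(a) at $\beta=1$, which gives $R\,q_1/\E[\ALG_T]\ge1/3$. Your handling of quantifier order, and your monotonicity argument for the other knowledge combinations, are exactly what the $\Theta(1)$ claim needs.
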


\begin{remark}[comparisons]\label{rem:comparisons}
(i) A one-level randomization of $\PC$ over $O(\log k)$ sample sizes ($s_j=\min\{k,\lfloor\sqrt k\rfloor2^j\}$ for $j=0,\dots,J$ with $J=\min\{j:s_j=k\}$ and $h=J+1$, chosen uniformly) gives $\E\ge R/(16h\,k^\gamma)$ with $h\le\lceil\frac12\log_2k\rceil+2$ for $T\ge2k$; this is the same ``guess the level'' technique as \cite[Thm.~11]{BR25} and, on the common horizon range $T\ge16k$, is dominated in order by Theorem~\ref{thm:TG}; we record it only as a comparison. (ii) On the pure truncation families of Lemma~\ref{lem:L4}, a platform-detection rule (visit the arms in a uniformly random order, pulling each until a zero increment is observed; if the order is exhausted, spend the remaining probes on the current best candidate; always keep the arm with the largest observed value and commit to it for the second half; instances with $R_g=0$ are trivial) achieves $\E/R_g\ge\frac1{96}\min\{1,\,Tg(d)/R_g+T/(k(d+1))\}$ for every $(g,d)$ and $T\ge4$, matching Lemma~\ref{lem:L4} up to constants (the case $d+1>\lfloor T/2\rfloor$, including $d=T$, is handled first: the first probed arm already guarantees $Tg(d)/6$ and $Tg(d)/R_g\ge d/T\ge1/3$; only afterwards is $g(d+1)$ read); keeping the best arm is essential (dropping the good arm when it plateaus loses a factor $\Theta(n)$ on $k=T=n^2$, $g=\min(t,n)$, decoys $\min(t,1)$).
\end{remark}

\subsection{All horizons: independent sparse inclusion}

$\RMP$ pulls every arm once and hence needs $H\ge16k$. Restricting it to a random subset of size $\lfloor T/16\rfloor$ conditioned on containing $\ast$ yields only $\E\ge\frac3{256}R\,s^{1/(1+\beta)}/k$, which is not optimal for $\sqrt k<T<k$. The fix is to include arms \emph{independently} and to account for the ``good decoy'' branch without multiplying by the inclusion probability of $\ast$.

\begin{algorithm}[h]
\caption{$\SRMP(k,H)$ (sparse random-marginal probing). Reads only $k$ and $H$.}
\begin{algorithmic}[1]
\State If $k=1$: pull the arm. If $H<32$: pull a uniformly random arm for all $H$ steps.
\State Otherwise let $N=\lfloor H/2\rfloor$, $r=H-N$, $\theta=\min\{1,H/(128k)\}$; draw independent $X_i\sim\mathrm{Bernoulli}(\theta)$, $Z_i\sim\mathrm{Unif}(0,1)$; $S=\{i:X_i=1\}$.
\State If $S=\emptyset$ or $|S|>N$: pull a uniformly random arm for all $H$ steps.
\State Otherwise initialize each $i\in S$ with one pull (counted in $N$); spend the remaining probes greedily on the key $(\delta_i(n_i)/Z_i,1/Z_i)$ over $i\in S$; commit the $r$ remaining pulls to $\argmax_{i\in S}(g_i(n_i),1/Z_i)$.
\end{algorithmic}
\end{algorithm}
For the analysis, in all execution paths (including the fallback) the first $N$ pulls are the ``probe prefix'', and $B$ denotes the event that by the end of this prefix some arm other than the designated one has an observed value $\ge v$; on the fallback path, if $B$ occurs the chosen arm is not the designated one and its true value at pull $N$ is at least $v$, so the last $r$ pulls earn at least $rv$ as well.

\begin{lemma}[sparse hitting lemma]\label{lem:W44}
Let $H\ge32$, let $\ast$ be a designated arm (not necessarily optimal) with $g_\ast(d)\ge2v$ for some $v>0$ and integer $1\le d\le\lfloor H/4\rfloor$. Then $\SRMP$ satisfies
\[
\E[\ALG_H]\ \ge\ \tfrac12\min\{rv,\ \alpha P_\ast(r)\},\qquad \alpha=\min\Big\{\theta,\ \frac{H}{32kd}\Big\}.
\]
\end{lemma}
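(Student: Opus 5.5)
The plan is to mirror the proof of Lemma~\ref{lem:G3}, with the random set $S$ added. The key change is that the ``good'' branch is charged only to the inclusion of $\ast$ and the smallness of $Z_\ast$, never to the size of $S$.

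Fix $\lambda=32kv/H$, so that $(v/d)/p\ge\lambda$ with $p=\min\{1,H/(32kd)\}$. Let $C$ be the event
\[
\sum_{i\ne\ast}X_iL_i\le H/16,
\]
where $L_i$ is as in Lemma~\ref{lem:G2}. By Lemma~\ref{lem:G2} and independence of $X_i$ and $Z_i$, the mean of this sum is at most $(k-1)\theta v/\lambda\le\theta H/32$. Also $|S\setminus\{\ast\}|$ has mean at most $k\theta\le H/128$. Markov's inequality on both then gives an event $C'$ of probability at least $1/2$, depending only on $(X_i,Z_i)_{i\ne\ast}$. On $C'$ the extra probes on decoys together with the initial pulls leave room for $\ast$ to reach depth $d$, and we have $|S|\le N$. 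The constants need tuning: we need $|S|+d+H/16<N$, which holds when $|S|\le H/64+1$ and $d\le H/4$, with $H\ge32$ absorbing the $+1$.

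Let $A=\{X_\ast=1,\ Z_\ast\le p\}$, so $\Pr(A\mid\cdot)=\theta p\ge\alpha\cdot\min\{1,\dots\}$. I will want $\Pr(A)\ge\alpha$ up to the factor absorbed by $\tfrac12$. Note that $\theta p\ne\min\{\theta,H/(32kd)\}$ in general. If $\theta<1$ then $H<128k$, and the bound needs $\theta\cdot\min\{1,H/(32kd)\}$ versus $\min\{\theta,H/(32kd)\}$. This is the main obstacle. I would fix it by choosing the $Z$-threshold $p'=\min\{1,\alpha/\theta\}$ instead, and compensating in the key bound $(v/d)/p'\ge\lambda'$ by setting $\lambda'=v\theta/(d\alpha)$. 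Then Lemma~\ref{lem:G2} gives decoy work with expectation $k\theta v/\lambda'=kd\alpha\le H/32$. This is exactly why $\alpha$ is a minimum with $H/(32kd)$, and it is where I expect the bookkeeping to bite.

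Step 1, as in Lemma~\ref{lem:G3}: on $A\cap C'\cap\neg B$ the algorithm does not fall back, $\ast$'s key stays at least $\lambda'$ until $g_\ast\ge v$, and the decoys stay below $v$. Hence $\ast$ is committed and earns at least $P_\ast(r)$ by monotonicity.

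Step 2 is the monotone coupling of Lemma~\ref{lem:G1}, applied to the merge over $S$. Conditional on $X_\ast=1$ and on everything else, $\mathbf 1_{\neg B}$ is nonincreasing in $Z_\ast$, which gives $\Pr(A\cap C'\cap\neg B)\ge\alpha(\Pr(C')-\Pr(B))_+$. One subtlety is that $B$ must be defined across all paths, including the fallback and the case $X_\ast=0$; I will use the convention stated before the lemma, whereby on $B$ the last $r$ pulls earn at least $rv$ on every path. For monotonicity on the non-fallback path, the fallback event $\{|S|>N\}$ does not depend on $Z_\ast$, so monotonicity is unaffected.

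Step 3 combines the two branches. With $b=\Pr(B)$, we get $\E\ge rv\,b+\alpha P_\ast(r)(1/2-b)_+\ge\frac12\min\{rv,\alpha P_\ast(r)\}$ by linearity in $b$.
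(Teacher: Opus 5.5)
Your architecture is the paper's. Your repaired threshold $p'=\alpha/\theta=\min\{1,H/(32\theta kd)\}$ is exactly the paper's $p$, so $\Pr(A)=\theta p'=\alpha$ holds with equality, and the ``main obstacle'' reduces to choosing $p$ correctly. (The paper takes $\lambda=32\theta kv/H$; your $\lambda'\ge\lambda$ works too.) One bookkeeping slip remains. With thresholds $H/16$ for the decoy work (mean $\le H/32$) and $H/64$ for $|S\setminus\{\ast\}|$ (mean $\le H/128$), each Markov bound allows failure probability $1/2$, so the union bound gives only $\Pr(C')\ge0$. Use $H/8$ and $H/32$ instead. The budget still closes, since $|S|\le H/32+1\le H/16$ and $H/16+H/4+H/8<\lfloor H/2\rfloor$.

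The genuine gap is in Step~2. Monotonicity in $Z_\ast$ conditionally on $X_\ast=1$ yields only $\Pr(A\cap C'\cap\neg B)\ge\alpha\Pr(C'\cap\neg B\mid X_\ast=1)$. Step~3, however, credits the good-decoy branch with the unconditional $\Pr(B)=\theta\Pr(B\mid X_\ast=1)+(1-\theta)\Pr(B\mid X_\ast=0)$, and nothing in your argument relates the two conditional probabilities. If $\Pr(B\mid X_\ast=0)$ could be $0$, all you could extract is $\tfrac12\min\{\theta rv,\alpha P_\ast(r)\}$. That loses the factor $\theta=H/(128k)$ precisely in the sparse regime $H<128k$ the lemma is designed for, and the rate of Theorem~\ref{thm:W46} for $T<128k$ would not follow.

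The missing idea is monotonicity \emph{across the inclusion step}. Freeze the other arms' randomness and the fallback choice; then on $C'$, switching from $X_\ast=0$ to $X_\ast=1$ cannot increase any other arm's pull count. If $S\setminus\{\ast\}\neq\emptyset$, including $\ast$ adds one stream to the merge and spends one more probe on its initialization. The other arms therefore consume a shorter prefix of the same greedy sequence, and no fallback is triggered because $|S|\le N$ on $C'$. If $S\setminus\{\ast\}=\emptyset$, the excluded run is the uniform-arm fallback, which may realize $B$, while the included run pulls only $\ast$, so $\neg B$ holds. Your remark that $\{|S|>N\}$ does not depend on $Z_\ast$ does not cover this, because both fallback conditions depend on $X_\ast$.

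The paper encodes all of this in one variable: $Y=0$ if $X_\ast=0$ and $Y=1/Z_\ast$ otherwise. Both $\mathbf 1_{\neg B}$ and $\mathbf 1_A=\mathbf 1\{Y\ge 1/p\}$ are nondecreasing in $Y$, hence nonnegatively correlated. This gives $\Pr(A\cap\neg B\mid\cdot)\ge\alpha\Pr(\neg B\mid\cdot)$ with $\neg B$ averaged over $X_\ast$ as well, which is the form your linear combination in Step~3 actually needs.
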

\begin{proof}
\emph{Counting.} Let $\lambda=32\theta kv/H$, $L_i$ as in Lemma~\ref{lem:G2}, and $C=\{\sum_{i\ne\ast}X_i\le H/32,\ \sum_{i\ne\ast}X_iL_i\le H/8\}$. Since $\E\sum_{i\ne\ast}X_i\le\theta k\le H/128$ and $\E\sum_{i\ne\ast}X_iL_i\le\theta(k-1)v/\lambda\le H/32$ ($X_i$ and $Z_i$ are independent), two Markov inequalities give $\Pr(C)\ge1/2$; $C$ is independent of $(X_\ast,Z_\ast)$. On $C$, $|S|\le H/32+1\le H/16\le N$ even if $\ast$ is included, so the fallback does not occur on $C\cap\{X_\ast=1\}$. Let $p=\min\{1,H/(32\theta kd)\}$ and $A=\{X_\ast=1,\ Z_\ast\le p\}$; $\Pr(A)=\theta p=\alpha$. On $A\cap C\cap\neg B$, if $\ast$ had not reached $v$ then $n_\ast<d$, its key would be at least $(v/d)/p\ge\lambda$ throughout, every extra probe on another arm would use an item with key $\ge\lambda$ and (on $\neg B$) untruncated increment, so the probe count would be at most $|S|+d+\sum_{i\ne\ast}X_iL_i\le\frac H{16}+\frac H4+\frac H8<N$, a contradiction. Hence $\ast$ reaches $v$ and is committed.

\emph{Correlation.} Fix all randomness of the other arms (and the uniform choice used on the fallback path) and condition on $C$; this fixes $C$ and leaves only the pair $(X_\ast,Z_\ast)$ random. Parametrize it by the single variable
\[
Y=\begin{cases}0,&X_\ast=0,\\ 1/Z_\ast,&X_\ast=1,\end{cases}
\]
so that larger $Y$ means ``$\ast$ has higher priority'': $Y=0$ is the lowest atom and, among included states, a smaller $Z_\ast$ gives a larger $Y$. Increasing $Y$ does not increase any other arm's pull count. Indeed, on $C$ the run never takes the fallback branch once $\ast$ is included, because $|S|\le H/32+1\le H/16\le N$ and $S\ne\emptyset$; for two included states this is Lemma~\ref{lem:G1}; and passing from ``not included'' to ``included'' adds one stream to the merge and removes one merge slot, which can only shorten the prefix consumed by the other streams (if the others' subset was empty, the run before inclusion is the fallback on a uniformly chosen arm, and after inclusion it probes only $\ast$ and commits to it, so no other arm is pulled at all). Since $B$ is the event that some \emph{other} arm reaches the level $v$, and other arms' pull counts do not increase, $\mathbf 1_{\neg B}$ is \emph{nondecreasing} in $Y$; $\mathbf1_A=\mathbf1\{Y\ge1/p\}$ is nondecreasing in $Y$ as well. Two bounded nondecreasing functions of one real random variable are nonnegatively correlated, so
\[
\Pr(A\cap\neg B\mid C,Z_{-\ast},X_{-\ast})\ \ge\ \Pr(A)\,\Pr(\neg B\mid C,Z_{-\ast},X_{-\ast})\ =\ \alpha\,\Pr(\neg B\mid\cdots),
\]
and averaging over the other arms' randomness on $C$ gives $\Pr(A\cap C\cap\neg B)\ge\alpha\Pr(C\cap\neg B)\ge\alpha(1/2-\Pr(B))_+$. (The same one-dimensional argument, with $Y=1/Z_\ast$, is what is used in Lemma~\ref{lem:G3}.)

\emph{Branches.} On $B$ the commit phase earns $\ge rv$; on $A\cap C\cap\neg B$ it earns $\ge P_\ast(r)$. Hence $\E\ge rv\,b_0+\alpha P_\ast(r)(1/2-b_0)_+\ge\frac12\min\{rv,\alpha P_\ast(r)\}$.
\end{proof}

\begin{theorem}[prior-free, all horizons]\label{thm:W46}
For all integers $k,T\ge1$, all $\beta\in(0,1]$ and all instances whose optimal arm satisfies $\LE(\beta)$, $\SRMP(k,T)$ satisfies
\[
\E[\ALG_T]\ \ge\ \frac{R}{c_{\mathrm{SR}}}\min\Big\{k^{-\beta/(1+\beta)},\ \frac Tk\Big\},\qquad c_{\mathrm{SR}}:=2^{13}=8192 .
\]
Consequently, the optimal competitive ratio of algorithms reading neither $m$ nor $\beta$ is $\Theta(k^{\beta/(1+\beta)}+k/T)$ for every horizon: $\frac{k^\gamma+k/T}{\beta+3}\le\rho_\beta(k,T)\le\sup_{I\in\mathcal I_\beta,\,R_I>0}\frac{R_I}{\E[\ALG_T(\SRMP,I)]}\le c_{\mathrm{SR}}(k^\gamma+k/T)$ --- one and the same $\SRMP$ attains all of these upper bounds simultaneously --- the lower bound being the known-$\beta$ baseline of Theorem~\ref{thm:TD}(ii).
\end{theorem}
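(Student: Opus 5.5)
The proof splits into small-horizon fallbacks and an application of Lemma~\ref{lem:W44} with depth and level chosen from $\LE(\beta)$, in the spirit of Theorem~\ref{thm:TG}. Write $q=q_\beta$, $K=K_\beta$, $\gamma=\beta/(1+\beta)$. Throughout I take the designated arm to be the comparator $\ast$ satisfying $\LE(\beta)$, with $g_\ast=f_\ast$ since there is a single run.

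\emph{Trivial cases.} For $k=1$ the algorithm earns $R$. For $2\le T<32$, $\SRMP$ pulls a uniformly random arm for all $T$ steps, so $\E[\ALG_T]\ge R/k$. Since $\min\{q,T/k\}\le T/k<32/k$, we get $R/k\ge\frac{R}{32}\min\{q,T/k\}$, which suffices.

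\emph{Main case $T\ge32$.} Here $N=\lfloor T/2\rfloor$, $r=T-N\ge T/2$ and $\theta=\min\{1,T/(128k)\}$. I would choose $d=\lceil T/(8K')\rceil$ with $K'=\max\{K,\,T/(4\lfloor T/4\rfloor)\}$, in practice just $d=\max\{1,\lceil T/(8K)\rceil\}$ capped at $\lfloor T/4\rfloor$. I would set $v=\frac1{2}m(d/T)^\beta$, so that $f_\ast(d)\ge2v$ by $\LE(\beta)$. Lemma~\ref{lem:W44} then gives $\E\ge\frac12\min\{rv,\alpha P_\ast(r)\}$ with $\alpha=\min\{\theta,T/(32kd)\}$.

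The two terms are bounded as follows.
\begin{enumerate}[label=(\alph*),leftmargin=*]
\item $rv\ge\frac T4 m(d/T)^\beta\ge\frac R4(d/T)^\beta$, using $R\le mT$.
\item $P_\ast(r)\ge R/4$ by Lemma~\ref{lem:L1}(4).
\end{enumerate}
If $d=\lceil T/(8K)\rceil\ge T/(8K)$ (valid when $T\ge8K$, so this is not forced to $1$), then $(d/T)^\beta\ge(8K)^{-\beta}\ge q/8$, because $K^{-\beta}=k^{-\beta/(1+\beta)}=q$. Also $d\le T/(4K)$ (when $T\ge8K$), so $T/(32kd)\ge K/(8k)=q/8$. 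Hence $\alpha\ge\min\{\theta,q/8\}\ge\frac18\min\{q,T/(128k)\}$, and the bound is of order $R\min\{q,T/k\}$ with constants well inside $2^{13}$.

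If instead $T<8K$, take $d=1$. Then $(1/T)^\beta\ge(8K)^{-\beta}\ge q/8$ still holds, and $\alpha=\min\{\theta,T/(32k)\}\ge T/(128k)$. The same conclusion follows, since $d=1\le\lfloor T/4\rfloor$ when $T\ge32$.

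\emph{Constants and the corollary.} The constants multiply out to roughly $\frac12\cdot\frac14\cdot\frac18\cdot\frac1{128}=2^{-13}$, which is presumably where $c_{\mathrm{SR}}=8192$ comes from. I expect the main obstacle to be exactly this bookkeeping: matching the worst branch, $\alpha$ through $\theta=T/(128k)$ times $P_\ast(r)\ge R/4$ times $\frac12$, against the target $\min\{q,T/k\}$ with the stated constant, and making sure the choice of $d$ respects $1\le d\le\lfloor T/4\rfloor$ in every regime. If the constant is tight I would refine $P_\ast(r)\ge R/4$ and $rv$ slightly.

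The ``consequently'' part then follows. The inequality $\max\{1/q,k/T\}\le q^{-1}+k/T$ converts the reward bound into ratio $\le c_{\mathrm{SR}}(k^\gamma+k/T)$. The lower bound is quoted from Theorem~\ref{thm:TD}(ii), and the middle inequality holds by definition of $\rho_\beta$ as an infimum. Simultaneity holds because $\SRMP$ reads neither $\beta$ nor $m$.
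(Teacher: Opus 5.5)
Your proof is correct and essentially follows the paper's: Lemma~\ref{lem:W44} with a level $v$ taken from $\LE(\beta)$ at depth $d$, the bounds $rv\ge\frac R4(d/T)^\beta$ (via $R\le mT$) and $P_\ast(r)\ge R/4$ (Lemma~\ref{lem:L1}(4)), the uniform-arm fallback for $T<32$ (which also covers $T=1$, omitted from your wording), and Theorem~\ref{thm:TD}(ii) for the ratio statement. The only difference is how $d$ is chosen: you use two cases, $d=\lceil T/(8K_\beta)\rceil$ for $T\ge8K_\beta$ and $d=1$ otherwise, as in Theorem~\ref{thm:TG}; the paper instead first derives $\E[\ALG_T]/R\ge\frac1{1024}\min\{f_\ast(d)/m,\,T/(kd)\}$ for every admissible $d$ and then picks $d$ in three cases, and both routes reach the constant $2^{-13}$.
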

\begin{proof}
For $T\ge32$ take $v=f_\ast(d)/2$ in Lemma~\ref{lem:W44}: $rv\ge\frac R4\frac{f_\ast(d)}m$, $\alpha P_\ast(r)\ge\alpha R/4$ (Lemma~\ref{lem:L1}(4)) and $\alpha\ge\frac1{128}\min\{1,T/(kd)\}$, so $\E/R\ge\frac1{1024}\min\{f_\ast(d)/m,\,T/(kd)\}$ for every $1\le d\le T/4$. Choose $d$ (in the analysis only): if $T\le K_\beta$ take $d=1$, then $T^{-\beta}\ge T/k=Q_\beta$; if $T>K_\beta\ge4$ take $d=\lfloor T/K_\beta\rfloor\in[T/(2K_\beta),T/4]$, then $(d/T)^\beta\ge q_\beta/2$ and $T/(kd)\ge q_\beta$; if $K_\beta<4$ take $d=\lfloor T/4\rfloor\ge T/8$, then $(d/T)^\beta\ge1/8\ge q_\beta/8$ and $T/(kd)\ge4/k\ge q_\beta$. In all cases $\min\{(d/T)^\beta,T/(kd)\}\ge Q_\beta(k,T)/8$, and $\LE(\beta)$ gives $f_\ast(d)/m\ge(d/T)^\beta$. For $T<32$ the uniform single arm earns $\ge R/k\ge RQ_\beta/31$. The cases $k=1$ and $R=0$ are trivial. If several arms are optimal, fix one satisfying $\LE(\beta)$ as $\ast$; the others fall into the good-decoy branch.
\end{proof}

\begin{theorem}[prior-free anytime, all horizons]\label{thm:W47}
Run $\SRMP$ in phases of lengths $1,2,4,\dots$, each with fresh randomness and with the first observation of the phase serving as the first increment. The algorithm reads only $k$, and for every $T\ge1$, $\E[\ALG_T]\ge2^{-19}\,R\,Q_\beta(k,T)$.
\end{theorem}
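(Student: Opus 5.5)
The plan follows the proofs of Theorems~\ref{thm:anytime} and~\ref{thm:anytimeAll}: a history-conditioned version of Lemma~\ref{lem:W44}, applied to the last completed phase. Assume $k\ge2$ and $R>0$. Let $H=2^j$ be the length of the last phase completed within the horizon. As in Theorem~\ref{thm:anytimeAll}, $2H-1\le T<4H-1$, so $x:=H/T>1/4$. All other phases earn nonnegative rewards and will be dropped.

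\emph{Step 1: conditioning on history.} Fix the history before the last phase, with $a_i$ earlier pulls on arm $i$. The phase runs $\SRMP(k,H)$ with fresh randomness on the local curves $g_i(t)=f_i(a_i+t)$, $g_i(0)=0$. As in Lemma~\ref{lem:G4}, these curves are nondecreasing and discretely concave, and $g_\ast(t)\ge f_\ast(t)\ge m(t/T)^\beta$. Lemma~\ref{lem:W44} needs only these properties of the curves, not optimality of $\ast$, and it holds for any designated arm. It also covers the fallback path, provided $B$ is defined through the local observed values. Hence, conditionally on the history and for $H\ge32$ and every $1\le d\le\lfloor H/4\rfloor$,
\[
\E[\text{phase}\mid\text{history}]\ \ge\ \tfrac12\min\{rv,\ \alpha P^{g}_\ast(r)\}.
\]
Here $r=H-\lfloor H/2\rfloor\ge H/2$ and $P^g_\ast(r)=\sum_{t\le r}g_\ast(t)\ge\sum_{t\le r}f_\ast(t)$. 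Take $v=g_\ast(d)/2\ge f_\ast(d)/2\ge \frac m2(d/T)^\beta$. Then $rv\ge\frac{Hm}{4}(d/T)^\beta\ge\frac{x}{4}R\,(d/T)^\beta$, using $R\le mT$. Also $P^g_\ast(r)\ge m\,r^{1+\beta}/((1+\beta)T^\beta)\ge\frac{R}{8}x^{1+\beta}$, by the computation in Lemma~\ref{lem:G4}.

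\emph{Step 2: choice of $d$.} Choose $d$ relative to the phase length $H$, mirroring the three cases in the proof of Theorem~\ref{thm:W46} with $T$ replaced by $H$. With this choice, $\min\{(d/H)^\beta,\,H/(kd)\}\ge Q_\beta(k,H)/8$, and $\alpha\ge\frac1{128}\min\{1,H/(kd)\}$. Converting back to horizon $T$ costs powers of $x$: $(d/T)^\beta=x^\beta(d/H)^\beta$, and $Q_\beta(k,H)\ge x\,Q_\beta(k,T)$ because $H/k=xT/k$ and $x\le1$. Collecting the factors gives a bound of the form $c\,x^{3}R\,Q_\beta(k,T)$ with $c$ around $2^{-13}$. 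Then $x^3>2^{-6}$ yields roughly $2^{-19}$, which is the constant in the statement.

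\emph{Step 3: small phases.} If $H<32$, then $T<127$, and the last completed phase pulls a single uniformly random arm for $H\ge1$ steps. That phase earns at least $\frac1k f_\ast(1)\ge\frac{2R}{kT(T+1)}$. This is not by itself comparable to $Q_\beta(k,T)\le T/k$ when $T$ is near $127$, since the bound is then only of order $R/(kT^2)$ while the target is of order $RT/k$. So I will instead use that in this case every earlier phase also pulls one uniform arm for its whole length. Hence each of at least $T/2$ pulls lands on $\ast$ with probability $1/k$. Since all rewards of $\ast$ are nonnegative and nondecreasing, this gives an expected reward of order $\frac{1}{k}\sum_{t\le T/4}f_\ast(t)\gtrsim R/(16k)$. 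With $T<127$, this dominates $2^{-19}R\,T/k$. I expect this bookkeeping, namely making the phase-$H<32$ and $H\ge32$ constants meet $2^{-19}$ exactly, to be the main obstacle. If the uniform-arm accounting is too lossy, the first fallback is the cruder bound $R/k\cdot\Omega(1/T^2)$ combined with $T\le126$, which is still of order $2^{-14}T/k$ and so still within $2^{-19}$.
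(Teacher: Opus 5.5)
Your plan is the paper's proof: condition on the history before the last completed phase $H$, apply Lemma~\ref{lem:W44} to the shifted curves with designated arm $\ast$, choose $d$ as in Theorem~\ref{thm:W46} with $H$ in place of $T$, use $Q_\beta(k,H)\ge xQ_\beta(k,T)$ and $x^3>1/64$, and treat $H<32$ through the uniform-arm phase. The gap is that your stated ingredients do not give the constant $2^{-19}$.

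Combine $rv\ge\frac{x^{1+\beta}}4R(d/H)^\beta$, $P^g_\ast(r)\ge\frac R8x^{1+\beta}$, $\alpha\ge\frac1{128}\min\{1,H/(kd)\}$ and $\min\{(d/H)^\beta,H/(kd)\}\ge Q_\beta(k,H)/8$. The binding branch is then $\frac12\alpha P^g_\ast(r)\ge2^{-14}x^{1+\beta}RQ_\beta(k,H)$. This ends at $2^{-14}x^{2+\beta}RQ_\beta(k,T)$, which only gives $2^{-20}$. At $\beta=1$ your envelope-based bound on $P^g_\ast(r)$ is a factor $2$ weaker than needed, so this is not just a matter of writing ``roughly''.

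There are two easy repairs:
\begin{itemize}
\item Follow the paper and bound the commit branch with Lemma~\ref{lem:L1}(2) rather than with the envelope: $P_\ast(r)\ge(r/T)^2R\ge x^2R/4$. This gives exactly $\frac{x^2}{8192}RQ_\beta(k,H)$, hence $2^{-19}$ after the two factors of $x$.
\item Alternatively, note that all three choices of $d$ in Theorem~\ref{thm:W46} satisfy $H/(kd)\ge Q_\beta(k,H)$. The factor $1/8$ is needed only for $(d/H)^\beta$, so the $\alpha$-branch improves by $8$.
\end{itemize}

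In Step~3, the phase lengths are powers of two, so $H<32$ means $H\le16$ and hence $T\le62$, not merely $T<127$. The last completed phase alone then earns $\frac1k\sum_{t\le H}g_\ast(t)\ge\frac1kP_\ast(H)\ge R/(16k)$, by Lemma~\ref{lem:L1}(2) with $H>T/4$. This exceeds $2^{-19}RQ_\beta(k,T)$ because $Q_\beta(k,T)\le62/k$, so you do not need the earlier phases.

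Your crude fallback would also have failed on your stated range. At $T=126$, $2/(T(T+1))\approx2^{-13}$, while the target $2^{-19}T\approx2^{-12}$. It does work on the true range $T\le62$.
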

\begin{proof}
The reset curves $g_i(t)=f_i(n_i+t)$ are nondecreasing and concave, and $g_\ast(d)\ge f_\ast(d)$; Lemma~\ref{lem:W44} applies to the designated arm $\ast$ without requiring it to be optimal for the shifted curves. Let $H$ be the last completed phase, $x=H/T>1/4$. With $v=f_\ast(d)/2$: $rv/R\ge\frac{x^{1+\beta}}4(d/H)^\beta\ge\frac{x^2}4(d/H)^\beta$ and $P_\ast(r)/R\ge x^2/4$ by Lemma~\ref{lem:L1}(2), so the phase earns at least $\frac{Rx^2}{8192}Q_\beta(k,H)\ge\frac{Rx^3}{8192}Q_\beta(k,T)$, and $x^3>1/64$. If $H<32$ then $T\le62$ and the uniform-arm phase earns $\ge\frac1kP_\ast(H)\ge R/(16k)\ge2^{-19}RQ_\beta$.
\end{proof}

\begin{remark}
Two natural alternatives fail by a factor $\Theta(\sqrt k)$ on the instance $T=k/2$, $f_\ast(t)=t$, decoys $\equiv2$: screening arms by their first observed value keeps a decoy (ratio $\frac{k(k+2)}{8k-1}$), and giving uninitialized arms infinite priority spends the whole probe budget on initializations (ratio $\frac{k(k+2)}{8k-2}$). Independent sparse inclusion avoids both. Theorem~\ref{thm:W46} also rules out any Yao-type lower bound showing an unbounded noiseless price of priors: the per-instance guarantee averages over any mixture of $\beta$-classes (using $B_0=1+k/T$, $Q_0=\min\{1,T/k\}$ for instances with envelope exponent $0$, e.g. all arms constant).
\end{remark}

\section{Multiplicative noise: the price of priors jumps}\label{sec:noise}

We work in the noise model of Section~\ref{sec:model}: $\hat f_i(t)\in[af_i(t),bf_i(t)]$, $a=1-\varepsilon$, $b=1+\varepsilon$, fixed by the oblivious adversary; the algorithm observes and earns $\hat f$; the benchmark is $\hat R=\sum_{t\le T}\hat f_\ast(t)\in[aR,bR]$. We never treat $\hat f$ as monotone or concave; all uses of Lemma~\ref{lem:L1} are for the true curves. For an observed prefix let $F_i(n)=\max_{t\le n}\hat f_i(t)$ (running maximum); then $af_i(n)\le F_i(n)\le bf_i(n)$ and, once arm $c$ has been pulled $n_c$ times, every later pull earns $\hat f_c(n_c+j)\ge af_c(n_c+j)\ge af_c(n_c)\ge\frac abF_c(n_c)$. Let $\PhiK(k)=\sqrt{(1+\ln k)/\ln(e+\ln k)}$.

\subsection{Probe-and-commit is robust and needs no $\varepsilon$}

\begin{theorem}[noisy probe-and-commit]\label{thm:H1}
Let $\PC$ commit to the arm with the largest observed probe value. For every instance and admissible $(s,\tau)$,
\[
\E[\widehat{\ALG}_T]\ \ge\ \frac{a^2}{b}\,R\,\min\Big\{\frac{r\tau}{T^2},\frac{s\,r(r+1)}{2kT^2}\Big\}.
\]
The same estimate holds phase-wise, conditionally on the history, exactly as in Lemma~\ref{lem:L5}. Consequently each algorithm of Section~\ref{sec:scale} keeps its guarantee up to the factor $a^2/b$: with $\E[\widehat{\ALG}_T]\ge\frac{a^2}{b}\frac{R}{CG}\ge\frac{a^2}{b^2}\frac{\hat R}{CG}$ one has
\begin{center}\small
\begin{tabular}{lll}
\toprule
algorithm & $CG$ & horizon\\
\midrule
$\PC$, $T$ known (Thm.~\ref{thm:TA}) & $16\sqrt k$ & $T\ge2\lfloor\sqrt k\rfloor$\\
$\PC$, truncated parameters (Thm.~\ref{thm:allT}) & $24(\sqrt k+k/T)$ & all $T\ge1$\\
$\APC$, $T$ unknown (Thm.~\ref{thm:anytimeAll}) & $1536(\sqrt k+k/T)$ & all $T\ge1$\\
doubling $\PC$, $T$ unknown (Thm.~\ref{thm:anytime}) & $256\sqrt k$ & $T\ge2\lfloor\sqrt k\rfloor$\\
$\PC$, $\beta$ known (Thm.~\ref{thm:TD}) & $16/q_\beta$ & $T\ge2\lceil K_\beta\rceil$\\
doubling $\PC$, $\beta$ known, $T$ unknown (Thm.~\ref{thm:TD}(iii)) & $256/q_\beta$ & $T\ge2\lceil K_\beta\rceil$\\
\bottomrule
\end{tabular}
\end{center}
The algorithm reads neither $m$ nor $\varepsilon$; the lower bounds of Section~\ref{sec:scale} persist (zero noise is admissible), so the noisy minimax ratio of scale-oblivious algorithms is $\Theta(\sqrt k+k/T)$ at every horizon, and $\Theta(\sqrt k)$ when $T=\Omega(\sqrt k)$.
\end{theorem}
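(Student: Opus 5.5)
I will redo the proof of Lemma~\ref{lem:L3} with observed values in place of true ones, keeping the same two disjoint events. The fooling set now has to be defined by noisy probe values, but it must still be fixed by the instance and noise table, not by the random sample. Since the adversary is oblivious, every $\hat f_i(\tau)$ is a fixed number before the run. So I will set
\[
F=\{i\ne\ast:\ \hat f_i(\tau)\ge\hat f_\ast(\tau)\},
\]
with ties resolved by the fixed tie-breaking rule so that on $A\cap\neg B$ the committed arm is $\ast$. Lemma~\ref{lem:L2} then applies verbatim, because it only uses that $S$ is uniform and $F$ is deterministic: $\Pr(A\cap\neg B)\ge\frac sk\Pr(\neg B)$. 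Here $A=\{\ast\in S\}$ and $B=\{S\cap F\ne\emptyset\}$.

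\emph{Branch $B$.} The committed arm $c$ has observed probe value $\hat f_c(\tau)\ge\hat f_\ast(\tau)\ge af_\ast(\tau)\ge a\,m\tau/T$, by Lemma~\ref{lem:L1}(1) applied to the true curve. Its true value therefore satisfies $f_c(\tau)\ge\hat f_c(\tau)/b\ge\frac ab\,m\tau/T$. Each commit pull $\tau+j$ earns $\hat f_c(\tau+j)\ge af_c(\tau+j)\ge af_c(\tau)$, using monotonicity of the true curve only. The commit phase thus earns at least $\frac{a^2}{b}\,r\,m\tau/T\ge\frac{a^2}{b}R\,r\tau/T^2$ by Lemma~\ref{lem:L1}(3).

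\emph{Branch $A\cap\neg B$.} Here $c=\ast$, and the commit phase earns $\sum_{j\le r}\hat f_\ast(\tau+j)\ge a\sum_{j\le r}f_\ast(j)\ge aR\,\frac{r(r+1)}{2T^2}$, which is at least $\frac{a^2}{b}$ times the noiseless coefficient because $a\le1\le b$.

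Probe rewards are nonnegative and dropped. Mixing the two branches with $p=\Pr(B)$ is linear in $p$, exactly as in Lemma~\ref{lem:L3}, and gives the displayed bound.

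\emph{Phase-wise version.} The argument of Lemma~\ref{lem:L5} transfers by the same substitution. I will use the history-dependent set
\[
F_H=\{i\ne\ast:\ \hat f_i(n_i+\tau)\ge\hat f_\ast(n_\ast+\tau)\},
\]
which is fixed given the history and the noise table. I will also use $\hat f_\ast(n_\ast+\tau)\ge af_\ast(\tau)$ and the fresh sample.

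\emph{The table.} Each entry follows because every earlier proof used Lemma~\ref{lem:L3} or Lemma~\ref{lem:L5} only as a black box, combined with arithmetic on $s,\tau,r$. The same arithmetic applied to the $\frac{a^2}{b}$-scaled bound yields $\frac{a^2}{b}R/(CG)$. For the $\beta$-known rows, the substitution $f_\ast(\tau)\ge m(\tau/T)^\beta$ is made on true curves before the noise factors enter. Then $R\ge\hat R/b$ gives the $\hat R$ form.

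\emph{Lower bound and minimax order.} Zero noise is an admissible table, so Corollary~\ref{cor:L4} still gives $\Omega(\sqrt k+k/T)$, with $\hat R=R$. Combined with the truncated-$\PC$ row, this yields $\Theta(\sqrt k+k/T)$ for fixed $\varepsilon<1$. When $T=\Omega(\sqrt k)$, the term $k/T=O(\sqrt k)$, so the order is $\Theta(\sqrt k)$.

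\emph{Main obstacle.} The delicate point is the branch-$B$ step: a fooling arm may look good only because of noise. The bound has to pass through its true value at $\tau$ and then rely on monotonicity of the true curve, which costs one factor $a$ at the probe and another factor $a/b$ when converting back. The second delicate point is that $F$ must be defined from the fixed noise table so that the negative-correlation lemma still applies.
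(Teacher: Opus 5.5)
Your proposal is correct and follows the paper's proof essentially step for step: the fooling set is defined from the fixed noisy probe values so that Lemma~\ref{lem:L2} applies unchanged; the fooling branch gets the factor $\frac ab\cdot a$ by passing through the true value at the probe depth and using monotonicity of the true curve; the comparator branch gets the factor $a\ge a^2/b$; and the two branches are mixed linearly as in Lemma~\ref{lem:L3}. Two small remarks: no tie-breaking rule is needed, since the $\ge$ in the definition of $F$ already makes the comparator the strict maximizer on $A\cap\neg B$; and the uniform-random-arm cases ($T=1$ in Theorem~\ref{thm:allT}, $H=1$ in Theorem~\ref{thm:anytimeAll}) do not go through Lemma~\ref{lem:L3}, but they transfer directly with a factor $a\ge a^2/b$.
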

\begin{proof}
Let $\hat F=\{i\ne\ast:\hat f_i(\tau)\ge\hat f_\ast(\tau)\}$ (fixed once the noise is fixed) and $A,B$ as in Lemma~\ref{lem:L2}. On $B$, $\hat f_c(\tau)\ge\hat f_\ast(\tau)\ge af_\ast(\tau)$ and the commit phase earns $\ge\frac ab r\hat f_c(\tau)\ge\frac{a^2}b rf_\ast(\tau)\ge\frac{a^2}bR\frac{r\tau}{T^2}$. On $A\cap\neg B$ the committed arm is $\ast$ and earns $\ge a\sum_{j\le r}f_\ast(\tau+j)\ge aR\frac{r(r+1)}{2T^2}$. Conclude as in Lemma~\ref{lem:L3} using $a^2/b\le a$. Compare \cite[App.~C]{BR25}, whose algorithm ``must have knowledge of the value of $\varepsilon$'' and, without $m$, is stated there to have approximation factor $O\big((\tfrac{1-\varepsilon}{1+\varepsilon})^2\sqrt k\log(\tfrac{1+\varepsilon}{1-\varepsilon}k)\big)$ (as printed; the first factor is presumably meant as its reciprocal).
\end{proof}

\subsection{Random-marginal probing breaks}

Under noise the observed increments can be negative and non-monotone, so the merge structure of Lemma~\ref{lem:G1} is lost: for $k=2$, $H=32$, both true curves $\equiv1$, $\hat f_1=(1,\tfrac34,\tfrac34,\dots)$, $\hat f_2=(\tfrac54,\tfrac34,\dots)$ and $Z_2=\tfrac12$, decreasing $Z_1$ from $\tfrac34$ to $\tfrac18$ changes the probe counts from $(14,2)$ to $(2,14)$. More seriously, a hidden growth prefix defeats $\RMP$ and two of its natural repairs. We fix the variants precisely, since the phrase admits several readings. The \emph{running-maximum increment} key replaces $\delta_i(n)$ by $\max_{t\le n}\hat f_i(t)-\max_{t\le n-1}\hat f_i(t)$, which guards against non-monotone observations; the \emph{half-window slope} key replaces it by $\big(F_i(n)-F_i(\lfloor n/2\rfloor)\big)/\big(n-\lfloor n/2\rfloor\big)$, where $F_i(0)=0$ and $F_i(n)=\max_{1\le t\le n}\hat f_i(t)$. Both keys vanish for the comparator at its second pull in the example below, so the example defeats both. A third reading, which keeps the \emph{largest} dyadic window --- $\hat f_i(n)-\hat f_i(n-2^{\lfloor\log_2n\rfloor})$, and $\hat f_i(n)/n$ when that index vanishes --- is a different algorithm and is \emph{not} defeated by the example: at $n=2$ the window covers the whole prefix, so the comparator's key there is $\hat f_\ast(2)/2>0$ rather than zero, and on the example below that algorithm commits to the comparator with probability at least $\tfrac14$, so its ratio there is bounded. We claim nothing about it. A variant keyed on the \emph{level} of the running maximum rather than on its increment is a different algorithm: there the comparator's key remains $1/Z_\ast$, comparable to the decoy's, so it is probed further with constant probability and the growth is revealed; the example does not defeat that variant and we claim nothing about it.

\begin{example}[fake growth]\label{ex:N2}
Let $m\ge3$ be an integer, so that $T=H=4m^2\ge16k$ as required by $\RMP$; let $k=2$, $T=H=4m^2$, $\varepsilon=\tfrac14$, $f_\ast(t)=\min\{m,1+\tfrac{t-1}4\}$, $f_2\equiv1$ ($t\ge1$), $\hat f_\ast(1)=\hat f_\ast(2)=1$, $\hat f_\ast(t)=f_\ast(t)$ for $t\ge3$, and $\hat f_2(t)=1+\frac{t-1}{4T}$. All noise lies in the band and is even nondecreasing; the optimal arm satisfies $\LE(1/2)$. The second observation of $\ast$ has zero increment while the decoy's increments stay positive, so after its second probe $\ast$ is never probed again and the decoy is committed (the event that $\ast$ receives only one probe has probability $1/(8T)$ and also ends with the decoy committed). Exactly, $E_m=T+\frac{(T-2)(T-3)}{8T}+\frac{T-2}{32T^2}$ while $\hat R_m=mT-\frac{(m-1)(4(m-1)+1)}2-\frac14$, so $\hat R_m/E_m\sim\frac89m\to\infty$.
\end{example}
The monotone-coupling property itself does not require nonincreasing streams. Consider keys of the form $(\kappa_i(n_i)/Z_i,\,1/Z_i,\,-i)$, where $\kappa_i\ge0$ depends only on the arm's own count, with ties broken by index; then decreasing $Z_\ast$ never increases other arms' counts (delete the moves of $\ast$; the others' moves form a prefix of one fixed greedy sequence). What kills the increment-based algorithms is a single zero key at the wrong moment. Theorem~\ref{thm:H2L}, rather than this example, is what rules out a constant uniform price of adaptation for algorithms reading neither $m$ nor $\beta$, at a fixed positive noise level and asymptotically in $k$; it does not assert that every repair is defeated by Example~\ref{ex:N2}.

\subsection{A matching pair: nested permutation probing and a hidden-growth lower bound}

\begin{lemma}[permutation coupling]\label{lem:HRANK}
Fix nonincreasing integer depths $\ell_1\ge\dots\ge\ell_k\ge1$ with $\sum_j\ell_j\le H/2$. Permute the arms uniformly at random, pull the $j$-th arm $\ell_j$ times, and commit to the arm with the largest running maximum. If the first $s$ positions have depth at least $d$, then (with $H=T$)
\[
\E[\widehat{\ALG}_T]\ \ge\ \frac{a^2}{8b}\,R\,\min\Big\{\frac{f_\ast(d)}m,\ \frac sk\Big\}.
\]
\end{lemma}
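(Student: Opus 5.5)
I would follow the two-branch template of Lemma~\ref{lem:L3} and Theorem~\ref{thm:H1}, with the random subset replaced by the first $s$ positions of the random permutation. The commit phase has length $r=T-\sum_j\ell_j\ge T/2$, and all probe rewards are dropped. Fix the noise table. Let $\pi(\ast)$ be the position of the comparator, and let $A=\{\pi(\ast)\le s\}$, so $\Pr(A)=s/k$. On $A$ the comparator is pulled $\ell_{\pi(\ast)}\ge d$ times, hence its running maximum at the end of probing is at least $\hat f_\ast(d)\ge af_\ast(d)$.

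The difficulty is that the depth each decoy receives depends on its position, so the set of ``fooling'' arms is no longer a fixed set $F$ as in Lemma~\ref{lem:L2}. I would use a value threshold $v=af_\ast(d)$ and the event $B=\{\text{some }i\ne\ast\text{ ends probing with running maximum}\ge v\}$.

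\emph{On $B$.} The committed arm $c$ has $F_c(n_c)\ge v$. By the pull-after-commit estimate $\hat f_c(n_c+j)\ge\frac ab F_c(n_c)$, the commit phase earns at least $\frac ab r v\ge\frac{a^2}{b}\frac T2 f_\ast(d)\ge\frac{a^2}{2b}R\,\frac{f_\ast(d)}{m}$, using Lemma~\ref{lem:L1}(3).

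\emph{On $A\cap\neg B$.} The comparator has the largest running maximum (ties aside; I would handle them with the index rule or strict thresholds), so $c=\ast$. It earns at least $a\sum_{j\le r}f_\ast(j)\ge aR/4$ by Lemma~\ref{lem:L1}(4), where monotonicity lets the comparator's earlier pulls only help.

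The main obstacle is the correlation step, namely showing $\Pr(A\cap\neg B)\ge\frac sk\Pr(\neg B)$ or something comparable. I would prove it by a coupling on the permutation. Condition on the relative order $\sigma$ of the $k-1$ decoys, and let the position $p$ of $\ast$ be uniform on $\{1,\dots,k\}$. Inserting $\ast$ at position $p$ shifts every decoy after $p$ one slot later. Because the depths are nonincreasing, a later slot never gives a decoy more pulls. So, as $p$ decreases, each decoy's depth weakly decreases, and since running maxima are monotone in the number of pulls, $\mathbf 1_{\neg B}$ is nonincreasing in $p$. The event $A=\{p\le s\}$ is also nonincreasing in $p$. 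Two monotone functions of one uniform variable are positively correlated (Chebyshev's sum inequality), which gives $\Pr(A\cap\neg B\mid\sigma)\ge\frac sk\Pr(\neg B\mid\sigma)$; averaging over $\sigma$ finishes this step. This is the analogue of the monotone coupling in Lemma~\ref{lem:W44}, with the position playing the role of $Y$.

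Combining the branches, with $\beta_0=\Pr(B)$,
\[
\E[\widehat{\ALG}_T]\ge\frac{a^2}{2b}R\frac{f_\ast(d)}m\,\beta_0+\frac{aR}{4}\frac sk(1-\beta_0)\ge\frac{a^2}{4b}R\min\Big\{\frac{f_\ast(d)}m,\frac sk\Big\},
\]
using $a\ge a^2/b$. The stated constant $\frac18$ leaves slack for tie-breaking and rounding.
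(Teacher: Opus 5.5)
Your proposal is correct and follows the paper's proof essentially step for step. It uses the same threshold $v=af_\ast(d)$, the same events $A$ and $B$, the same coupling that fixes the relative order of the other arms and moves $\ast$ forward (the paper phrases this as $\neg B$ being a prefix in the position of $\ast$), and the same two commit-phase bounds. Two small differences: your appeal to Lemma~\ref{lem:L1}(4) gives $aR/4$ on $A\cap\neg B$ where the paper uses $aR/8$, so you in fact obtain the better constant $\frac{a^2}{4b}$. And no tie-breaking is needed, because on $\neg B$ every other running maximum is strictly below $v$ while that of $\ast$ is at least $v$.
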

\begin{proof}
Take $v=af_\ast(d)$, $A=\{\ast$ among the first $s$ positions$\}$, $B=\{$some other arm ends with running maximum $\ge v\}$. Fix the relative order of the other arms; moving $\ast$ forward pushes the overtaken arms back, which does not increase their depths or running maxima, so $\neg B$ is a prefix in the position of $\ast$ and $\Pr(A\cap\neg B)\ge\frac sk\Pr(\neg B)$. On $B$ the commit earns $\ge\frac ab\frac T2v\ge\frac{a^2}{2b}R\frac{f_\ast(d)}m$; on $A\cap\neg B$ the unique maximizer is $\ast$ and the commit earns $\ge aR/8$.
\end{proof}

\begin{algorithm}[h]
\caption{$\NPA(k,H)$ (nested permutation probing), $k\ge2$, $H\ge16k$; reads neither $m$, $\beta$ nor $\varepsilon$.\label{alg:NPA}}
\begin{algorithmic}[1]
\State Let $h=8\PhiK(k)$ and define the depth envelope $D(x)=\frac1{h\sqrt k}$ for $x\le\frac1{h\sqrt k}$; $D(x)=x^{1/\alpha(x)}$ with $\alpha(x)=\frac{\ln(1/(hx))}{\ln(khx)}$ for $\frac1{h\sqrt k}<x<\frac1h$; $D(x)=0$ for $x\ge\frac1h$.
\State Set $\ell_j=\max\{1,\lceil H\,D((j-1)/k)\rceil\}$; run the procedure of Lemma~\ref{lem:HRANK}. (For $k=1$, pull the arm.)
\end{algorithmic}
\end{algorithm}
The envelope is nonincreasing and $\alpha(x)\in(0,1)$ on the middle piece. The probe budget is controlled by the following lemma, which we isolate because it carries the feasibility of the whole schedule.

\begin{lemma}[depth-budget feasibility]\label{lem:HBUDGET}
Let $k\ge2$, $H\ge16k$ and $h\ge8$, and let $D$ be the depth envelope of Algorithm~\ref{alg:NPA}. Then
\[
k\int_0^1D(x)\,dx\ \le\ \frac1{h^2}+\frac{\ln k}{4h^2\ln h},
\qquad
\sum_{j\le k}\ell_j\ \le\ \Big(k\int_0^1D+D(0)+\frac kH\Big)H .
\]
For $h=8\PhiK(k)$ the first bound is at most $\frac1{32}$, and consequently $\sum_j\ell_j\le\frac{7H}{32}$.
\end{lemma}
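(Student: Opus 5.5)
The plan is to compute $k\int_0^1D$ piece by piece over the three pieces of the envelope. The first piece is constant, the middle piece becomes an explicit exponential after a logarithmic change of variables, and the third piece vanishes. The bound on $\sum_j\ell_j$ then follows from a left Riemann-sum comparison for the nonincreasing $D$, and the numerical claims are arithmetic with $h=8\PhiK(k)$.

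\emph{First and third pieces.} On $[0,\frac1{h\sqrt k}]$ we have $D\equiv\frac1{h\sqrt k}$, so this piece contributes exactly $k\cdot\frac1{h^2k}=\frac1{h^2}$. On $[\frac1h,1]$ the envelope is $0$ and contributes nothing.

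\emph{Middle piece.} This is the only real computation. Put $y=hx\in(k^{-1/2},1)$ and $u=-\ln y\in(0,\tfrac12\ln k)$. Then $\ln x=-u-\ln h$, $\ln(khx)=\ln k-u$ and $\ln(1/(hx))=u$. Hence
\[
\ln D=\frac{\ln x\,\ln(khx)}{\ln(1/(hx))}=-(\ln k-u)-\ln h\Big(\frac{\ln k}{u}-1\Big).
\]
Since $dx=h^{-1}e^{-u}\,du$, the factor $e^{u}$ coming from $D$ cancels against $e^{-u}$, and we get
\[
k\int D\,dx=\frac1h\int_0^{\ln k/2}h^{-(\ln k/u-1)}\,du .
\]
Substitute $w=\ln k/u-2\in(0,\infty)$, so that $du=\ln k\,dw/(w+2)^2$. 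The integral becomes
\[
\frac{\ln k}{h^2}\int_0^\infty\frac{h^{-w}}{(w+2)^2}\,dw\ \le\ \frac{\ln k}{4h^2}\int_0^\infty h^{-w}\,dw=\frac{\ln k}{4h^2\ln h}.
\]
Adding the three pieces gives the first displayed bound. The main care point is getting the signs right in $\ln D$: $\alpha(x)>0$ while $\ln x<0$ on the middle piece.

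\emph{Budget.} Each depth satisfies $\ell_j\le H\,D((j-1)/k)+1$. Because $D$ is nonincreasing, the left Riemann sum obeys $\sum_{j\le k}D((j-1)/k)\le D(0)+k\int_0^1D$. Summing over $j$ gives $\sum_j\ell_j\le(k\int_0^1D+D(0)+k/H)H$.

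\emph{Constants for $h=8\PhiK(k)$.} Here $h\ge8$, so $1/h^2\le1/64$. For the second term, $h^2=64(1+\ln k)/\ln(e+\ln k)$ gives
\[
\frac{\ln k}{4h^2\ln h}\le\frac{\ln(e+\ln k)}{256\ln h}.
\]
This is at most $1/64$ provided $h^4\ge e+\ln k$. That holds because $h^4=4096(1+\ln k)^2/\ln(e+\ln k)^2\ge e+\ln k$, since $\ln(e+\ln k)^2\le e+\ln k$ trivially. So $k\int D\le1/32$.

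Finally, $D(0)=\frac1{h\sqrt k}\le\frac1{8\sqrt2}<\frac3{32}$ for $k\ge2$, and $k/H\le\frac1{16}$ since $H\ge16k$. The total factor is therefore below $\frac1{32}+\frac3{32}+\frac2{32}<\frac7{32}$, which gives $\sum_j\ell_j\le\frac{7H}{32}$.
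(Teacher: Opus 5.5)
Your proof is correct and takes essentially the paper's route: the plateau contributes $1/h^2$, and your substitution $w=\ln k/u-2$ is the paper's $y=1/\beta$ shifted by one, so you reach the same integral $\frac{\ln k}{h^2}\int_0^\infty h^{-w}(w+2)^{-2}\,dw$ and bound it the same way via $(w+2)^2\ge4$; the left Riemann-sum step is identical. The remaining differences are cosmetic: you bound the two terms by $1/64$ each and actually justify $h^4\ge e+\ln k$, which the paper asserts without proof; and you use the sharper $D(0)\le1/(8\sqrt2)$, which gives $6H/32$ and hence the stated $7H/32$.
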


\begin{proof}
\emph{The plateau.} On $[0,\frac1{h\sqrt k}]$ the envelope is constant equal to $\frac1{h\sqrt k}$, so this piece contributes $k\cdot\frac1{h\sqrt k}\cdot\frac1{h\sqrt k}=\frac1{h^2}$ to $k\int_0^1D$. This term is exactly the area of the left plateau; it must not be dropped.

\emph{The middle piece.} Writing $p_\beta=q_\beta/h$ one has $D(p_\beta)=p_\beta^{1/\beta}$, and parametrising the middle piece by $\beta\in(0,1]$ turns its contribution into $\ln k\int_0^1h^{-(1+1/\beta)}(1+\beta)^{-2}\,d\beta$. Substituting $y=1/\beta$ (so $d\beta=-dy/y^2$ and $(1+\beta)^{-2}=y^2(1+y)^{-2}$, the sign being absorbed by the reversal of the limits) gives
\[
\ln k\int_1^\infty\frac{h^{-(1+y)}}{(1+y)^2}\,dy\ \le\ \frac{\ln k}4\int_1^\infty h^{-(1+y)}\,dy\ =\ \frac{\ln k}{4h^2\ln h},
\]
using $(1+y)^2\ge4$ for $y\ge1$. Adding the two pieces gives the first display.

\emph{Discretisation.} Since $D$ is nonincreasing, the left-endpoint sum obeys $\sum_{j\le k}D(\frac{j-1}k)\le k\int_0^1D+D(0)$, and $\ell_j=\max\{1,\lceil H\,D(\frac{j-1}k)\rceil\}\le H\,D(\frac{j-1}k)+1$, whence the second display.

\emph{The numerical bound.} For $h=8\PhiK(k)$ we have $h\ge8$ and $\ln h\ge\frac14\ln(e+\ln k)$, so
$\frac1{h^2}\big(1+\frac{\ln k}{4\ln h}\big)\le\frac1{h^2}\big(1+\frac{\ln k}{\ln(e+\ln k)}\big)\le\frac{2}{h^2}\cdot\frac{1+\ln k}{\ln(e+\ln k)}=\frac{2\PhiK(k)^2}{h^2}=\frac1{32}$,
where the middle inequality is $\ln(e+\ln k)\le2+\ln k$. Finally $D(0)=\frac1{h\sqrt k}\le\frac18$ and $\frac kH\le\frac1{16}$, so $\sum_j\ell_j\le(\frac1{32}+\frac18+\frac1{16})H=\frac{7H}{32}$.
\end{proof}

\begin{theorem}[noisy prior-free upper bound]\label{thm:H2U}
For every $\beta\in(0,1]$, every instance whose optimal arm satisfies $\LE(\beta)$, and $T\ge16k$,
\[
\E[\widehat{\ALG}_T]\ \ge\ \frac{a^2}{64b}\,\frac{R\,q_\beta}{\PhiK(k)}\ \ge\ \frac{a^2}{64b^2}\,\frac{\hat R\,q_\beta}{\PhiK(k)} .
\]
With $T$ unknown (phases $16k\cdot2^j$, conditioning on history as in Lemma~\ref{lem:G4}) the constant becomes $1024$.
\end{theorem}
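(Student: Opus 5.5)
The plan is to apply Lemma~\ref{lem:HRANK} directly to the depth schedule of $\NPA$, choosing $s$ and $d$ in the analysis only, as functions of $\beta$. Since the algorithm never sees $\beta$, this choice can be made separately for each $\beta$. Feasibility comes first. The depths $\ell_j=\max\{1,\lceil H\,D((j-1)/k)\rceil\}$ are nonincreasing in $j$ because $D$ is nonincreasing. Lemma~\ref{lem:HBUDGET} with $h=8\PhiK(k)\ge8$ gives $\sum_j\ell_j\le 7H/32\le H/2$. So the hypotheses of Lemma~\ref{lem:HRANK} hold with $H=T$, and it remains to exhibit, for each $\beta$, a pair $(s,d)$ making $\min\{f_\ast(d)/m,\,s/k\}\ge q_\beta/h$.

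\textbf{Choice of $(s,d)$.} Set $p_\beta=q_\beta/h$, $d=\lceil T\,D(p_\beta)\rceil$ (or $d=1$ if this is $0$), and $s=\#\{j:(j-1)/k\le p_\beta\}=\lfloor kp_\beta\rfloor+1$. Then $s\ge kp_\beta$, so $s/k\ge q_\beta/h$. By monotonicity of $D$, each of the first $s$ positions has depth $\ge\lceil T D(p_\beta)\rceil\ge d$, and the $\max\{1,\cdot\}$ covers the case $d=1$.

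The key computation is $D(p_\beta)=p_\beta^{1/\beta}$. For $\beta<1$ we have $k^{-1/2}<q_\beta<1$, so $p_\beta$ lies in the middle piece $(\frac1{h\sqrt k},\frac1h)$. There
\[
\alpha(p_\beta)=\frac{\ln(1/q_\beta)}{\ln(kq_\beta)}=\frac{\gamma\ln k}{(1-\gamma)\ln k}=\beta .
\]
For $\beta=1$, $p_1=\frac1{h\sqrt k}$ is the right end of the plateau, and $D(p_1)=p_1=p_1^{1/1}$ as well.

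Hence $d/T\ge p_\beta^{1/\beta}$, and $\LE(\beta)$ gives $f_\ast(d)/m\ge(d/T)^\beta\ge p_\beta=q_\beta/h$. One also needs $d\le T$, which follows from $D\le\frac1h<1$. Substituting into Lemma~\ref{lem:HRANK},
\[
\E[\widehat{\ALG}_T]\ \ge\ \frac{a^2}{8b}R\frac{q_\beta}{h}=\frac{a^2}{64b}\frac{Rq_\beta}{\PhiK(k)} .
\]
The second inequality of the statement follows from $\hat R\le bR$. If several arms are cumulatively optimal, fix one satisfying $\LE(\beta)$ as $\ast$; the others are absorbed into the event $B$ of Lemma~\ref{lem:HRANK}, as in Theorem~\ref{thm:TG}.

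\textbf{Unknown horizon.} Run phases of length $H_j=16k\cdot2^j$ with fresh permutations, using running maxima of the observations within the phase. The argument parallels Theorem~\ref{thm:TGp}: the last completed phase has $H>T/4$. I will rerun the proof of Lemma~\ref{lem:HRANK} conditionally on the history, on the local curves $g_i(t)=f_i(a_i+t)$. The coupling step (moving $\ast$ forward does not raise the others' depths or maxima) is unchanged. On $B$ the commit earns $\ge\frac ab\frac H2v$ with $v=af_\ast(d)$, where now $d\approx H\,p_\beta^{1/\beta}$. On $A\cap\neg B$ it earns $\ge a\sum_{j\le H/2}f_\ast(j)\ge aR\,(H/T)^{1+\beta}/8$, computed as in Lemma~\ref{lem:G4}. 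Both branches therefore lose a factor $(H/T)^{1+\beta}\ge4^{-2}=1/16$, turning $64$ into $1024$.

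\textbf{Main obstacle.} I expect the delicate point to be this history-conditioned version of Lemma~\ref{lem:HRANK}. Under noise, the in-phase running maximum of a decoy bounds its true value only at the local count, and the commit guarantee $\hat f_c(n_c+j)\ge\frac abF_c(n_c)$ must be applied with phase-local maxima. I will check that these remain within the $[a,b]$ band of the true local curves, so the constants carry over.
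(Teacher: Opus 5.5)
Your proposal is correct and is essentially the paper's proof. The paper likewise takes $s=\lceil kq_\beta/h\rceil$ and $d=\lceil T(q_\beta/h)^{1/\beta}\rceil$, uses $D(q_\beta/h)=(q_\beta/h)^{1/\beta}$ to get $\min\{f_\ast(d)/m,\,s/k\}\ge q_\beta/h$, and applies Lemma~\ref{lem:HRANK} with $h=8\PhiK(k)$. For unknown $T$, the constant $1024=64\cdot16$ is exactly the $(H/T)^{1+\beta}\ge1/16$ loss from conditioning on history that you describe, and your phase-local running-maximum check goes through as you expected.
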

\begin{proof}
Let $s=\lceil kq_\beta/h\rceil$ and $d=\lceil T(q_\beta/h)^{1/\beta}\rceil$. For $j\le s$, $(j-1)/k<q_\beta/h$, so $\ell_j\ge\lceil T\,D(q_\beta/h)\rceil=d$; also $s/k\ge q_\beta/h$ and $f_\ast(d)/m\ge(d/T)^\beta\ge q_\beta/h$. Apply Lemma~\ref{lem:HRANK} with $h=8\PhiK(k)$.
\end{proof}

\begin{theorem}[noisy prior-free lower bound; asymptotic]\label{thm:H2L}
Fix $\varepsilon\in(0,\tfrac12]$ and let
\[
r_\varepsilon=\max\{2,\lceil\log_2(1/\varepsilon)\rceil\},\qquad k_0(\varepsilon)=2^{\max\{256,\,3r_\varepsilon\}}.
\]
For every $k\ge k_0(\varepsilon)$ and every algorithm that does not read $m$ or $\beta$ (it may read $T$ and $\varepsilon$), there exist $\beta\in(0,1]$, an instance whose optimal arm satisfies $\LE(\beta)$, and a fixed in-band noise table such that
\[
\E[\widehat{\ALG}_T]\ \le\ \frac{8192}{\varepsilon\,\PhiK(k)}\,\hat R\,q_\beta .
\]
The bound is proved at the horizon $T=16k'$ (for $k=k'$ a power of two) or $T=32k'$ (general $k$, $k'=2^{\lfloor\log_2k\rfloor}$, remaining arms identically zero); it is an asymptotic statement in $k$. Together with Theorem~\ref{thm:H2U}, the uniform price of not knowing $m$ and $\beta$ under fixed positive noise is $\phi_\varepsilon(k)=\Theta_\varepsilon(\PhiK(k))=\Theta_\varepsilon\big(\sqrt{\log k/\log\log k}\big)$, with $\phi_\varepsilon$ exactly as defined in Section~\ref{sec:model}: the supremum runs over all $T\ge16k$ and the algorithm is chosen before $\beta$. The lower bound is established only at the stated hard horizons, which suffices for that supremum; the theorem does not assert the same lower bound at every fixed $T\ge16k$; the quantifier is ``for every algorithm there exist $\beta$ and an instance'', and the $\beta=1$ guarantee of Theorem~\ref{thm:H1} is not contradicted. Since $k_0(\varepsilon)\ge2^{256}$, the statement is structural: it separates a bounded price (Theorem~\ref{thm:W46}, noiseless) from one that is unbounded in $k$; at the threshold the explicit constant is tiny (for $\varepsilon=\tfrac12$, $\varepsilon\PhiK(2^{256})/8192\approx3.6\cdot10^{-4}$), so no numerically significant loss is claimed at practical values of $k$.
\end{theorem}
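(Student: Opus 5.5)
The plan is a Yao-type argument: a mixture over a grid of envelope exponents, combined with the delayed-reveal mechanism of Lemma~\ref{lem:L4}. Noise and the unknown scale are used to make every level look identical to a common virtual all-decoy instance. Fix $k'$ a power of two, $T=16k'$, and a grid $\beta_1>\dots>\beta_J$ with $J\asymp\ln k/\ln\ln k$. Choose the grid so that the critical pairs $(s_j,d_j)=(k q_{\beta_j}/h,\ T(q_{\beta_j}/h)^{1/\beta_j})$ are well separated in both coordinates; these are the sample sizes and depths that appear in Theorem~\ref{thm:H2U}.

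\emph{Level instances.} In the level-$j$ instance a uniformly random arm $G$ is good and the other arms are decoys. The good arm is a concave curve satisfying $\LE(\beta_j)$ with equality at the reveal depth $d_j$. Up to $d_j$ its \emph{observed} values are chosen, via an in-band noise table, to coincide with a fixed slowly growing reference curve $\hat u$. The decoys' observed values follow the same $\hat u$ forever, with true curves truncated as in $h_d$.

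\emph{Why the noise must do dyadic work.} A single multiplicative band only absorbs a factor $b/a\approx1+2\varepsilon$. So the good arm's growth up to $d_j$ must be hidden by rescaling the whole level (the algorithm does not read $m$) and by letting $\hat u$ absorb the remaining discrepancy scale by scale. This is where $r_\varepsilon=\lceil\log_2(1/\varepsilon)\rceil$ dyadic sub-steps per level enter. The requirement that all $J$ levels fit inside $\log_2 k$ doublings is what forces $k\ge k_0(\varepsilon)$. The outcome I aim for is that the observation sequence produced by any policy is \emph{the same function} of its own pull counts in every level, until the good arm receives its $(d_j+1)$-st pull.

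\emph{Counting, as in Lemma~\ref{lem:L4}.} Fix the seed and run the algorithm on the virtual all-decoy instance. This yields pull counts $N_1,\dots,N_k$ with $\sum_iN_i\le T$, and these counts do not depend on $j$. Let $D_j=\#\{i:N_i>d_j\}$. At level $j$, the good arm is pulled past its reveal depth with probability at most $D_j/k$. Otherwise every pull earns at most the decoy value, which is at most $\hat R\,q_{\beta_j}/(\varepsilon h)$ up to constants; the factor $1/\varepsilon$ comes from the masked band. The budget gives $\sum_jD_j(d_j-d_{j+1})\le T$. Dividing by $T$ and using the separation of the grid, this reads $\sum_j (D_j/k)\,h^{-(1+1/\beta_j)}k\lesssim 1$, which is the discrete counterpart of the middle-piece integral in Lemma~\ref{lem:HBUDGET}. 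If the algorithm achieved ratio at most $h/q_{\beta_j}$ at every level, then $D_j/k\gtrsim q_{\beta_j}/h$ for each $j$. Summing with weights $\ln k/(1+\beta)^2$ over the grid then forces $\frac{\ln k}{h^2\ln h}\gtrsim 1$, i.e.\ $h\gtrsim\PhiK(k)$. Averaging over $G$ and the level and taking a worst instance (Yao) gives the displayed bound. For general $k$, pad with zero arms and pass to $T=32k'$.

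\emph{Main obstacle.} The hard step is the construction of the common virtual instance. It must be simultaneously consistent with all $J$ levels: one observed stream that each level's true good arm (concave, nondecreasing, $\LE(\beta_j)$, $f(0)=0$) stays within $[a f,bf]$ of up to its reveal depth, while decoys remain legal concave curves. My first attempt would be geometric level scales $m_j$ with $\hat u$ piecewise linear on dyadic blocks, checking the band inequality block by block. The constants $8192$ and $2^{256}$ should come out of this bookkeeping rather than the counting step.
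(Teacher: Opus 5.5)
\textbf{Verdict: genuine gap.} Your counting skeleton matches the paper's:
\begin{itemize}
\item delayed reveal against a single virtual run;
\item the budget $\sum_jd_jN(d_j)\lesssim T$, from dyadically separated reveal depths;
\item a grid of $\asymp\ln k/\ln\ln k$ exponents whose contributions discretise the middle-piece integral of Lemma~\ref{lem:HBUDGET}.
\end{itemize}
With $k=2^L$, $\beta_j=j/(L-j)$ and $h=A=2^B$, your depths $T(q_{\beta_j}/h)^{1/\beta_j}$ are, up to rounding, the paper's $D_j$. (The budget yields $\frac{\ln k}{h^2\ln h}\lesssim1$, not $\gtrsim1$.) The gap is the step you yourself call the main obstacle: you never exhibit level instances that share one observed history, and the whole counting argument is conditional on it. Your sketch also starts from a false premise. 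You assume the good arm's growth over its hidden prefix exceeds what a single band can absorb, because you pin it to the envelope at $d_j$. At $\beta_j=1$, equality there forces the linear curve, which grows by a factor $d_j$ on the prefix. So you plan multi-scale dyadic noise, which you do not carry out. Your decoys are not coherent either: a true curve truncated as in $h_d$ is constant after $d$, so its observations cannot follow a $\hat u$ that grows by more than a factor $b/a$ beyond that point.

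\textbf{The missing idea.} $\LE(\beta)$ only bounds a curve from below relative to its terminal value, and scale-obliviousness lets that terminal value be huge compared with the decoys. The paper makes every decoy $\equiv1$, true and observed, and takes the good arm $f_U(t)=\min\{m_j,1+(t-1)/D_j\}$ with $m_j=A2^j$.
\begin{itemize}
\item \emph{Common history.} The good arm starts at the decoys' level with slope $1/D_j$. Over its first $d_j=\eta D_j$ pulls, with $\eta=2^{-r_\varepsilon}\in[\varepsilon/2,\varepsilon]$, it stays below $1+\eta\le1/(1-\varepsilon)$, so the adversary simply reports $1$. The all-ones response is then the common virtual history at every level.
\item \emph{Legality.} $1+\frac{t-1}{D_j}\ge\max\{1,t/D_j\}\ge(t/D_j)^{\beta_j}$ and $m_j\le(T/D_j)^{\beta_j}$ give $\LE(\beta_j)$. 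Also $m_jD_j\le T$ puts the plateau inside the horizon.
\item \emph{Unrevealed reward.} Every pull earns at most $1$, so this part is at most $T\le4\hat R_jq_j/A$, because $\hat R_j\ge m_jT/4$.
\item \emph{Where $1/\varepsilon$ comes from.} The single band permits only slope $\eta/d_j$, so the reveal depth is a fraction $\eta$ of the doubling scale $D_j$. With weights $w_j\propto d_jkq_j$, the normaliser satisfies $S\ge\eta|\mathcal J|T/(4A)$ and the reveal term is $8A/(\eta|\mathcal J|)$. Balancing this against $4/A$, with $A\asymp\sqrt{L/\log_2L}$ and $|\mathcal J|\gtrsim L/\log_2L$, gives the $1/(\varepsilon\PhiK(k))$ bound.
\item \emph{Role of $r_\varepsilon$.} It indexes no dyadic sub-steps; it only makes $\eta$ a power of two so that $d_j$ is an integer. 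The condition $L\ge3r_\varepsilon$ in $k_0(\varepsilon)$ ensures $d_j\ge1$, and $L\ge256$ serves finite checks such as $B\ge3$.
\end{itemize}
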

\begin{proof}
\emph{Construction.} Let $k=2^L$, $L\ge\max\{256,3r_\varepsilon\}$, $\eta=2^{-r_\varepsilon}\in[\varepsilon/2,\varepsilon]$, $T=16k$, $B=\lceil\frac12\log_2\frac L{\log_2L}\rceil$, $A=2^B$. Types $j\in\mathcal J=\{j_{\min},j_{\min}+2,\dots\le\lfloor L/2\rfloor\}$, $j_{\min}=\lceil\frac L2-\frac L{16B}\rceil$, with $\beta_j=\frac j{L-j}$, $q_j=2^{-j}$, $m_j=A2^j$, $D_j=2^{\lfloor\log_2T-(L-j)-B(L-j)/j\rfloor}$, $d_j=\eta D_j$ (positive integers). The star arm $U$ is uniform in $[k]$; all other arms have $f=\hat f\equiv1$ for $t\ge1$; the star arm has $f_U(t)=\min\{m_j,1+\frac{t-1}{D_j}\}$ and $\hat f_U(t)=1$ for $t\le d_j$ (hidden prefix; in band since $f_U\le1+\eta\le\frac1{1-\varepsilon}$ there), $\hat f_U(t)=f_U(t)$ for $t>d_j$. The star arm is nondecreasing and discretely concave, satisfies $\LE(\beta_j)$ (as $m_j\le(T/D_j)^{\beta_j}$ and $1+\frac{t-1}D\ge\max\{1,t/D\}\ge(t/D)^\beta$), reaches $m_j$ before $T$, and $\hat R_j\ge aR_j\ge m_jT/4$; the constant decoys satisfy every $\LE(\beta)$. For $\varepsilon=0$ the hidden prefix is illegal: the second pull of the star arm reveals a positive increment.

\emph{Common history.} Fix the algorithm's seed and run it against the all-ones response; let $N(d)$ be the number of arms pulled at least $d+1$ times. The real run coincides with this one until the star arm is pulled for the $(d_j+1)$-st time, so type $j$ is revealed with probability exactly $N(d_j)/k$. Unrevealed, every pull earns at most $1$; revealed, the total is at most $\hat R_j$ since $\hat f_U\ge1$ is nondecreasing. Hence $\E[\widehat{\ALG}_j]\le T+\frac{\E N(d_j)}k\hat R_j$.

\emph{Multi-scale budget and Yao mixture~\cite{Yao77}.} Consecutive types satisfy $d_{j+2}\ge4d_j$, so $\sum_jd_jN(d_j)\le\frac43T\le2T$. Let $S=\sum_jd_jkq_j$ and weights $w_j=d_jkq_j/S$. Then $\sum_jw_j\frac{\E[\widehat{\ALG}_j]}{\hat R_jq_j}\le\frac4A+\frac{2T}S$. The finite-parameter checks behind this display are the following, and we record them in full since they carry the construction. From $L\ge256$ we get $B\ge3$, since $L/\log_2L$ is increasing and already $32$ at $L=256$. For the upper bound, $\lceil x\rceil\le x+1$ gives $2B\le\log_2\frac L{\log_2L}+2=\log_2L-\log_2\log_2L+2$, so that
\[
\log_2L-2B\ \ge\ \log_2\log_2L-2\ \ge\ 1\qquad(L\ge256),
\]
and in particular $B\le\frac12\log_2L$. The analytic lower bound $\log_2\log_2L-2$ is increasing for $L\ge256$, so the displayed bound holds on the full, unbounded parameter range; no finite numerical check is used. Every admissible type index satisfies $j\ge\frac L2-\frac L{16B}\ge\frac{23L}{48}$, the last step using $B\ge3$. Next $\frac1{\beta_j}\le\frac{8B+1}{8B-1}=1+\frac2{8B-1}\le1+\frac2{7B}$, valid for $B\ge1$. With $e_j=\log_2D_j=\lfloor j+4-B/\beta_j\rfloor$ this gives $e_j\ge\frac{23L}{48}+3-B-\frac27\ge\frac L3$, and since $r_\varepsilon\le\frac L3$ the depth $d_j=2^{e_j-r_\varepsilon}$ is a positive integer. For consecutive admissible types the unrounded exponent gap is $2+\frac{2BL}{j(j+2)}\ge2$, and $\lfloor x+d\rfloor-\lfloor x\rfloor\ge\lfloor d\rfloor$ for $d\ge2$, so $e_{j+2}-e_j\ge2$, i.e.\ $d_{j+2}\ge4d_j$ as used above. Finally $e_j\le j+4-B/\beta_j$ gives $B+j+e_j\le4+2j+B(1-\frac1{\beta_j})\le L+4=\log_2T$, i.e.\ $m_jD_j\le T$, so the star arm reaches its plateau inside the horizon; the bound on the number of types follows from the length of the admissible index range minus the two rounding losses at its ends. The remaining finite checks are equally short. Since $A=2^B$ with $B=\lceil\frac12\log_2\frac L{\log_2L}\rceil$ we have $A\in[\sqrt{L/\log_2L},2\sqrt{L/\log_2L}]$. For the number of types put $u=\lfloor L/2\rfloor$ and $\ell=\lceil L/2-L/(16B)\rceil$, so that $|\mathcal J|=\lfloor\frac{u-\ell}2\rfloor+1\ge\frac L{32B}-1$; the two rounding losses at the ends of the range are what the $-1$ absorbs. Since $L\ge256$ and $B\le\frac12\log_2L$ give $64B\le32\log_2L\le L$, we may drop the $-1$ at the cost of a factor $2$, obtaining $|\mathcal J|\ge\frac L{64B}\ge\frac L{32\log_2L}$, which is the bound used below. Next $A^{1/\beta_j}=2^{B/\beta_j}\le2^{B+2/7}=A\cdot2^{2/7}\le2A$ by the bound on $1/\beta_j$ above. Finally $d_jkq_j=\eta2^{e_j+L-j}\ge\eta2^{L+3-B-2/7}=\frac{\eta T}{2\cdot2^{2/7}A}\ge\frac{\eta T}{4A}$. With these, we get $d_jkq_j\ge\frac{\eta T}{4A}$ and $S\ge\frac{\eta|\mathcal J|T}{4A}$, and so the weighted average is at most $\frac4A+\frac{8A}{\eta|\mathcal J|}$; here $\frac4A\le4\sqrt{\log_2L/L}$ and $\frac{8A}{\eta|\mathcal J|}\le\frac{16\sqrt{L/\log_2L}\cdot32\log_2L}{\eta L}=\frac{512}\eta\sqrt{\log_2L/L}$, so the sum is at most $\frac{516}\eta\sqrt{\log_2L/L}$ using $\eta\le1$. Some type $j$ (and then some fixed position of the star arm) attains this bound; $\eta\ge\varepsilon/2$ and $\PhiK(2^L)\le2\sqrt{L/\log_2L}$ give the constant $2064/\varepsilon$ for powers of two. For general $k$ pad with zero arms, take $T=32k'$, and lose at most $\sqrt2\cdot\sqrt2$ in $q_\beta$ and $\PhiK$; $8192$ suffices.
\end{proof}

\subsection{Knowing either prior alone restores a constant price}

\begin{algorithm}[h]
\caption{$\RET(H,\mu)$ (random-exponent elimination); reads $k$, $H$ and a scale $\mu$; not $\beta$ or $\varepsilon$.}
\begin{algorithmic}[1]
\State Draw $U_i\sim\mathrm{Unif}(0,1)$ and test the arms in decreasing order of $U_i$. Let $\alpha(u)=1$ if $u\le k^{-1/2}$ and $\alpha(u)=\frac{\ln(1/u)}{\ln k-\ln(1/u)}$ otherwise.
\State Keep pulling the current arm until the first $n$ with $\hat g_i(n)<\frac\mu2(n/H)^{\alpha(U_i)}$; then eliminate it and move on. Probe budget $\lfloor H/2\rfloor$; when it is exhausted, commit the rest of the phase to the arm with the largest running maximum. If all arms are eliminated before the probe budget is exhausted, \emph{immediately} commit all remaining pulls of the phase to the arm with the largest running maximum observed so far.
\end{algorithmic}
\end{algorithm}

\begin{theorem}[known scale]\label{thm:RET}
Let $k\ge2$, $\varepsilon\in[0,\tfrac12]$, $H\ge16k$ and $\beta\in(0,1]$. Suppose the comparator arm satisfies $g_\ast(t)\ge\mu(t/H)^\beta$ for all $1\le t\le H$, where $\mu>0$ is the number supplied to the algorithm, and that the remaining true curves are nonnegative and nondecreasing. Then
\[
\E[\text{noisy reward of }\RET(H,\mu)]\ \ge\ \frac a{2048\,b}\,\mu Hq_\beta .
\]
In particular, when the scale of a cumulatively optimal arm is known --- take $H=T$ and $\mu=m=f_\ast(T)$, so that $\LE(\beta)$ for that arm supplies the hypothesis --- we get $\E[\widehat{\ALG}_T]\ge\frac a{2048b}Rq_\beta$ simultaneously for every such $\beta$, without reading $\beta$ or $\varepsilon$ (using $R\le mT$). If in addition $T$ is unknown, running $\RET(H_j,\mu)$ in phases $H_j=16k\cdot2^j$ with the advice $\mu=m/4$ gives $\E[\widehat{\ALG}_T]\ge\frac a{32768\,b}Rq_\beta$ for every $T\ge16k$, in the advice model of Section~\ref{sec:model}.
\end{theorem}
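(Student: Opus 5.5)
The plan is a two-branch argument in the style of Lemmas~\ref{lem:G3} and~\ref{lem:HRANK}: either a decoy reaches a high level and is committed, or the comparator is tested early enough, is never eliminated, and is committed. The one new ingredient is a budget estimate for the arms tested \emph{before} the comparator.

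\emph{Step 1 (the comparator survives when its exponent is large enough).} Write $\rho(u)=\ln(1/u)/\ln k$. On the middle piece $u>k^{-1/2}$ we have $\alpha(u)=\rho/(1-\rho)$. Hence $\alpha(U_\ast)\ge\beta$ iff $\rho\ge\gamma$ iff $U_\ast\le q_\beta$; for $u\le k^{-1/2}$ we have $\alpha=1\ge\beta$ directly. If $\alpha(U_\ast)\ge\beta$, then for every $n\le H$ we get $\hat g_\ast(n)\ge a g_\ast(n)\ge a\mu(n/H)^\beta\ge\frac\mu2(n/H)^{\alpha}$, because $a\ge\frac12$ and $n/H\le1$. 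So $\ast$ is never eliminated, and this is the only place $\varepsilon\le\frac12$ is used. Take $A=\{U_\ast\in[q_\beta/2,q_\beta]\}$, so $\Pr(A)=q_\beta/2$; the lower cutoff keeps the set of arms tested before $\ast$ from being too large.

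\emph{Step 2 (budget spent before $\ast$).} Set $v=a\mu q_\beta/c$ for a constant $c$, and let $B$ be the event that some other arm ends the probe phase with running maximum $\ge v$. On $\neg B$, an arm with exponent $\alpha_i$ that survives $n$ pulls has $\frac\mu2(n/H)^{\alpha_i}\le\hat g_i(n)<v$, so it receives at most $1+H(2v/\mu)^{1/\alpha_i}$ pulls. The arms tested before $\ast$ on $A$ have $U_i>q_\beta/2$, i.e.\ $\rho_i\lesssim\gamma$. The computation I expect to carry the proof is
\[
\E\big[(2v/\mu)^{1/\alpha(U)}\mathbf 1\{\rho(U)\le\gamma\}\big]\ \lesssim\ \frac{C}{ck}.
\]
Substituting $u=k^{-\rho}$, the integrand becomes $k^{-\phi(\rho)}c^{-(1-\rho)/\rho}\ln k$ with $\phi(\rho)=\rho+\gamma(1-\rho)/\rho$. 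Since $\phi$ is decreasing on $(0,\sqrt\gamma)\ni\gamma$ and $\phi(\gamma)=1$, the integral is $O(1/(ck))$ times a factor controlled by $|\phi'(\gamma)|=1/\gamma-1$, and the plateau region $u\le k^{-1/2}$ is handled separately. Arms with $U_i\le q_\beta/2$ come after $\ast$ and are irrelevant on $A$.

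Summing over the at most $k$ predecessors, and adding $k$ initial pulls plus the first hitting of $v$ by $\ast$, the expected probes used before $\ast$ is tested are at most about $H/8+k$. A Markov inequality, as in Lemma~\ref{lem:G3}, then gives that on $A\cap C\cap\neg B$, with $\Pr(C)\ge\frac34$, the comparator is reached with at least $H/4$ probe budget left. It is never eliminated, so it either absorbs the remaining probes or is the survivor when the budget runs out. Since no other arm has running maximum $\ge v$ while $\ast$'s exceeds $v$, the algorithm commits to $\ast$ and earns $\ge a\sum_{j\le H/2}g_\ast(j)\gtrsim a\mu H$.

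\emph{Step 3 (correlation and branches).} Condition on the other arms' $U_{-\ast}$. Moving $U_\ast$ up the order (to a larger $U$) only pushes others later, which cannot increase their pulls or running maxima. So $\neg B$ is monotone in the rank of $\ast$, and the one-variable correlation argument of Lemma~\ref{lem:W44} gives $\Pr(A\cap C\cap\neg B)\gtrsim q_\beta(\frac34-\Pr B)_+$. On $B$, the committed arm has running maximum $\ge v$, so the commit phase earns $\ge\frac ab\frac H2 v\gtrsim\frac ab\mu Hq_\beta$. The early-termination rule (everyone eliminated means immediate commit) guarantees this commit actually happens. Combining the two branches linearly in $\Pr B$ gives $\frac a{2048b}\mu Hq_\beta$ once $c$ and the Markov constants are fixed. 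The corollaries follow: $R\le mT$ gives the bound in terms of $R$, and for unknown $T$ one applies the same argument to the last completed phase (length $>T/4$) with advice $m/4$, using the history-shifted curves as in Lemma~\ref{lem:G4}.

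The main obstacle is Step 2. The integral is delicate near $\rho=\gamma$, and the naive bound over \emph{all} arms fails: $\phi$ dips below $1$ near $\sqrt\gamma$. So the restriction to arms preceding $\ast$, and the choice of the window $[q_\beta/2,q_\beta]$, must be made carefully, and the $\ln k$ Jacobian must be absorbed by $1/|\phi'(\gamma)|$ and a $\gamma$-dependent choice of $c$ that still leaves the final constant uniform in $\beta$.
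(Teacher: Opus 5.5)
\textbf{Gap in Step 3.} Step 3 fails, and not only in its justification.

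First, your monotonicity claim is false for $\RET$. The weight $U_\ast$ fixes not only the position of $\ast$ but also its exponent $\alpha(U_\ast)$, which is nonincreasing in $U_\ast$. A larger $U_\ast$ therefore raises the threshold $\frac\mu2(n/H)^{\alpha(U_\ast)}$, which can get $\ast$ eliminated sooner and leave \emph{more} budget to every arm tested after it. Take a linear comparator with $\beta=1$, zero arms, and a decoy of constant value $2v$ whose weight is below both positions of $\ast$. For $U_\ast\le q_\beta$ the comparator is never eliminated and absorbs the budget, so the decoy is never tested and $\neg B$ holds. For $U_\ast$ near $1$ the comparator fails its first test, the zero arms are discarded at once, the decoy is tested, and $B$ occurs.

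Second, even where $\mathbf 1_{\neg B}$ is monotone, $A=\{q_\beta/2\le U_\ast\le q_\beta\}$ is a window, not an up-set, so the one-variable correlation inequality does not apply. The inequality you need is in fact false. Take the comparator $\equiv\mu$ (never eliminated for any weight), one decoy $\equiv2v$, and zero arms otherwise. Then $B=\{U_{\mathrm{dec}}>U_\ast\}$ and $\Pr(B)=\frac12$, but $\Pr(A\cap C\cap\neg B)\le\int_{q_\beta/2}^{q_\beta}u\,du=\frac38q_\beta^2$, which is not $\gtrsim q_\beta(\frac34-\frac12)$. The theorem survives this instance only through $\Pr(B)$. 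An actual-run event $B$ is the wrong object to correlate with a window in $U_\ast$.

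\textbf{The missing idea and how the paper uses it.} Make the decoy event \emph{independent} of $U_\ast$ rather than correlated with it. The paper defines the truncated test length $\ell_i(u)$: pull arm $i$ with weight $u$ until it is eliminated or its observation first reaches $v$. This depends only on the curve, the noise table and $u$. It then sets $Y=\sum_{i\ne\ast}\mathbf 1\{U_i\ge p\}\ell_i(U_i)$ and $C=\{Y\le H/4\}$, and lets $D$ be the event that some $i\ne\ast$ with $U_i\ge p$ has a truncated test ending at level $v$. Both $C$ and $D$ are functions of $(U_i)_{i\ne\ast}$ alone.

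On $C\cap D$ the committed arm has running maximum $\ge v$ for \emph{every} value of $U_\ast$. Let $j$ be the successful decoy of largest weight. The other non-comparator arms tested before $j$ are eliminated within their truncated lengths, using at most $H/4$ pulls in total. The comparator, whatever its weight, either reaches $v$ itself or uses at most $\ell_\ast(U_\ast)\le1+Hw\le H/8$ pulls. So $j$ is reached within the budget. On $C\cap\neg D\cap\{p\le U_\ast\le q_\beta\}$ the comparator is committed, and independence gives this event probability $\frac{q_\beta}2\Pr(C\cap\neg D)$. A linear combination with $\Pr(C)\ge\frac{11}{16}$ then finishes the proof.

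\textbf{On Step 2.} No $\gamma$-dependent constant is needed. For $w=2v/\mu\le p\le1$ one has $k\int_p^1w^{1/\alpha(u)}\,du\le2w/p$. After the substitution $u=e^{-x}$, the log-integrand $K+W-WK/x-x$, with $W=\ln(1/w)$, has derivative $\ge1$ on the uncapped range, precisely because $w\le p$. Hence each piece is at most an endpoint value $\le w/p$. The factor $\approx2^{1/\beta}$ that your expansion around $\rho=\gamma$ produces at $u=q_\beta/2$ is absorbed by $(2v/\mu)^{1/\alpha}$ as soon as $2v/\mu\le q_\beta/2$.
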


The proof is given in full in Appendix~\ref{app:ret}; it is the only place where the convention $\varepsilon\le\tfrac12$ is used. Its shape is: with $q=q_\beta$, $p=q/2$, $c=1/512$ and $v=c\mu q$, the comparator is never eliminated once $U_\ast\le q$; the expected work spent on the other arms before one of them reaches the level $v$ is controlled by an integral estimate for the exponent function $\alpha(\cdot)$; and two disjoint events --- some other arm reaches $v$, or the comparator is tested in time and is the only arm at that level --- give the two branches.

\begin{proposition}[small noise: a blocked confidence-key algorithm]\label{prop:W48}
For $\bar\varepsilon\in[0,\tfrac12]$ let $c_{\bar\varepsilon}=\frac{1+\bar\varepsilon}{1-\bar\varepsilon}-\frac{1-\bar\varepsilon}{1+\bar\varepsilon}$ and $D(k,T,\bar\varepsilon)=1+c_{\bar\varepsilon}\min\{T,32k\}$. There is an algorithm $\SRMPc$ (Appendix~\ref{app:blocked}) which reads $k$, $T$ and any valid upper bound $\bar\varepsilon\ge\varepsilon$ on the noise level --- but neither $m$, nor $\beta$, nor $\varepsilon$ itself --- and which satisfies, for all integers $k,T\ge1$, every $\beta\in(0,1]$ and every instance whose comparator satisfies $\LE(\beta)$,
\[
\E[\widehat{\ALG}_T]\ \ge\ \frac{a^2}{65536\,b\,D(k,T,\bar\varepsilon)}\,R\,Q_\beta(k,T).
\]
Consequently, for every fixed $k$,
\[
\limsup_{\varepsilon\downarrow0}\ \sup_{T\ge1}\ \sup_{\beta,I,\hat f}\ \frac{R\,Q_\beta(k,T)}{\E[\widehat{\ALG}_T]}\ \le\ 65536 ,
\]
uniformly in $T$: for a fixed number of arms a vanishing noise level restores a constant price at every horizon. This is a different limit from the fixed-$\varepsilon$, $k\to\infty$ regime of Theorem~\ref{thm:H2L}; the two regimes do not interact.
\end{proposition}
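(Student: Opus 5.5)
The plan is to build $\SRMPc$ as a modification of $\SRMP$ and to rerun the proof of Lemma~\ref{lem:W44} and Theorem~\ref{thm:W46}, carrying the factor $a^2/b$ from the commit step and paying $D(k,T,\bar\varepsilon)$ only inside the truncated-work estimate. Example~\ref{ex:N2} shows why the increment key must change: under noise the comparator's observed increment can be zero at the wrong moment.

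\emph{The key.} I would replace $\delta_i(n)$ by an \emph{optimistic confidence increment}
\[
\kappa_i(n)=\Big(\tfrac{F_i(n)}{1-\bar\varepsilon}-\tfrac{F_i(n-1)}{1+\bar\varepsilon}\Big)_+ .
\]
This is an upper estimate of $f_i(n)-f_i(n-1)$ that is valid for every table in the band, because $\varepsilon\le\bar\varepsilon$. The overestimate satisfies $\kappa_i(n)\le\delta_i^{\mathrm{true}}(n)+c_{\bar\varepsilon}f_i(n)$. I would group pulls into blocks so that the key is updated only at block ends; this blocking is where the cap $\min\{T,32k\}$ on the number of relevant steps should enter.

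The key depends only on the arm's own count, so the generalized monotone-coupling remark after Example~\ref{ex:N2} applies: delete the moves of $\ast$, and the other arms' moves form a prefix of one fixed greedy sequence. The one-variable correlation step of Lemma~\ref{lem:W44}, with $Y=0$ or $1/Z_\ast$, then goes through verbatim, without needing nonincreasing streams.

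\emph{Comparator branch.} Optimism guarantees $\kappa_\ast(n)\ge\delta^{\mathrm{true}}_\ast(n)\ge v/d$ as long as $\ast$ has not reached $v$. So on $A$ the comparator keeps key $\ge\lambda$ throughout, and the fake-growth trap is impossible.

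\emph{Work on decoys.} This is the main obstacle. I would redo Lemma~\ref{lem:G2} with truncation at level $v$:
\[
\E[L_i]\le\sum_t\min\{1,\bar\kappa_i(t)/\lambda\}\le\big(v+c_{\bar\varepsilon}v\cdot\#\{\text{relevant steps}\}\big)/\lambda .
\]
The number of relevant steps must be bounded by $\min\{T,32k\}$, and I would get this from the blocking: only steps at which a decoy is still below $v$ and could be selected should count. This gives $\E[L_i]\le Dv/\lambda$. I then set $\lambda=32\theta kDv/H$, so the Markov step still yields $\Pr(C)\ge1/2$, and the price is that $p$, hence $\alpha$, shrinks by the factor $D$.

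\emph{Conclusion.} Commit to the arm with the largest running maximum. On $B$ the commit phase earns $\ge\frac{a^2}{b}rv$, as in Theorem~\ref{thm:H1}. On $A\cap C\cap\neg B$ it earns $\ge aP_\ast(r)$, using $v=af_\ast(d)/2$. This gives
\[
\E\ge\tfrac{a^2}{2b}\min\{rv,\ (\alpha/D)P_\ast(r)\},
\]
and the case analysis over $d$ in Theorem~\ref{thm:W46}, together with the uniform-arm fallback for $T<32$, yields the stated bound with constant $8\cdot8192=65536$.

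For the limit statement, fix $k$ and let $\bar\varepsilon=\varepsilon\downarrow0$. Then $c_{\bar\varepsilon}\to0$, so $D\le1+32k\,c_{\bar\varepsilon}\to1$ uniformly in $T$. Also $a,b\to1$ and $\hat R\le bR$, which gives the $\limsup\le65536$ uniformly in $T$.
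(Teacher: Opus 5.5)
Your route is essentially the paper's. You run $\SRMP$ on macro-steps with an optimistic confidence key, inflate the truncated work of Lemma~\ref{lem:W44} to $D_Hv/\lambda$ with $D_H=1+c_{\bar\varepsilon}H$, rescale $\lambda$ by $D$, and keep the same two branches. Your key is the paper's $u_i(n)$ without its running minimum $\mu_i(n)=\min_{t\le n}u_i(t)$; the positive part is redundant because $u_i\ge0$. Dropping the minimum is harmless for three reasons:
\begin{enumerate}
\item $u_i(n)$ still dominates the true macro-increment.
\item The prefix argument of the coupling remark after Example~\ref{ex:N2} covers both the $Z_\ast$ step and the inclusion step.
\item The counting never needs monotone streams, since each extra probe of arm $i$ at its own count $n$ is charged to an index $n$ with $\kappa_i(n)/Z_i\ge\lambda$.
\end{enumerate}

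The gap is in the step you yourself call the main obstacle: $\E[L_i]\le Dv/\lambda$ is asserted, and the reason you give would not produce it. Restricting to counts at which a decoy is below $v$ and selectable gains nothing, because a constant decoy is both at every count. With unit blocks the overhead $c_{\bar\varepsilon}v$ is therefore summed over $\Theta(T)$ indices, and you get $1+c_{\bar\varepsilon}T$ instead of $D$.

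The cap comes only from the total number of macro-steps, so you must fix the block length explicitly. The paper takes $L=\max\{1,\lfloor T/(16k)\rfloor\}$ and $H=\lfloor T/L\rfloor$, and sums the overhead over all $n\le H$. This choice is constrained on both sides. The condition $H\le\min\{T,32k\}$ gives $D_H\le D$. The condition $H\ge16k$ whenever $L>1$ keeps $Q_\beta(k,H)=Q_\beta(k,T)$; blocks so long that $H<K_\beta$ would degrade the bound to $H/k<q_\beta$.

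You then have to transfer the argument to the macro-curves $\tilde g_i(n)=f_i(nL)$:
\begin{itemize}
\item check that they are concave;
\item use $HL>T/2$ to get $f_\ast(dL)/m\ge\frac12(d/H)^\beta$ from $\LE(\beta)$;
\item count $rL\ge T/4$ commit pulls, so that $P_\ast(rL)\ge R/16$;
\item give the leftover $T-HL$ pulls to the committed arm.
\end{itemize}
Only then does the case analysis of Theorem~\ref{thm:W46}, with $H$ in place of $T$, produce the constant $65536$.

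Finally, your bookkeeping of the noise factor slips. Taking $v=af_\ast(d)/2$ together with the $B$-branch bound $\frac{a^2}{b}rv$ gives $a^3/b$, not $a^2/b$. Instead take $v=\tilde g_\ast(d)/2$ and let $B$ be the event that another arm's running maximum reaches $av$, as the paper does.
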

Bounds in Proposition~\ref{prop:W48} are stated against the uncorrupted benchmark $R$; against $\hat R$ multiply the ratio by $b$.

\begin{remark}[what the counterexample does and does not show]
Example~\ref{ex:N2} refutes the raw-increment keys, the running-maximum \emph{increment} keys and the half-window slope keys fixed above; as noted there, a key based on the \emph{level} of the running maximum, and the reading that keeps the largest dyadic window, are different algorithms that the example does not refute. For some weight vectors the \emph{unblocked} confidence key also commits to the decoy there; this is not a competitive-ratio counterexample, since its probability of committing to the comparator is exactly $4T/(8T+5)\to1/2$. Blocking is what the proof of Proposition~\ref{prop:W48} needs in order to replace the factor $1+c_\varepsilon T$ by $1+O(\varepsilon k)$, that is, to obtain a bound uniform in $T$; we do not claim that blocking is necessary.
\end{remark}

\begin{corollary}[one algorithm that reads nothing]\label{cor:mixture}
Let $A$ be the algorithm that, before the run, selects uniformly at random one of the three anytime algorithms $\APC$ (Theorem~\ref{thm:anytimeAll}), the phased $\NPA$ of Theorem~\ref{thm:H2U} and the anytime $\SRMP$ of Theorem~\ref{thm:W47}, and then runs the selected one to the end. It reads none of $m,\beta,T,\varepsilon$, and
\[
\E[\widehat{\ALG}_T]\ \ge\ \frac{a^2}{b}\cdot\frac{R}{4608(\sqrt k+k/T)}\quad(\text{all }T\ge1),\qquad
\E[\widehat{\ALG}_T]\ \ge\ \frac{a^2}{3072\,b}\cdot\frac{R\,q_\beta}{\PhiK(k)}\quad(T\ge16k),
\]
and, when $\varepsilon=0$, also $\E[\ALG_T]\ge R\,Q_\beta(k,T)/(3\cdot2^{19})$ for \emph{every} $T\ge1$. Replacing the third branch by the phased $\RMP$ of Theorem~\ref{thm:TGp} improves the noiseless constant to $R\,q_\beta/4096$ but restricts that guarantee to $T\ge16k$; the two mixtures are different algorithms and one must choose.
\end{corollary}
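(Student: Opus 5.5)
The corollary is a three-way mixture, so I would start from linearity of expectation. Each branch is selected with probability $1/3$ and rewards are nonnegative. Hence for every instance, noise table and horizon, $\E[\widehat{\ALG}_T(A)]\ge\frac13\E[\widehat{\ALG}_T(A_\ell)]$ for each branch $A_\ell$. That leaves three things to do. First, check that each branch reads none of $m,\beta,T,\varepsilon$. Second, collect the guarantee each branch already has. Third, multiply its constant by $3$.

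\textbf{Reading.} $\APC$ reads only $k$ by construction. The phased $\NPA$ uses lengths $16k\cdot2^j$ and $h=8\PhiK(k)$, and by Algorithm~\ref{alg:NPA} it reads neither $m$, $\beta$ nor $\varepsilon$. The anytime $\SRMP$ of Theorem~\ref{thm:W47} reads only $k$. The selection coin is internal randomness, so the mixture reads nothing either.

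\textbf{First bound.} The table in Theorem~\ref{thm:H1} lists $\APC$ with $CG=1536(\sqrt k+k/T)$ under noise. This gives $\E[\widehat{\ALG}_T(\APC)]\ge\frac{a^2}{b}\frac{R}{1536(\sqrt k+k/T)}$ for all $T\ge1$, without reading $\varepsilon$. Dividing by $3$ gives the constant $4608$.

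\textbf{Second bound.} Theorem~\ref{thm:H2U} with unknown $T$ gives constant $1024$, that is, $\E\ge\frac{a^2}{1024b}\frac{Rq_\beta}{\PhiK(k)}$ for $T\ge16k$. Dividing by $3$ gives $3072$. This is the only step needing care. The phased $\NPA$ statement is made only in a single sentence, so I would briefly re-verify it. The last completed phase has length $H>T/4$, because $\sum_{\ell\le j+1}H_\ell=4H_j-16k>T$. Lemma~\ref{lem:HRANK} is then applied conditionally on the history, with the shifted curves and $H$ in place of $T$. The phase analogue of Lemma~\ref{lem:G4} loses a factor $(H/T)^{1+\beta}\ge1/16$, which turns $64$ into $1024$. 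I would also confirm that the noisy version of the conditioning works. The running maxima in that phase are computed from phase-local observations, and $\hat f_\ast(n_\ast+t)\ge af_\ast(t)$ because the true curves are nondecreasing. So the two branches of Lemma~\ref{lem:HRANK} go through with the same $a^2/b$ factor.

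\textbf{Noiseless bound.} When $\varepsilon=0$, Theorem~\ref{thm:W47} gives $\E[\ALG_T]\ge2^{-19}RQ_\beta(k,T)$ for every $T\ge1$. Dividing by $3$ gives $RQ_\beta/(3\cdot2^{19})$.

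\textbf{Variant.} Replacing the third branch by the phased $\RMP$ uses Theorem~\ref{thm:TGp} instead, with bound $\frac{3}{4096}Rq_\beta$ valid only for $T\ge16k$. The factor $3$ from the $1/3$ selection probability cancels the leading $3$, giving $Rq_\beta/4096$. That guarantee inherits the restriction $T\ge16k$, which is why one must choose between the two mixtures.
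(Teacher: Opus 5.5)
Your proposal is correct and follows the paper's proof: the uniform three-way choice and nonnegative rewards give a factor $1/3$ per branch, so the constants become $1536\cdot3$, $1024\cdot3$, $3\cdot2^{19}$, and $\frac{3}{4096}\cdot\frac13=\frac1{4096}$ for the $\RMP$ variant. Your re-derivation of the phased-$\NPA$ constant $1024=64\cdot16$ from the factor $(H/T)^{1+\beta}\ge1/16$ is consistent with Theorem~\ref{thm:H2U}, which the paper simply cites.
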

\begin{proof}
Each branch is selected with probability $1/3$ and all rewards are nonnegative, so each guarantee of a branch survives division by $3$: $1536\cdot3=4608$ for the first line, $1024\cdot3=3072$ for the second, and $2^{-19}\cdot\frac13$ for the third. The third line uses Theorem~\ref{thm:W47}, which is proved in the noiseless model only; under noise that branch contributes only through the nonnegativity of its rewards. Since Theorem~\ref{thm:W47} holds for every $T\ge1$ and is stated with $Q_\beta(k,T)=\min\{q_\beta,T/k\}$, the third line inherits both properties; the variant using Theorem~\ref{thm:TGp} instead would give $\frac3{4096}\cdot\frac13=\frac1{4096}$ but only for $T\ge16k$ and with $q_\beta$ in place of $Q_\beta$.
\end{proof}

\begin{table}[h]
\centering\small
\begin{tabular}{lcc}
\toprule
Unknown among $\{m,\beta\}$ & noiseless price (Section~\ref{sec:priorfree}) & price under fixed $\varepsilon\in(0,\tfrac12]$\\
\midrule
none / only $m$ / only $\beta$ & $\Theta(1)$ & $\Theta(1)$ (Thms.~\ref{thm:H1}, \ref{thm:RET})\\
both $m$ and $\beta$ & $\Theta(1)$ (Thms.~\ref{thm:TG}, \ref{thm:W46}) & $\Theta_\varepsilon(\PhiK(k))$ (Thms.~\ref{thm:H2U}, \ref{thm:H2L})\\
\bottomrule
\end{tabular}
\caption{The uniform price of adaptation $\phi_\varepsilon(k)$ of Section~\ref{sec:model}, i.e.\ $\inf$ over algorithms (or anytime policies) of the worst case over $T\ge16k$, over $\beta$, over instances of $\mathcal I_\beta$ and over admissible noise tables, of $\hat R\,q_\beta/\E[\widehat{\ALG}_T]$. The algorithm is chosen before $\beta$ and the instance. The noisy lower bound is established at the hard horizons $T=16k'$, $32k'$ of Theorem~\ref{thm:H2L} and is asymptotic in $k$; it is not claimed at every fixed $T\ge16k$. Whether $T$ or $\varepsilon$ may be read does not change the order in any cell: the upper bounds do not read them, the lower bound allows them. Each cell is a separate infimum: $\phi_\varepsilon$ is defined relative to a class $\mathfrak A$ fixed by what the algorithms may read, so the entries of different rows are values of the same functional taken over different $\mathfrak A$. In the known-scale cells the scale advice refers to the same designated optimal arm as the envelope hypothesis and the noisy benchmark; the arm's index itself is not supplied to the algorithm. If instead the advice is allowed to come from an arbitrary cumulatively optimal arm, with terminal value $m_0\le2m^\circ$ by Lemma~\ref{lem:L1}(5), running $\RET(T,m_0/2)$ for known $T$ and phases with advice $m_0/8$ for unknown $T$ gives the constants $4096$ and $65536$ respectively, so the $\Theta(1)$ entries are unchanged.}
\label{tab:price}
\end{table}

\section{Deterministic algorithms, positioning, and open problems}\label{sec:discussion}

\paragraph{Deterministic algorithms.} Probing every arm $\lfloor T/(2k)\rfloor$ times and committing to the best probe value is deterministic, reads neither $m$ nor $\beta$, and earns at least $R/(8k)$ for $T\ge2k$ (by Lemma~\ref{lem:L1}(1): the committed arm has $f_c(\tau)\ge f_\ast(\tau)\ge m\tau/T$); conversely, against any deterministic algorithm, replacing an arm that the virtual all-decoy run pulls at most $\lfloor T/k\rfloor$ times by a linear arm forces ratio $\ge k/2$. The upper bound above is proved here independently for a known horizon; the deterministic lower-bound construction follows~\cite{Pat23}. We make no novelty claim for these deterministic bounds. Some horizon restriction is unavoidable, though we do not claim that $2k$ is the sharp boundary: at $T=2k-1$, probing every arm once and committing to the best probe value already gives ratio at most $2k-1$, so the order is still $\Theta(k)$ there. For $T<k$ a deterministic algorithm leaves some arm unpulled on the all-zero trajectory; making that arm linear and the others identically zero gives $\ALG_T=0$ and an unbounded ratio. Even at $T=k$ the ratio is $\Theta(k^2)$: either some arm is never visited, and the ratio is again unbounded, or every arm is pulled exactly once, in which case setting the arm visited last to $f_\ast(t)=t$ and the others to zero leaves the run unchanged until the final step, so $\ALG_T=1$ while $R=k(k+1)/2$. Conversely, deterministically visiting every arm once gives $\ALG_k=\sum_if_i(1)\ge f_\ast(1)$, and discrete concavity with $f_\ast(0)=0$ gives $f_\ast(t)\le tf_\ast(1)$, whence $R\le\frac{k(k+1)}2f_\ast(1)\le\frac{k(k+1)}2\ALG_k$; the deterministic minimax ratio at $T=k$ is therefore exactly $k(k+1)/2$. We therefore state the $\Theta(k)$ characterization on the range $T\ge2k$, which is sufficient for every use made of it here, and make no claim about the exact horizon at which the order changes.

\paragraph{Why relative comparisons remove the scale dependence.} The random round-robin of~\cite{BR25} and the $\mathrm{PTRR}$ family of~\cite{BGRS26} decide whether to keep an arm by an \emph{absolute} test $f_i(t)\ge m(t/T)^\alpha$; the test needs $m$, and the route taken in~\cite{BR25}---estimating $m$ from $T/(2k)$-pull explorations, which localizes it within a factor $4k$, then guessing a dyadic level---costs the $\log k$. We describe the technical difference only. Probe-and-commit replaces the absolute test by a relative one ($\argmax$ over probe values), which is invariant to rescaling, and buys the missing information with a $\sqrt k$ subsample; the $1/\sqrt k$ probability of sampling the optimal arm is precisely what the $\Omega(\sqrt k)$ lower bound already charges. The random-marginal and sparse variants make the ``keep or leave'' decision relative as well (observed increments weighted by fixed random priorities), and Lemma~\ref{lem:G2} shows that the work wasted on decoys is paid by their total value, with no union bound over scales. The same relative comparisons are what make Theorem~\ref{thm:H1} robust to noise \emph{without} knowing $\varepsilon$, in contrast to \cite[App.~C]{BR25}, where the threshold is deflated by $(1-\varepsilon)$ and ``the algorithm therefore must have knowledge of the value of $\varepsilon$''.

\paragraph{Positioning.} \cite{BGRS26} obtain instance-adaptive guarantees through a parameter learned from offline instances and through a data-driven regret/competitive-ratio hybrid; in the introduction of their Appendix~D they write about the hybrid ``We note that as a result of this approach, we do not get per-instance worst-case guarantees. This is an interesting direction for future work,'' and in Appendix~D.2, in the same context, ``that algorithm may not provide the worst-case fallback guarantee on a fixed instance.'' These remarks concern their hybrid algorithm rather than the unknown-scale question. Theorems~\ref{thm:TG}, \ref{thm:W46} and \ref{thm:RET} provide per-instance worst-case guarantees that adapt to $\beta$ without reading it, addressing one instance of the direction they describe, for the noiseless model and for the noisy model with known scale; Theorem~\ref{thm:H2L} shows that in the noisy model the per-instance adaptation to an unknown exponent $\beta$ is impossible without a growing price once the scale is also unknown. The stochastic rising-bandit literature (e.g.\ Metelli et al.~\cite{Met22}) studies instance-dependent regret with sub-Gaussian noise; our objective and adversary are different, and we do not compare constants.

\paragraph{Open problems.}
\begin{enumerate}[leftmargin=*]
\item The optimal dependence on $\varepsilon$ in Theorem~\ref{thm:H2L}: the lower bound scales as $\varepsilon\PhiK(k)$ while the corresponding upper bound on $\phi_\varepsilon(k)$ from Theorem~\ref{thm:H2U} is $(1-\varepsilon)^{-2}(1+\varepsilon)^2\cdot64\,\PhiK(k)$ (the extra factor $b$ converts the guarantee, which is stated against $R$, into the $\hat R$-based price of Section~\ref{sec:model}); we do not know whether the linear factor $\varepsilon$ can be replaced by a constant (the hidden-prefix length in our construction is proportional to $\varepsilon$).
\item Short horizons under noise: the prior-free guarantee of Theorem~\ref{thm:H2U} is for $T\ge16k$; for $32\le T<16k$ the subset bounds involve $\PhiK(s_T)$ with $s_T=\min\{k,\lfloor T/16\rfloor\}$, and this paper does not establish a matching lower bound for that range; smaller horizons are handled separately by the uniform-arm or truncated-$\PC$ branches.
\item Constants: the gap between $4\sqrt3\sqrt k$ and $\sqrt k/3$, and between $3/256$ and the $\beta$-dependent lower bounds.
\item Stochastic noise on the rewards. Here the multiplicative model must be replaced with care: with additive sub-Gaussian noise of a fixed variance, no competitive guarantee of the present form can survive unconditionally. Concretely, let the observations carry independent additive Gaussian noise $N(0,\sigma^2)$ with a fixed $\sigma>0$, compare against the uncorrupted benchmark $R=\delta T$, and place the single arm of constant mean $\delta>0$ at a uniformly random position $U\in[k]$, all other arms having mean zero. (The position must be randomised: for a fixed designated position the algorithm that always pulls that arm has ratio $1$. Nor does ``sub-Gaussian'' alone suffice, since zero noise is sub-Gaussian and some discrete noises make an arbitrarily small shift exactly identifiable.) Write $P_0$ for the law of the history under all-zero means and $P_u^\delta$ for the law when arm $u$ has mean $\delta$, and let $N_u$ be the number of pulls of arm $u$. The chain rule for Kullback--Leibler divergence gives $\mathrm{KL}(P_0\Vert P_u^\delta)=\frac{\delta^2}{2\sigma^2}\E_0N_u$, so by Pinsker's inequality $\E_u^\delta N_u\le\E_0N_u+\frac{T\delta}{2\sigma}\sqrt{\E_0N_u}$. Averaging over $u$ and using $\sum_u\E_0N_u=T$ with Cauchy--Schwarz, $\frac1k\sum_u\E_u^\delta N_u\le\frac Tk+\frac{T\delta}{2\sigma}\sqrt{T/k}$. Hence for every algorithm some position forces ratio at least $k-o(1)$ as $\delta\downarrow0$, and letting $k$ grow rules out any uniform $o(k)$ multiplicative guarantee without further signal assumptions. What is \emph{not} ruled out is a guarantee of order $k$: pulling one uniformly random arm throughout always achieves that. So the casualty is the scale-free $O(k^{\gamma})$-type guarantee of this paper, not competitiveness as such. The question we would like answered is therefore: under which signal-to-noise, minimum-scale or horizon conditions --- or against which benchmark corrected for an unavoidable estimation loss --- does a prior-free competitive guarantee exist for stochastic noise? Probe-and-commit with confidence-adjusted probe values is the natural candidate on the positive side; the increment-based prior-free algorithms would need new ideas in any case.
\end{enumerate}

\appendix

\section{Proof of Theorem~\ref{thm:RET} (known scale, unknown \texorpdfstring{$\beta$}{beta} and \texorpdfstring{$\varepsilon$}{eps})}\label{app:ret}

Throughout, $k\ge2$, $\varepsilon\in[0,\tfrac12]$ (so $a=1-\varepsilon\ge\tfrac12$ and $b=1+\varepsilon\le\tfrac32$), $H\ge16k$, $\beta\in(0,1]$, and the comparator satisfies $g_\ast(t)\ge\mu(t/H)^\beta$ for $1\le t\le H$. Write $K=\ln k$,
\[
q=q_\beta=k^{-\gamma},\qquad p=\frac q2,\qquad c=\frac1{512},\qquad v=c\mu q,\qquad w=\frac{2v}{\mu}=2cq=\frac q{256}.
\]
Since $k\ge2$ and $\beta>0$ we have $q<1$, hence $0<w\le p\le1$. Recall $\alpha(u)=1$ for $u\le k^{-1/2}$ and $\alpha(u)=\frac{\ln(1/u)}{K-\ln(1/u)}$ for $u>k^{-1/2}$; in the substitution $x=\ln(1/u)$ the second branch is $\alpha=x/(K-x)$, which increases from $0$ to $1$ as $x$ goes from $0$ to $K/2$, so $\alpha$ is continuous at $u=k^{-1/2}$, takes values in $[0,1]$, and is nonincreasing in $u$. The value $\alpha(u)=0$ occurs only at $u=1$; there the elimination threshold is the constant $\mu/2>v$, so the truncated test of Lemma~\ref{lem:ret-len} ends after a single pull, and we use throughout the convention $w^{1/\alpha}:=0$ for $\alpha=0$ and $0<w<1$, which is consistent with that lemma. The set $\{u=1\}$ is null and does not affect the integral estimates.

\begin{lemma}\label{lem:ret-alpha}
If $u\le q$ then $\alpha(u)\ge\beta$.
\end{lemma}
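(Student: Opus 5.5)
Since $\alpha$ is nonincreasing in $u$, it suffices to check the claim at the endpoint $u=q$; every $u\le q$ then inherits $\alpha(u)\ge\alpha(q)\ge\beta$. This monotonicity is the one recorded just before the lemma: under $x=\ln(1/u)$ the second branch is $x/(K-x)$, which increases in $x$ and reaches $1$ at the junction $u=k^{-1/2}$, and the first branch is the constant $1$. So the whole proof reduces to evaluating $\alpha$ at $u=q=k^{-\gamma}$.

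The first step is to locate $q$ relative to the junction $k^{-1/2}$. We have $\gamma=\beta/(1+\beta)\le\tfrac12$ for $\beta\in(0,1]$, so $q=k^{-\gamma}\ge k^{-1/2}$, with equality only when $\beta=1$. I will treat two cases. If $q=k^{-1/2}$, then $\beta=1$ and $\alpha(q)=1=\beta$. If $q>k^{-1/2}$, we are on the second branch, where $x=\ln(1/q)=\gamma K$. This gives
\[
\alpha(q)=\frac{\gamma K}{K-\gamma K}=\frac{\gamma}{1-\gamma}=\frac{\beta/(1+\beta)}{1/(1+\beta)}=\beta .
\]
Here we use $K=\ln k>0$, which holds because $k\ge2$, so dividing by $K$ is legitimate.

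Combining the two cases with monotonicity yields $\alpha(u)\ge\alpha(q)=\beta$ for every $u\le q$. This includes $u\le k^{-1/2}$, where $\alpha(u)=1\ge\beta$ directly. I expect no real obstacle. The only points needing care are that the monotonicity of $\alpha$ holds across the branch point, which follows from continuity at $u=k^{-1/2}$ as already noted in the text, and that the degenerate value $\alpha=0$ occurs only at $u=1>q$ and so never arises here.
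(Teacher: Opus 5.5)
Your proof is correct and is essentially the paper's argument. The paper also combines the monotonicity of $x/(K-x)$ in $x=\ln(1/u)$ with the evaluation $\gamma K/(K-\gamma K)=\gamma/(1-\gamma)=\beta$ at $x=\ln(1/q)=\gamma K$. The only difference is that it splits directly on whether $x\ge K/2$, instead of first reducing to the endpoint $u=q$.
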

\begin{proof}
$x=\ln(1/u)\ge\ln(1/q)=\gamma K$. If $x\ge K/2$ then $\alpha(u)=1\ge\beta$. Otherwise $\alpha(u)=x/(K-x)$ is increasing in $x$, so $\alpha(u)\ge\gamma K/(K-\gamma K)=\gamma/(1-\gamma)=\beta$.
\end{proof}

\begin{lemma}[the comparator is never eliminated]\label{lem:ret-safe}
If $U_\ast\le q$ then the elimination test never fires for the comparator: $\hat g_\ast(n)\ge\frac\mu2(n/H)^{\alpha(U_\ast)}$ for every $1\le n\le H$.
\end{lemma}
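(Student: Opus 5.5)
The claim follows by chaining three inequalities: the noise band, the envelope hypothesis, and a comparison of exponents. Fix $1\le n\le H$ and set $\alpha=\alpha(U_\ast)$.

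\emph{Noise.} The noise model gives $\hat g_\ast(n)\ge a\,g_\ast(n)$. The convention $\varepsilon\le\tfrac12$ gives $a\ge\tfrac12$, so $\hat g_\ast(n)\ge\tfrac12 g_\ast(n)$. This is the one place where the restriction on $\varepsilon$ enters, as the appendix announced.

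\emph{Envelope.} The standing hypothesis is $g_\ast(n)\ge\mu(n/H)^\beta$. Combined with the previous step,
\[
\hat g_\ast(n)\ \ge\ \frac\mu2\Big(\frac nH\Big)^{\beta}.
\]

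\emph{Exponent comparison.} Since $U_\ast\le q$, Lemma~\ref{lem:ret-alpha} gives $\alpha\ge\beta$. Because $0<n/H\le1$, the map $e\mapsto(n/H)^e$ is nonincreasing in $e$, so $(n/H)^\beta\ge(n/H)^\alpha$. Therefore
\[
\hat g_\ast(n)\ \ge\ \frac\mu2\Big(\frac nH\Big)^{\alpha},
\]
which says the strict test $\hat g_\ast(n)<\frac\mu2(n/H)^{\alpha(U_\ast)}$ never fires.

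It remains to check that no degenerate case escapes. Lemma~\ref{lem:ret-alpha} already gives $\alpha\ge\beta>0$, so the value $\alpha=0$, which occurs only at $u=1$, cannot arise here; the convention $w^{1/\alpha}:=0$ is irrelevant. When $U_\ast\le k^{-1/2}$ we have $\alpha=1\ge\beta$ directly, which Lemma~\ref{lem:ret-alpha} also covers.

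The test compares $\hat g_\ast(n)$ against the threshold at the comparator's own local pull count $n$. Pulls from earlier phases do not matter, because the hypothesis is stated for the local curve $g_\ast$ on $1\le n\le H$.

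There is no real obstacle; the only point needing care is the direction of monotonicity of $(n/H)^e$ in the exponent, which holds because $n\le H$.
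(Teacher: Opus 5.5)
Your proof is correct and is the paper's argument: you chain $\hat g_\ast(n)\ge a\,g_\ast(n)\ge\tfrac12\mu(n/H)^\beta\ge\tfrac\mu2(n/H)^{\alpha(U_\ast)}$, using $a\ge\tfrac12$, the envelope hypothesis, $n/H\le1$ and Lemma~\ref{lem:ret-alpha}, and you note that the test fires only on a strict inequality. One side remark is inaccurate but harmless: this lemma is not the only place where $\varepsilon\le\tfrac12$ enters, since Lemma~\ref{lem:ret-fast} also uses $a\ge\tfrac12$; the paper only says that the proof of Theorem~\ref{thm:RET} as a whole is the only place the convention is used.
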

\begin{proof}
$\hat g_\ast(n)\ge a\,g_\ast(n)\ge\frac12\mu(n/H)^\beta\ge\frac\mu2(n/H)^{\alpha(U_\ast)}$, using $a\ge\frac12$, the envelope hypothesis, $n/H\le1$ and Lemma~\ref{lem:ret-alpha}. The test eliminates only on a strict inequality.
\end{proof}

For an arm $i$ and a weight $u\in(0,1]$ let $\ell_i(u)$ be the \emph{truncated test length}: the number of pulls that arm $i$ would receive if it were tested with weight $u$, until it is eliminated or its observed value first reaches $v$, whichever happens first (and $H$ if neither happens). This depends only on the curve of arm $i$, on the (fixed) noise table and on $u$; it does not depend on the run.

\begin{lemma}[truncated test length]\label{lem:ret-len}
$\ell_i(u)\le1+H\,w^{1/\alpha(u)}\le1+Hw$ for every $i$ and $u$.
\end{lemma}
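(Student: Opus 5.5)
The plan is to extract a contradiction from surviving a pull without reaching the level $v$. Fix $i$ and $u$, write $\alpha=\alpha(u)$, and let $n$ be any pull index that is \emph{not} the last pull of the truncated test. Since the test did not stop at pull $n$, two things hold at that pull. First, the elimination test did not fire: $\hat g_i(n)\ge\frac\mu2(n/H)^{\alpha}$. Second, the observed value has not yet reached $v$: $\hat g_i(n)<v$. This holds whether ``observed value'' is read as the current observation or as the running maximum, since the running maximum dominates the current value.

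Chaining the two inequalities gives $\frac\mu2(n/H)^\alpha<v$, i.e. $(n/H)^\alpha<2v/\mu=w$. For $\alpha>0$ this is equivalent to $n<Hw^{1/\alpha}$. So every non-final pull index lies in $\{1,\dots,\lceil Hw^{1/\alpha}\rceil-1\}$, a set of at most $Hw^{1/\alpha}$ indices. Adding the single final pull, at which elimination occurs, $v$ is reached, or the horizon $H$ is hit, gives $\ell_i(u)\le1+Hw^{1/\alpha(u)}$. If the test runs to $H$ without either event, the same count applies to the first $H-1$ pulls, so the cap at $H$ causes no difficulty.

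The degenerate case $\alpha(u)=0$ (only $u=1$) needs a separate check. There the threshold is the constant $\mu/2$, and $\mu/2>v$ because $v=c\mu q$ with $c=1/512$ and $q<1$. Surviving pull $n$ would then force $\hat g_i(n)\ge\mu/2>v$, so the level $v$ is reached and the test stops. Hence no pull is non-final, $\ell_i(1)=1$, and this matches the convention $w^{1/\alpha}:=0$.

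The second inequality follows from $0<w\le p<1$ (shown just before Lemma~\ref{lem:ret-alpha}) and $\alpha(u)\in[0,1]$. These give $1/\alpha\ge1$, hence $w^{1/\alpha}\le w$, and the convention yields $0\le w$ when $\alpha=0$. I expect no real obstacle. The only care needed is the off-by-one bookkeeping of which pull is final, together with noting that the argument uses only the per-pull observations and never any monotonicity or concavity of the noisy curve $\hat g_i$, which the noise model does not provide.
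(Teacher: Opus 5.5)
Your proof is correct and is essentially the paper's argument in contrapositive form. The paper notes that at pull $n_0=\lceil Hw^{1/\alpha(u)}\rceil$ the threshold $\frac\mu2(n_0/H)^{\alpha(u)}$ is already at least $v$, so the test must end there by one of the two events; you show that every pull surviving both tests has index below $Hw^{1/\alpha(u)}$, which gives the same bound $\ell_i(u)\le n_0$, and your treatment of $\alpha(u)=0$ (via $\mu/2>v$) and of the second inequality (via $w\le1$, $1/\alpha(u)\ge1$) also matches the paper.
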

\begin{proof}
For $\alpha(u)>0$ put $n_0=\lceil Hw^{1/\alpha(u)}\rceil$. At the $n_0$-th pull the threshold $\frac\mu2(n_0/H)^{\alpha(u)}$ is at least $\frac{\mu w}2=v$: indeed $n_0\ge Hw^{1/\alpha(u)}$ gives $(n_0/H)^{\alpha(u)}\ge w$. Hence at that pull either the observation is at least the threshold, and therefore at least $v$, so the truncated test has ended; or it is strictly below the threshold and the arm is eliminated, which also ends it. Therefore $\ell_i(u)\le n_0\le1+Hw^{1/\alpha(u)}\le1+Hw$, the last step because $w\le1$ and $1/\alpha(u)\ge1$. For $\alpha(u)=0$ (only $u=1$) the test ends after one pull. If a pull both reaches $v$ and triggers elimination, we record it as reaching $v$, consistently with the algorithm's use of the running maximum.
\end{proof}

\begin{lemma}[integral estimate]\label{lem:ret-int}
For $k\ge2$ and $0<w\le p\le1$,\quad $k\int_p^1w^{1/\alpha(u)}\,du\le 2w/p$.
\end{lemma}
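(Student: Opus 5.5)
The plan is to split the range of integration at $u=k^{-1/2}$, where the definition of $\alpha$ changes branch, and to show that each of the two pieces contributes at most $w/p$. Throughout I will use $q=k^{-\gamma}\ge k^{-1/2}$ (as $\gamma\le\frac12$), so $p=q/2\ge\frac12k^{-1/2}$.

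\emph{Flat piece $u\in[p,k^{-1/2}]$.} Here $\alpha(u)=1$ and the integrand is the constant $w$. This piece is nonempty only when $p<k^{-1/2}$, and then $1/p>\sqrt k$. Its contribution is therefore at most $k\,w\,k^{-1/2}=w\sqrt k\le w/p$.

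\emph{Curved piece $u\in(\max\{p,k^{-1/2}\},1)$.} I will substitute $x=\ln(1/u)$, so $du=-e^{-x}dx$, and write $\lambda=\ln(1/w)$, $x_p=\ln(1/p)$ and $K=\ln k$. The range becomes $0<x<x_\ast:=\min\{x_p,K/2\}$. Since $1/\alpha=(K-x)/x$, the contribution equals
\[
\int_0^{x_\ast}\exp\Big(K-x-\lambda\frac{K-x}{x}\Big)\,dx .
\]
The target is $w/p=e^{x_p-\lambda}$, so I compare exponents. The difference is
\[
\psi(x,\lambda)=K-x-x_p+2\lambda-\lambda K/x .
\]
Its $\lambda$-derivative $2-K/x$ is nonpositive because $x\le K/2$. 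Since $\lambda\ge x_p$ (from $w\le p$), $\psi$ is maximized at $\lambda=x_p$. There it equals
\[
(K-x)-x_p\frac{K-x}{x}=-(K-x)\frac{x_p-x}{x}.
\]
Using $K-x\ge K/2$ and $x\le x_p$, this is at most $-\frac K2\cdot\frac{x_p-x}{x_p}$. Hence the curved contribution is at most
\[
e^{x_p-\lambda}\int_0^{x_\ast}e^{-\frac{K}{2x_p}(x_p-x)}\,dx\ \le\ e^{x_p-\lambda}\,\frac{2x_p}{K}\,e^{-\frac{K}{2x_p}(x_p-x_\ast)} .
\]

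\emph{Main obstacle.} The remaining step is to show that the last factor is at most $1$. This is the only delicate point, because $x_p$ can slightly exceed $K/2$ (as $p\ge\frac12k^{-1/2}$ gives $x_p\le K/2+\ln 2$).
\begin{itemize}
\item If $x_p\le K/2$, then $x_\ast=x_p$ and the factor is $2x_p/K\le1$.
\item If $x_p>K/2$, then $x_\ast=K/2$. The factor becomes $\frac{2x_p}{K}e^{-K/2+K^2/(4x_p)}$. Its logarithm is nonincreasing in $x_p$ on this range and vanishes at $x_p=K/2$, so the factor is again $\le1$.
\end{itemize}

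Adding the two pieces gives $k\int_p^1w^{1/\alpha(u)}du\le 2w/p$. The null point $u=1$, where $\alpha=0$, is covered by the convention $w^{1/\alpha}=0$ and does not affect the integral.
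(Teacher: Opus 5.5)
Your treatment of the flat piece and of the case $x_p\le K/2$ is correct, but there is a genuine gap in the case $x_p>K/2$: the claim that resolves your ``main obstacle'' is false. Put $\phi(x_p)=\ln(2x_p/K)-K/2+K^2/(4x_p)$. Then $\phi(K/2)=0$, but
\[
\phi'(x_p)=\frac1{x_p}-\frac{K^2}{4x_p^2},
\]
which is \emph{positive} whenever $x_p>K^2/4$. For $K<2$, i.e.\ $k\le7$, this holds on the whole range $x_p>K/2$, so your factor exceeds $1$ there.

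Concretely, take $k=2$ and $\beta=1$. Then $p=2^{-3/2}$, $x_p=\frac32\ln2$ and $K=\ln2$, and your factor equals $3\cdot2^{-1/3}\approx2.38$. Your curved-piece bound is therefore about $2.38\,w/p$. The flat piece here is exactly $\frac14\,w/p$, so the total exceeds $2w/p$ however the flat piece is estimated. For $k=2,3$ you always have $x_p=\ln2+\gamma K>K/2$ (since $\ln2>K/2$), so your argument covers none of these $k$. The monotonicity step is valid on the required range $(K/2,K/2+\ln2]$ only when $K/2+\ln2\le K^2/4$, i.e.\ roughly $k\ge19$.

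Separately, the bound $p\ge\frac12k^{-1/2}$ is imported from the context $p=q/2$. The lemma is stated for all $0<w\le p\le1$. For $p\ll k^{-1/2}$ your factor behaves like $\frac{2x_p}{K}e^{-K/2}\to\infty$, so the method cannot reach the stated generality even for large $k$.

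The loss comes from measuring the curved piece against $w/p$ when $p<k^{-1/2}$. In that regime the natural target is $\sqrt k\,w$, which is itself at most $w/p$. Within your framework this repair works:
\begin{itemize}
\item Compare with $e^{K/2-\lambda}=\sqrt k\,w$. The exponent difference $K/2-x+2\lambda-\lambda K/x$ is still nonincreasing in $\lambda$ for $x\le K/2$.
\item Using only $\lambda\ge x_p>K/2$, the difference is at most $-\frac{(K-2x)(K-x)}{2x}\le-\frac{K-2x}{2}$ on $0<x\le K/2$.
\item Hence the curved piece is at most $\sqrt k\,w\,(1-e^{-K/2})\le w/p$, for every $p<k^{-1/2}$.
\end{itemize}

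The paper does this more directly. It writes the integrand as $e^{E(x)}$ with $E(x)=K+\lambda-\lambda K/x-x$. From $\lambda\ge x_p>K/2\ge x$ it gets $E'(x)=\lambda K/x^2-1\ge1$, so $\int_0^{K/2}e^{E}\le e^{E(K/2)}=\sqrt k\,w$. The same $E'\ge1$ device, using $x^2\le x_p^2\le x_p\lambda\le \frac K2\lambda$, also handles the case $x_p\le K/2$. There it gives the bound $e^{E(x_p)}=k\,p\,w^{1/\alpha(p)}\le w/p$ with no restriction on $p$.
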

\begin{proof}
Write $L=\ln(1/p)$ and $W=\ln(1/w)$, so $W\ge L\ge0$. Substituting $u=e^{-x}$, the integral becomes $\int_0^{L}k\,w^{1/\alpha(e^{-x})}e^{-x}\,dx$. On the uncapped range $x<K/2$ we have $1/\alpha=K/x-1$, so the integrand equals $e^{E(x)}$ with
\[
E(x)=K+W-\frac{WK}{x}-x,\qquad E'(x)=\frac{WK}{x^2}-1 .
\]

\emph{Case 1: $L\le K/2$.} The whole range is uncapped. For $0<x\le L$, $x^2\le L^2\le LW\le\frac K2W$, so $E'(x)\ge1$ and
\[
\int_0^Le^{E(x)}dx\le\int_0^LE'(x)e^{E(x)}dx=e^{E(L)}-\lim_{x\downarrow0}e^{E(x)}=e^{E(L)}=k\,p\,w^{1/\alpha(p)} .
\]
It remains to check $k\,p\,w^{1/\alpha(p)}\le w/p$, i.e.\ $kp^2\le w^{-(1/\alpha(p)-1)}$. Here $1/\alpha(p)-1=(K-2L)/L\ge0$, so the claim reads $e^{K-2L}\le e^{W(K-2L)/L}$, which holds because $W/L\ge1$ (for $L=0$ the integration range is empty).

\emph{Case 2: $L>K/2$, i.e.\ $p<k^{-1/2}$.} Split at $u=k^{-1/2}$. On the uncapped part $x\in(0,K/2]$ we have $x^2\le K^2/4\le WK/2$ because $W\ge L>K/2$; hence $E'\ge1$ again and the integral is at most $e^{E(K/2)}=k\cdot k^{-1/2}\cdot w=\sqrt k\,w\le w/p$, using $p<k^{-1/2}$. On the capped part $u\in[p,k^{-1/2}]$ the integrand is $kw$, contributing at most $k\,w\,k^{-1/2}=\sqrt k\,w\le w/p$. Adding the two parts gives $2w/p$.
\end{proof}

\begin{lemma}[work of the other arms]\label{lem:ret-work}
Let $Y=\sum_{i\ne\ast}\mathbf 1\{U_i\ge p\}\,\ell_i(U_i)$ and $C=\{Y\le H/4\}$, and let $D$ be the event that for some $i\ne\ast$ with $U_i\ge p$ the truncated test of arm $i$ ends by reaching $v$. Then $\E[Y]\le5H/64$ and $\Pr(C)\ge11/16$; moreover $C$ and $D$ are determined by $(U_i)_{i\ne\ast}$ and are therefore independent of $U_\ast$.
\end{lemma}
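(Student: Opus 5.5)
The plan is to bound $\E[Y]$ arm by arm, using Lemmas~\ref{lem:ret-len} and~\ref{lem:ret-int}, and then to get $\Pr(C)$ from Markov's inequality. The independence claims should be immediate from the definitions.

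\emph{Expectation.} For each fixed $i\ne\ast$ the truncated test length $\ell_i(u)$ is a deterministic function of $u$; it depends only on the curve of arm $i$ and the fixed noise table, not on the run. Since $U_i\sim\mathrm{Unif}(0,1)$, Lemma~\ref{lem:ret-len} gives
\[
\E\big[\mathbf 1\{U_i\ge p\}\,\ell_i(U_i)\big]=\int_p^1\ell_i(u)\,du\ \le\ (1-p)+H\int_p^1w^{1/\alpha(u)}\,du .
\]
The convention $w^{1/\alpha}:=0$ at $u=1$ concerns only a null set. Summing over the at most $k-1$ arms $i\ne\ast$ and applying Lemma~\ref{lem:ret-int} (its hypotheses $k\ge2$ and $0<w\le p\le1$ were checked above) yields
\[
\E[Y]\ \le\ k+H\cdot k\int_p^1w^{1/\alpha(u)}\,du\ \le\ k+\frac{2wH}{p}.
\]
With $w=q/256$ and $p=q/2$ we have $2w/p=1/64$. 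Also $k\le H/16$ because $H\ge16k$. Together these give $\E[Y]\le H/16+H/64=5H/64$.

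\emph{Probability and independence.} Markov's inequality gives $\Pr(Y>H/4)\le(5H/64)/(H/4)=5/16$, so $\Pr(C)\ge11/16$. For independence, $Y$ is a function of $(U_i)_{i\ne\ast}$ alone, because each $\ell_i$ is fixed in advance. Likewise $D$ depends only on whether the truncated test of each $i\ne\ast$ with $U_i\ge p$ ends by reaching $v$, which is a property of the curve, the noise table and $U_i$. Since the $U_i$ are independent, $C$ and $D$ are independent of $U_\ast$.

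The only delicate point is already absorbed by Lemma~\ref{lem:ret-int}; what remains is checking that the additive ``$+1$'' terms cost at most $k\le H/16$. I expect no further obstacle.
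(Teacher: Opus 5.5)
Your proposal is correct and follows the paper's proof essentially line for line. You integrate the bound of Lemma~\ref{lem:ret-len} over $[p,1]$, apply Lemma~\ref{lem:ret-int} with $2w/p=1/64$, absorb the additive $k$ via $H\ge16k$, finish with Markov's inequality, and get independence because each $\ell_i$ and each reach-$v$ outcome is a fixed function of $U_i$ alone.
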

\begin{proof}
By Lemma~\ref{lem:ret-len} and Lemma~\ref{lem:ret-int},
\[
\E[Y]=\sum_{i\ne\ast}\int_p^1\ell_i(u)\,du\le k+H\cdot k\int_p^1w^{1/\alpha(u)}du\le k+H\frac{2w}p=k+\frac H{64},
\]
since $2w/p=2\cdot\frac q{256}\cdot\frac2q=\frac1{64}$. With $k\le H/16$ this is at most $5H/64$, and Markov's inequality gives $\Pr(Y>H/4)\le\frac{5H/64}{H/4}=\frac5{16}$. Both $Y$ and $D$ are functions of $(U_i)_{i\ne\ast}$ and of the fixed curves and noise table.
\end{proof}

\begin{lemma}[the comparator reaches $v$ quickly]\label{lem:ret-fast}
Let $n_1=\lceil H(v/(a\mu))^{1/\beta}\rceil$. Then $\hat g_\ast(n_1)\ge v$ and $n_1\le H/8$.
\end{lemma}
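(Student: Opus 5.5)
The plan is to verify the two claims in turn, using only the noise band, the envelope hypothesis $g_\ast(t)\ge\mu(t/H)^\beta$, and the parameter choices $v=c\mu q$, $c=1/512$, $a\ge\tfrac12$.

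\emph{First claim, $\hat g_\ast(n_1)\ge v$.} Since $n_1\ge H(v/(a\mu))^{1/\beta}$, we have $(n_1/H)^\beta\ge v/(a\mu)$. Hence
\[
\hat g_\ast(n_1)\ \ge\ a\,g_\ast(n_1)\ \ge\ a\mu(n_1/H)^\beta\ \ge\ v .
\]
For the envelope hypothesis to apply we need $1\le n_1\le H$. The lower bound $n_1\ge1$ is automatic because $v>0$, and the upper bound will follow from the second claim.

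\emph{Second claim, $n_1\le H/8$.} I would first write $v/(a\mu)=cq/a\le 2cq=q/256$, using $a\ge\tfrac12$. The key step is then
\[
(q/256)^{1/\beta}\ \le\ q^{1/\beta}\,256^{-1/\beta}\ \le\ q\cdot 256^{-1}.
\]
This holds because $q<1$ and $1/\beta\ge1$, so $q^{1/\beta}\le q$, and also $256^{-1/\beta}\le 256^{-1}$. Therefore
\[
n_1\ \le\ 1+Hq/256\ \le\ 1+H/256 .
\]
Finally, $H\ge16k\ge32$ gives $1\le H/32$, so $n_1\le H/32+H/256\le H/8$.

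The only real care point is monotonicity in the exponent. Raising a number in $(0,1)$ to the power $1/\beta\ge1$ can only decrease it, and $q/256<1$ is what licenses this. The remaining steps are routine ceiling bookkeeping.
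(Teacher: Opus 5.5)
Your proof is correct and follows the paper's argument essentially line by line: the envelope hypothesis and the noise band give $\hat g_\ast(n_1)\ge a\,g_\ast(n_1)\ge v$, and $v/(a\mu)\le 2cq=q/256<1$ together with $1/\beta\ge1$ gives $n_1\le 1+H/256\le H/8$ for $H\ge32$. You also note explicitly that $1\le n_1\le H$ is needed for the envelope hypothesis to apply at $n_1$, a point the paper leaves implicit.
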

\begin{proof}
$g_\ast(n_1)\ge\mu(n_1/H)^\beta\ge v/a$, so $\hat g_\ast(n_1)\ge a g_\ast(n_1)\ge v$. Also $v/(a\mu)=cq/a\le2cq=w\le1$ and $1/\beta\ge1$, so $(v/(a\mu))^{1/\beta}\le w\le\frac1{256}$ and $n_1\le1+\frac H{256}\le\frac H8$ for $H\ge32$.
\end{proof}

\begin{proof}[Proof of Theorem~\ref{thm:RET}]
All arms are tested in decreasing order of their weights. We bound the expected reward on two disjoint events.

\emph{Branch 1: $C\cap D$.} Let $j$ be the arm with $U_j\ge p$, $j\ne\ast$, whose truncated test ends by reaching $v$ and whose weight is largest among such arms; $D$ says that $j$ exists. Consider the arms tested before $j$ (those with $U_i>U_j$). Every such arm $i\ne\ast$ has $U_i\ge p$ and, by maximality of $U_j$, its truncated test ends by elimination, so it consumes exactly $\ell_i(U_i)$ pulls; together they consume at most $Y\le H/4$ pulls. The comparator, if tested before $j$, consumes at most $\ell_\ast(U_\ast)\le1+Hw\le H/8$ pulls unless it reaches $v$; if it does reach $v$, the committed arm has running maximum at least $v$ and we are done. Note that $Y$ already accounts for $j$'s own truncated test, since $U_j\ge p$ and $j\ne\ast$. Hence, if the comparator has not reached $v$, the total work spent up to and including the test of $j$ is at most $Y+\ell_\ast(U_\ast)\le\frac H4+\frac H8=\frac{3H}8\le\lfloor H/2\rfloor$, the last step because $H\ge32$; the probe budget is therefore not exhausted before $j$ reaches $v$. In either case the arm committed to has running maximum at least $v$, hence true value at least $v/b$ at the time of commitment; as the true curves are nondecreasing, every one of the $r=H-\lfloor H/2\rfloor\ge H/2$ committed pulls earns at least $a\cdot v/b$. The branch therefore contributes at least $\frac ab\frac H2v$.

\emph{Branch 2: $C\cap\neg D\cap\{p\le U_\ast\le q\}$.} All arms tested before the comparator have $U_i>U_\ast\ge p$ and, by $\neg D$, are eliminated, consuming at most $Y\le H/4$ pulls in total; since $\lfloor H/2\rfloor-H/4\ge H/8$ for $H\ge32$, the comparator is tested with at least $H/8$ of the probe budget left. By Lemma~\ref{lem:ret-safe} it is never eliminated, so it is pulled until the probe budget is exhausted, and by Lemma~\ref{lem:ret-fast} it reaches observed value $v$ within $n_1\le H/8$ pulls. Arms with $U_i<U_\ast$ are never tested, and arms with $U_i\ge p$ did not reach $v$; hence the comparator is the unique arm with running maximum at least $v$ and is committed. Its $r\ge H/2$ committed pulls earn at least
\[
a\sum_{j=1}^{r}g_\ast(n_\ast+j)\ \ge\ a\sum_{j=1}^{r}g_\ast(j)\ \ge\ \frac{a\mu}{H^\beta}\cdot\frac{r^{1+\beta}}{1+\beta}\ \ge\ \frac{a\mu H}{2^{1+\beta}(1+\beta)}\ \ge\ \frac{a\mu H}8 .
\]

\emph{Combining.} Put $x=\Pr(C\cap D)$ and $y=\Pr(C\cap\neg D)$; by Lemma~\ref{lem:ret-work} the event $\{p\le U_\ast\le q\}$, of probability $q-p=q/2$, is independent of $C$ and $D$. The two branches are disjoint, so
\[
\E[\text{reward}]\ \ge\ \frac ab\frac H2v\,x+\frac{a\mu H}8\cdot\frac q2\,y
=\mu Hq\Big[\frac{ac}{2b}x+\frac a{16}y\Big]
\ \ge\ \mu Hq\cdot\frac{ac}{2b}\,(x+y),
\]
because $\frac{ac}{2b}\le\frac a{16}$. Finally $x+y=\Pr(C)\ge\frac{11}{16}\ge\frac12$, and $\frac{c}{2}\cdot\frac12=\frac1{2048}$, giving $\E[\text{reward}]\ge\frac a{2048b}\mu Hq$.

\emph{Known scale, known horizon.} Take $H=T$, $\mu=m$: the hypothesis is exactly $\LE(\beta)$ for the comparator, and $R\le mT$ gives $\E[\widehat{\ALG}_T]\ge\frac a{2048b}Rq_\beta$.

\emph{Known scale, unknown horizon.} Run $\RET(H_j,\mu)$ in phases $H_j=16k\cdot2^j$ with $\mu=m/4$, never resetting true pull counts. Let $H$ be the last completed phase; as in Theorem~\ref{thm:TGp}, $\sum_{\ell\le j+1}H_\ell=4H-16k>T$, so $H>T/4$. Conditionally on the history, the local curves $g_i(t)=f_i(a_i+t)$ are nonnegative, nondecreasing and discretely concave (the first increment dominates the second because $f_i(a_i+2)-f_i(a_i+1)\le f_i(1)\le f_i(a_i+1)$), and
\[
g_\ast(t)\ \ge\ f_\ast(t)\ \ge\ m(t/T)^\beta\ \ge\ \frac m4\Big(\frac tH\Big)^\beta
\]
because $(H/T)^\beta\ge H/T>1/4$. The theorem applied to this phase gives $\E[\text{phase}]\ge\frac a{2048b}\frac m4Hq_\beta\ge\frac a{2048b}\cdot\frac{mT}{16}q_\beta\ge\frac a{32768\,b}Rq_\beta$, all other phases contributing nonnegative rewards.
\end{proof}

\section{A blocked confidence-key algorithm for small noise}\label{app:blocked}

We now prove Proposition~\ref{prop:W48}. Let $\bar\varepsilon\in[\varepsilon,\tfrac12]$ be the noise bound supplied to the algorithm, $\bar a=1-\bar\varepsilon$, $\bar b=1+\bar\varepsilon$ and $c_{\bar\varepsilon}=\frac{\bar b}{\bar a}-\frac{\bar a}{\bar b}$.

\paragraph{The algorithm $\SRMPc(k,T,\bar\varepsilon)$.}
If $k=1$, always pull the unique arm. Otherwise set
\[
L=\max\{1,\lfloor T/(16k)\rfloor\},\qquad H=\lfloor T/L\rfloor,
\]
and run $\SRMP(k,H)$ on \emph{macro-steps}, where one macro-step consists of $L$ consecutive pulls of the chosen arm, with the following two changes: (i) the observation attached to the $n$-th macro-step of arm $i$ is the observation of the last pull of that block, written $\hat g_i(n)$; (ii) the probe key is the \emph{confidence key}
\[
\mu_i(n)=\min_{t\le n}u_i(t),\qquad u_i(t)=\frac{F_i(t)}{\bar a}-\frac{F_i(t-1)}{\bar b},\qquad F_i(t)=\max_{t'\le t}\hat g_i(t'),\ F_i(0)=0,
\]
used in the pair $(\mu_i(n_i)/Z_i,1/Z_i)$; the commitment goes to the arm with the largest $F_i(n_i)$. Any pulls left over after $H$ macro-steps are given to the committed arm. The algorithm reads $k$, $T$ and $\bar\varepsilon$ only.

\paragraph{Macro curves.} Let $\tilde g_i(n)=f_i(nL)$ for $0\le n\le H$. These are nonnegative, nondecreasing, discretely concave (an increment of $\tilde g_i$ is a sum of $L$ consecutive increments of $f_i$, and those sums are nonincreasing) and $\tilde g_i(0)=0$; the observation of the $n$-th macro-step lies in $[a\tilde g_i(n),b\tilde g_i(n)]\subseteq[\bar a\tilde g_i(n),\bar b\tilde g_i(n)]$.

\begin{lemma}[confidence key]\label{lem:conf}
For every arm and every $n$: (i) $\mu_i(n)\ge0$; (ii) $\mu_i(n)$ is nonincreasing in $n$; (iii) $\mu_i(n)\ge\tilde\delta_i(n):=\tilde g_i(n)-\tilde g_i(n-1)$; (iv) if $\tilde g_i(n)<v$ then $\mu_i(n)\le\bar\delta_i(n)+c_{\bar\varepsilon}v$, where $\bar\delta_i$ is the increment of $\min\{\tilde g_i,v\}$.
\end{lemma}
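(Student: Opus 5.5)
The plan is to derive all four properties from a two-sided sandwich for the running maximum, namely
\[
\bar a\,\tilde g_i(t)\ \le\ F_i(t)\ \le\ \bar b\,\tilde g_i(t)\qquad(0\le t\le H),
\]
combined with discrete concavity of the macro curves. The sandwich holds for $t=0$ trivially, because $F_i(0)=0=\tilde g_i(0)$. For $t\ge1$, the lower bound uses $F_i(t)\ge\hat g_i(t)\ge\bar a\,\tilde g_i(t)$. The upper bound uses $\hat g_i(t')\le\bar b\,\tilde g_i(t')\le\bar b\,\tilde g_i(t)$ for every $t'\le t$, which holds because $\tilde g_i$ is nondecreasing. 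Here I use that each macro observation lies in $[\bar a\tilde g_i,\bar b\tilde g_i]$, as recorded in the paragraph on macro curves. No monotonicity or concavity of the noisy values themselves is needed, in keeping with the convention of Section~\ref{sec:noise}.

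First I would show the per-step lower bound $u_i(t)\ge\tilde\delta_i(t)$. Dividing the sandwich gives $F_i(t)/\bar a\ge\tilde g_i(t)$ and $F_i(t-1)/\bar b\le\tilde g_i(t-1)$, so
\[
u_i(t)\ \ge\ \tilde g_i(t)-\tilde g_i(t-1)=\tilde\delta_i(t)\ \ge\ 0 .
\]
Next, the macro increments are nonincreasing, so $\min_{t\le n}\tilde\delta_i(t)=\tilde\delta_i(n)$. Taking the minimum over $t\le n$ in the previous display therefore gives (iii): $\mu_i(n)\ge\tilde\delta_i(n)$. Property (i) follows from (iii) together with $\tilde\delta_i(n)\ge0$. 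Property (ii) is immediate from the definition, since $\mu_i(n)$ is a running minimum.

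For (iv), assume $\tilde g_i(n)<v$. Then $\tilde g_i(n-1)\le\tilde g_i(n)<v$ as well, so truncation at $v$ is inactive at both $n-1$ and $n$, and $\bar\delta_i(n)=\tilde\delta_i(n)$. I would bound $\mu_i(n)\le u_i(n)$ and apply the opposite halves of the sandwich, $F_i(n)\le\bar b\tilde g_i(n)$ and $F_i(n-1)\ge\bar a\tilde g_i(n-1)$:
\[
u_i(n)\ \le\ \tfrac{\bar b}{\bar a}\tilde g_i(n)-\tfrac{\bar a}{\bar b}\tilde g_i(n-1)
=\tilde\delta_i(n)+\Big(\tfrac{\bar b}{\bar a}-1\Big)\tilde g_i(n)+\Big(1-\tfrac{\bar a}{\bar b}\Big)\tilde g_i(n-1).
\]
Both coefficients are nonnegative, and both values $\tilde g_i(n)$ and $\tilde g_i(n-1)$ are below $v$. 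So the last two terms are at most $\big(\tfrac{\bar b}{\bar a}-\tfrac{\bar a}{\bar b}\big)v=c_{\bar\varepsilon}v$, which is (iv). The case $n=1$ is covered because $F_i(0)=\tilde g_i(0)=0$.

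No step is a real obstacle. The only point needing care is the upper half of the sandwich: it must be derived from monotonicity of the true macro curve $\tilde g_i$, not of the observations, since the running maximum may be attained at an earlier block. I would state this explicitly when writing the argument.
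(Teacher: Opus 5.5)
Your proof is correct and follows the paper's argument essentially step for step: the sandwich $\bar a\tilde g_i\le F_i\le\bar b\tilde g_i$ together with concavity of the macro curves gives (iii), and the opposite halves of the sandwich give (iv). The only cosmetic differences are these. You obtain (i) from (iii), whereas the paper gets it directly from $u_i(t)\ge F_i(t)(1/\bar a-1/\bar b)\ge0$. In (iv) you bound the two correction terms separately by $v$, whereas the paper first merges them into $c_{\bar\varepsilon}\tilde g_i(n)$.
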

\begin{proof}
(i) $F_i$ is nondecreasing and $\bar a\le\bar b$, so $u_i(t)\ge F_i(t)(\frac1{\bar a}-\frac1{\bar b})\ge0$. (ii) is the definition of a running minimum. (iii) $F_i(t)\ge\bar a\tilde g_i(t)$ and $F_i(t-1)\le\bar b\tilde g_i(t-1)$ give $u_i(t)\ge\tilde\delta_i(t)$; the true increments are nonincreasing, so $\mu_i(n)=\min_{t\le n}u_i(t)\ge\min_{t\le n}\tilde\delta_i(t)=\tilde\delta_i(n)$. (iv) $u_i(n)\le\frac{\bar b}{\bar a}\tilde g_i(n)-\frac{\bar a}{\bar b}\tilde g_i(n-1)\le\tilde\delta_i(n)+c_{\bar\varepsilon}\tilde g_i(n)\le\bar\delta_i(n)+c_{\bar\varepsilon}v$, using $\tilde g_i(n)<v$ (whence $\bar\delta_i(n)=\tilde\delta_i(n)$), and $\mu_i(n)\le u_i(n)$.
\end{proof}

\begin{lemma}[inflated work]\label{lem:conf-work}
Let $D_H=1+c_{\bar\varepsilon}H$, $v>0$, $\lambda>0$ and $L_i^{D}=\#\{n\le H:\ (\bar\delta_i(n)+c_{\bar\varepsilon}v)/Z_i\ge\lambda\}$. Then $\E[L_i^D]\le D_Hv/\lambda$.
\end{lemma}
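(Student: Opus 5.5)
The plan is to follow the proof of Lemma~\ref{lem:G2} verbatim, with the inflated item $\bar\delta_i(n)+c_{\bar\varepsilon}v$ in place of $\bar\delta_i(n)$. The first step is to write $L_i^D=\sum_{n=1}^H\mathbf 1\{Z_i\le(\bar\delta_i(n)+c_{\bar\varepsilon}v)/\lambda\}$ and take expectations term by term. Since $Z_i\sim\mathrm{Unif}(0,1)$ and the numerator $x_n:=\bar\delta_i(n)+c_{\bar\varepsilon}v$ is nonnegative, each term has expectation $\Pr(Z_i\le x_n/\lambda)=\min\{1,x_n/\lambda\}\le x_n/\lambda$. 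This uses only that the increments $\bar\delta_i(n)$ of the truncated macro curve $\min\{\tilde g_i,v\}$ are nonnegative, which holds because $\tilde g_i$ is nondecreasing.

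Next I sum over $n\le H$ and split the sum into two parts. The truncated increments telescope:
\[
\sum_{n\le H}\bar\delta_i(n)=\min\{\tilde g_i(H),v\}-\min\{\tilde g_i(0),v\}\le v,
\]
using $\tilde g_i(0)=0$. The inflation terms contribute exactly $H\,c_{\bar\varepsilon}v$. Combining the two parts gives
\[
\E[L_i^D]\le\frac{v+c_{\bar\varepsilon}Hv}{\lambda}=\frac{D_Hv}{\lambda}.
\]
If the grid-valued weights of Section~\ref{sec:priorfree} are used instead, the same bound holds, because $\Pr(Z\le x)\le\min\{1,x\}$ there too.

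There is no real obstacle in this proof. The only point to watch is that the additive inflation $c_{\bar\varepsilon}v$ does not telescope, so it is paid once per macro-step. That is exactly why $D_H$ grows linearly in $H$. It is also why the blocked construction caps $H=\lfloor T/L\rfloor$ at roughly $32k$, which yields the $\min\{T,32k\}$ appearing in $D(k,T,\bar\varepsilon)$.
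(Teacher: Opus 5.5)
Your proposal is correct and follows the paper's proof: linearity of expectation gives $\sum_{n\le H}\min\{1,(\bar\delta_i(n)+c_{\bar\varepsilon}v)/\lambda\}$, the truncated increments telescope to at most $v$, and the non-telescoping inflation terms contribute $Hc_{\bar\varepsilon}v$. Your extra remarks, on nonnegativity of the numerators, on the grid-valued weights, and on why $D_H$ grows linearly in $H$, are accurate and consistent with the paper.
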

\begin{proof}
$\E[L_i^D]=\sum_{n\le H}\min\{1,(\bar\delta_i(n)+c_{\bar\varepsilon}v)/\lambda\}\le\frac1\lambda\big(\sum_n\bar\delta_i(n)+Hc_{\bar\varepsilon}v\big)\le\frac{v+Hc_{\bar\varepsilon}v}{\lambda}$.
\end{proof}

\begin{lemma}[blocked hitting lemma]\label{lem:conf-hit}
Let $H\ge32$, let $\ast$ be a designated arm with $\tilde g_\ast(d)\ge2v$ for some $v>0$ and integer $1\le d\le\lfloor H/4\rfloor$, and let $r=H-\lfloor H/2\rfloor$. Then
\[
\E[\widehat{\ALG}]\ \ge\ \tfrac12\min\Big\{\frac{a^2}b\,rLv,\ a\,\alpha_D\,P_\ast(rL)\Big\},\qquad
\alpha_D=\min\Big\{\theta,\ \frac H{32\,D_H\,kd}\Big\},
\]
where $\theta=\min\{1,H/(128k)\}$ is the inclusion probability of $\SRMP$ and $P_\ast(n)=\sum_{t\le n}f_\ast(t)$.
\end{lemma}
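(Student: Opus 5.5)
The plan is to follow the proof of Lemma~\ref{lem:W44} on the macro-curves $\tilde g_i$, with two changes. Lemma~\ref{lem:conf} replaces the exact increments, and Lemma~\ref{lem:conf-work} replaces Lemma~\ref{lem:G2}. Set $\lambda=32\theta kD_Hv/H$, let $L_i^D$ be as in Lemma~\ref{lem:conf-work}, and let
\[
C=\Big\{\sum_{i\ne\ast}X_i\le H/32,\ \sum_{i\ne\ast}X_iL_i^D\le H/8\Big\}.
\]
Since $X_i$ and $Z_i$ are independent, $\E\sum_{i\ne\ast}X_iL^D_i\le\theta k D_Hv/\lambda=H/32$. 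Two Markov inequalities then give $\Pr(C)\ge1/2$, and $C$ is independent of $(X_\ast,Z_\ast)$. Put $p=\min\{1,H/(32\theta D_Hkd)\}$ and $A=\{X_\ast=1,Z_\ast\le p\}$, so that $\Pr(A)=\alpha_D$.

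\textbf{Hitting step.} Here $B$ is the event that some other arm's running maximum reaches the level $\bar b v$ (equivalently, a chosen level in terms of observations), and $v$ is read in true-value units. Suppose we are on $A\cap C\cap\neg B$ and $\ast$ has not reached true macro-value $v$. Then $n_\ast<d$, and by Lemma~\ref{lem:conf}(ii),(iii) together with concavity, $\mu_\ast(n)\ge\tilde\delta_\ast(n)\ge v/d$ throughout. So the key of $\ast$ is at least $v/(dp)\ge\lambda$ (using $D_H\ge1$). Every extra probe of another arm $i$ therefore used a key $\mu_i(n)/Z_i\ge\lambda$. While that arm's true value is below $v$, Lemma~\ref{lem:conf}(iv) gives $\mu_i(n)\le\bar\delta_i(n)+c_{\bar\varepsilon}v$, so each such probe is counted in $L_i^D$. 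The total probe count is then at most $H/16+d+H/8<\lfloor H/2\rfloor$, a contradiction.

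\textbf{Mismatch between true and observed levels.} This is the point I expect to be delicate. I would define $B$ as ``some other arm has true macro-value $\ge v$ by the end of the prefix''. On $B$ that arm's running maximum is at least $\bar a v\ge a v$. The committed arm $c$ has $F_c\ge av$, hence true value $\ge av/b$. Each of the at least $rL$ remaining pulls then earns $\ge a\cdot av/b$, which gives the branch $\frac{a^2}{b}rLv$. On $A\cap C\cap\neg B$, every other arm's true value is below $v$, so its running maximum is below $bv$. Meanwhile $\ast$ must have observed value beating all of these. To secure this I would run the hitting argument with $\ast$ reaching true value $2v$ (available since $\tilde g_\ast(d)\ge2v$), or check $a\cdot2v\ge bv$, which holds for $\varepsilon\le1/3$. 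If that inequality is insufficient for $\varepsilon$ up to $1/2$, I would include the residual case in $B$, with arms whose observation exceeds that of $\ast$ counting as good decoys. This may cost only constants and seems to be what the stated form permits. Once $\ast$ is committed, its commit pulls earn at least $a\sum_{t\le rL}f_\ast(t)=aP_\ast(rL)$, because the pulls occur at indices beyond $n_\ast L$ and $f_\ast$ is nondecreasing.

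\textbf{Correlation step.} I would repeat the one-dimensional coupling with $Y=0$ if $X_\ast=0$ and $Y=1/Z_\ast$ otherwise. The keys $\mu_i$ are nonnegative and depend only on the arm's own count, so the remark after Example~\ref{ex:N2} gives monotone coupling without nonincreasing streams: increasing $Y$ never increases any other arm's count. The inclusion and fallback cases are handled exactly as in Lemma~\ref{lem:W44}. Hence $\mathbf 1_{\neg B}$ and $\mathbf 1_A$ are both nondecreasing in $Y$, and
\[
\Pr(A\cap C\cap\neg B)\ge\alpha_D\big(1/2-\Pr(B)\big)_+ .
\]
The two branches then combine linearly to give $\frac12\min\{\cdot,\cdot\}$. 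The main obstacle is the threshold bookkeeping between true and noisy levels in the previous paragraph. The rest transfers verbatim.
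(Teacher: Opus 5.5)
\textbf{There is a genuine gap, at exactly the step you flagged as delicate.} Your $\lambda$, $C$, $p$, $A$, the counting and the correlation step all match the paper. The problem is your final choice of $B$, namely ``some other arm has true macro-value $\ge v$''. On $A\cap C\cap\neg B$ this only tells you that every other arm has true value $<v$, hence running maximum $<bv$. The hitting argument only gives the comparator true value $v$, hence running maximum $\ge av$. Since $av<bv$, $\ast$ need not have the largest $F_i$ and need not be committed.

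None of your repairs closes this at the stated constants:
\begin{enumerate}
\item Reaching $2v$ does not follow from the hypothesis. The hitting argument works because, while $\tilde g_\ast(n)<v$, concavity gives $\tilde\delta_\ast(n)\ge(\tilde g_\ast(d)-\tilde g_\ast(n))/(d-n)\ge v/d$. If you only know $\tilde g_\ast(n)<2v$, this lower bound collapses to $0$, so the key of $\ast$ need not stay above $\lambda$.
\item The check $2a\ge b$ depends on the first repair, and it fails anyway for $\varepsilon\in(1/3,1/2]$.
\item Lowering the true-value level of $B$ to $av/b$ does work. But it weakens the first branch to $\frac{a^3}{b^2}rLv$, and the lemma asserts the explicit $\frac{a^2}{b}rLv$, so ``only constants'' is not available here.
\item Your first definition, running maximum $\ge\bar b v$, is also wrong. $F_i<\bar bv$ does not imply $\tilde g_i<v$, so Lemma~\ref{lem:conf}(iv) would not apply in the counting.
\end{enumerate}
There is also a small slip: an arm with true value $\ge v$ has $F\ge av$ directly, and ``$\bar a v\ge av$'' is backwards, since $\bar a\le a$.

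\textbf{The missing idea is to place the threshold on the observed side.} Let $B$ be the event that some arm other than $\ast$ ends the probe prefix with running maximum at least $av$. Then four things hold, at no cost in constants:
\begin{enumerate}
\item On $\neg B$, each other arm has $a\tilde g_i\le F_i<av$, so its true macro-value is below $v$. Lemma~\ref{lem:conf}(iv) therefore applies in the counting.
\item Once $\ast$ reaches true value $v$, its running maximum is at least $av$. It is then the unique arm at that level and is committed.
\item On $B$, the committed arm has $F_c\ge av$, hence true value at least $av/b$. Each of its $rL$ committed pulls earns at least $a\cdot av/b$, which is exactly the $\frac{a^2}{b}rLv$ branch.
\item $\mathbf 1_{\neg B}$ is still nondecreasing in $Y$. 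Other arms' counts do not increase, and running maxima are monotone in the count.
\end{enumerate}
This is what your ``residual case counts as a good decoy'' idea becomes once made precise, and it is the paper's definition.
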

\begin{proof}
This is the proof of Lemma~\ref{lem:W44} with four substitutions, which we verify one by one.

(1) \emph{Merge structure.} By Lemma~\ref{lem:conf}(i)--(ii) the keys $\mu_i(n)$ are nonnegative and nonincreasing in the arm's own macro-count, and (the noise table being fixed in advance) they do not depend on the algorithm's randomness; hence Lemma~\ref{lem:G1} applies verbatim to the streams $(\mu_i(n)/Z_i,1/Z_i,-i,-n)$, and so does the monotone-coupling step of Lemma~\ref{lem:W44}.

(2) \emph{Counting.} Take $\lambda=32\theta kD_Hv/H$ and let $B$ be the event that some arm other than $\ast$ ends the probe prefix with running maximum at least $av$; on $\neg B$, every observed non-comparator arm has true macro-value below $v$, so Lemma~\ref{lem:conf}(iv) applies to each such observed prefix. With $C=\{\sum_{i\ne\ast}X_i\le H/32,\ \sum_{i\ne\ast}X_iL_i^{D}\le H/8\}$ we get, by Lemma~\ref{lem:conf-work}, $\E\sum_{i\ne\ast}X_iL^D_i\le\theta kD_Hv/\lambda=H/32$, so $\Pr(C)\ge1/2$ exactly as before. If the comparator has not reached true macro-value $v$, then $n_\ast<d$ and Lemma~\ref{lem:conf}(iii) with concavity gives $\mu_\ast(n)\ge\tilde\delta_\ast(n)\ge v/d$, so on $A=\{X_\ast=1,Z_\ast\le p\}$ with $p=\min\{1,H/(32\theta kD_Hd)\}$ its key is at least $\lambda$ throughout. Under this contradiction hypothesis and on $A\cap C\cap\neg B$, every extra probe of a non-comparator arm uses a key at least $\lambda$, and hence is charged to a distinct index counted by $L_i^{D}$; therefore the same count $|S|+d+\sum_{i\ne\ast}X_iL^D_i\le\frac H{16}+\frac H4+\frac H8<\lfloor H/2\rfloor$ contradicts the exhaustion of the probe budget. Hence on $A\cap C\cap\neg B$ the comparator reaches true value $v$, its running maximum is at least $av$, and it is committed. Note $\Pr(A)=\theta p=\alpha_D$.

(3) \emph{Rewards.} On $B$ the committed arm has running maximum at least $av$, hence true value at least $av/b$ at commitment, so each of its $rL$ committed pulls earns at least $a\cdot av/b$: the branch gives $\frac{a^2}brLv$. On $A\cap C\cap\neg B$ the committed arm is $\ast$ and its $rL$ committed pulls earn at least $a\sum_{j\le rL}f_\ast(j)=aP_\ast(rL)$.

(4) \emph{Combination.} Identical to Lemma~\ref{lem:W44}: the two events are disjoint, $\Pr(A\cap C\cap\neg B)\ge\alpha_D(\tfrac12-\Pr(B))_+$, and the resulting expression is at least $\tfrac12\min\{\frac{a^2}brLv,\ a\alpha_DP_\ast(rL)\}$.
\end{proof}

\begin{proof}[Proof of Proposition~\ref{prop:W48}]
The case $R=0$ is trivial. For $k=1$ the algorithm earns at least $aR$. Let $k\ge2$, set $x=HL/T$, and write $D=D(k,T,\bar\varepsilon)$.

If $L=1$, then $H=T<32k$ and $x=1$. If $L>1$, then
\[
\frac{T}{32k}\le L\le\frac{T}{16k},\qquad 16k\le H\le32k,\qquad HL>T-L\ge\Big(1-\frac1{16k}\Big)T>\frac T2 .
\]
Consequently, in both cases,
\[
x\in(1/2,1],\qquad H\le\min\{T,32k\},\qquad D_H\le D .
\]
Also $Q_\beta(k,H)=Q_\beta(k,T)$: for $L=1$ this follows from $H=T$, and for $L>1$ both quantities equal $q_\beta$.

If $H<32$, then $L>1$ is impossible because $k\ge2$. Thus $L=1$ and $T=H\le31$, with no restriction on $k$. The uniform-arm fallback earns at least
\[
\frac{aR}k\ \ge\ \frac{a\,R\,Q_\beta(k,T)}{31}\ \ge\ \frac{a^2R\,Q_\beta(k,T)}{65536\,bD},
\]
using $Q_\beta(k,T)\le T/k\le31/k$. Henceforth assume $H\ge32$.

Apply Lemma~\ref{lem:conf-hit} with $v=f_\ast(dL)/2=\tilde g_\ast(d)/2$ for an integer $1\le d\le\lfloor H/4\rfloor$ to be chosen. Using $r\ge H/2$, $rL\ge HL/2\ge T/4$, $R\le mT$ and Lemma~\ref{lem:L1}(2),
\[
\frac{a^2}brLv\ \ge\ \frac{a^2}b\cdot\frac T4\cdot\frac{f_\ast(dL)}2\ \ge\ \frac{a^2}b\cdot\frac R8\cdot\frac{f_\ast(dL)}m,
\qquad
a\alpha_DP_\ast(rL)\ \ge\ a\alpha_D\Big(\frac{rL}T\Big)^2R\ \ge\ \frac{a\alpha_DR}{16},
\]
and $\alpha_D\ge\frac1{128}\min\{1,\frac H{D kd}\}$ with $D=D(k,T,\bar\varepsilon)\ge D_H$. Since $a\ge a^2/b$ and $\frac18\ge\frac1{2048}$,
\[
\E[\widehat{\ALG}_T]\ \ge\ \frac{a^2}b\cdot\frac R{4096}\cdot\min\Big\{\frac{f_\ast(dL)}m,\ 1,\ \frac H{Dkd}\Big\}.
\]
By $\LE(\beta)$ and $dL/T=(d/H)x\ge(d/H)/2$ we have $f_\ast(dL)/m\ge(dL/T)^\beta\ge\frac12(d/H)^\beta$ (using $\beta\le1$), and $\frac H{Dkd}\ge\frac1D\cdot\frac H{kd}$. Choosing the same $d$ as in the proof of Theorem~\ref{thm:W46} (with $H$ in place of $T$) gives $\min\{(d/H)^\beta,\frac H{kd}\}\ge Q_\beta(k,H)/8$, hence
\[
\E[\widehat{\ALG}_T]\ \ge\ \frac{a^2}b\cdot\frac R{4096}\cdot\frac1{2D}\cdot\frac{Q_\beta(k,H)}8\ =\ \frac{a^2}{65536\,b\,D}\,R\,Q_\beta(k,T).
\]
For the limit statement, fix $k$ and take $\bar\varepsilon=\varepsilon$: $D(k,T,\varepsilon)\le1+\frac{4\varepsilon}{1-\varepsilon^2}\cdot32k\to1$ and $a^2/b\to1$ as $\varepsilon\downarrow0$, uniformly in $T$ and in the instance.
\end{proof}

\bibliographystyle{plain}

\end{document}